\newcommand\blfootnote[1]{%
  \begingroup
  \renewcommand\thefootnote{}\footnote{#1}%
  \addtocounter{footnote}{-1}%
  \endgroup
}
\definecolor{green}{rgb}{0.0, 0.5, 0.0}
\crefname{lemma}{lemma}{lemmata}
\crefname{claim}{claim}{claims}
\crefname{theorem}{theorem}{theorems}
\crefname{proposition}{proposition}{propositions}
\crefname{corollary}{corollary}{corollaries}
\crefname{claim}{claim}{claims}
\crefname{remark}{remark}{remarks}
\crefname{definition}{definition}{definitions}
\crefname{fact}{fact}{facts}
\crefname{question}{question}{questions}
\crefname{condition}{condition}{conditions}
\crefname{algorithm}{algorithm}{algorithms}
\crefname{assumption}{assumption}{assumptions}
\crefname{notation}{notation}{notation}
\crefname{cond}{Condition}{Conditions}
   \par\noindent{\bfseries\upshape Proof Sketch\ }%
\newtheorem{theorem}{Theorem}[section]
\newtheorem{lemma}[theorem]{Lemma}
\newtheorem{claim}[theorem]{Claim}
\newtheorem{definition}[theorem]{Definition}
\newtheorem{fact}[theorem]{Fact}
\newtheorem{question}[theorem]{Question}
\theoremstyle{definition}
\newtheorem{remark}[theorem]{Remark}
\newcommand{\eps}{\epsilon}
\newcommand{\Ind}{\mathds{1}}
\newcommand{\1}{\Ind}
\renewcommand{\Pr}{\operatorname*{\mathbb{P}}}
\newcommand{\Var}{\operatorname*{\mathrm{Var}}}
\newcommand{\E}{\operatorname*{\mathbb{E}}}
\newcommand{\poly}{\operatorname*{\mathrm{poly}}}
\newcommand{\polylog}{\operatorname*{\mathrm{polylog}}}
\renewcommand{\vec}[1]{\boldsymbol{\mathbf{#1}}}
\newcommand{\argmax}{\operatorname*{\mathrm{arg\,max}}}
\renewcommand{\d}{\mathrm{d}}
\newcommand{\trace}{\operatorname{tr}}
\def\R{\mathbb R}
\def\N{\mathbb N}
\def\Z{\mathbb Z}
\def\argmax{\mathrm{argmax}}
\newcommand{\cC}{\mathcal{C}}
\newcommand{\cD}{\mathcal{D}}
\newcommand{\cN}{\mathcal{N}}
\newcommand{\cS}{\mathcal{S}}
\newcommand{\cV}{\mathcal{V}}
\newcommand{\norm}[1]{\lVert#1\rVert}
\newcommand{\hide}[1]{}
\newcommand{\op}{\textnormal{op}}
\newcommand{\fr}{\textnormal{F}}
\def\d{\mathrm{d}}
\newcommand{\tr}{\mathrm{tr}}
\let\vec\mathbf
\def\colorful{1}
\title{Robust Sparse Estimation for Gaussians with Optimal Error\\ under Huber Contamination}
\author{
Ilias Diakonikolas\thanks{Supported by NSF Medium Award CCF-2107079, NSF Award CCF-1652862 (CAREER), a Sloan Research
Fellowship, and a DARPA Learning with Less Labels (LwLL) grant.
}\\
University of Wisconsin-Madison\\
{\tt ilias@cs.wisc.edu}\\
\and
Daniel M. Kane\thanks{Supported by NSF Medium Award CCF-2107547, NSF Award CCF-1553288 (CAREER), and a Sloan Research
Fellowship.}\\
University of California, San Diego\\
{\tt dakane@cs.ucsd.edu}
\and
Sushrut Karmalkar\thanks{Supported by NSF under Grant 2127309 to the Computing Research Association for the CIFellows 2021 Project.}\\
University of Wisconsin-Madison\\
{\tt skarmalkar@wisc.edu}\\
\and
Ankit Pensia\\
IBM Research\\
{\tt ankit@ibm.com}\\
\and
Thanasis Pittas\thanks{Supported by NSF Medium Award CCF-2107079 and NSF Award DMS-2023239 (TRIPODS).}\\
University of Wisconsin-Madison\\
{\tt pittas@wisc.edu}\\
}
\begin{document}

\maketitle

\begin{abstract}%
We study Gaussian sparse estimation tasks in Huber's contamination model with a focus 
on mean estimation, PCA, and linear regression. For each of these tasks, we give 
the first sample and computationally efficient robust estimators with optimal error 
guarantees, within constant factors. All prior efficient algorithms for these 
tasks incur 
quantitatively
suboptimal error. Concretely, for Gaussian robust 
$k$-sparse mean estimation on $\R^d$ with corruption rate $\eps>0$, our 
algorithm 
has sample complexity $(k^2/\eps^2)\polylog(d/\eps)$, runs in sample polynomial time, 
and approximates the target mean within $\ell_2$-error $O(\eps)$. Previous efficient 
algorithms inherently incur error $\Omega(\eps \sqrt{\log(1/\eps)})$. At the technical level, we develop a novel multidimensional filtering method 
in the sparse regime that may find other applications.
\end{abstract}
\blfootnote{Authors are in alphabetical order.}

\setcounter{page}{0}

\thispagestyle{empty}

\newpage

\section{Introduction}
\looseness=-1
Robust statistics focuses on developing estimators resilient to a constant fraction of outliers in the sample data \cite{HubRon09,diakonikolas2023algorithmic}.
A data set may have been contaminated by outliers originating from a variety of sources: measurement error, equipment malfunction, data mismanagement, etc.
The pivotal question of developing robust estimators in statistics was first posed in the 1960s by Tukey and Huber \cite{Huber64,Tuk60}.
The standard model for handling outliers, originally formalized in \cite{Huber64}, is defined below.
\begin{definition}[Huber Contamination Model]\label{def:huber}
    Given $0 < \eps < 1/2$ and a distribution family $\cD$,
    the algorithm specifies $n \in \N$ and observes $n$ i.i.d.\ samples
    from a distribution $P = (1-\eps) G + \eps B$, where $G \in \cD$ 
    and $B$ is arbitrary. 
    We say that $G$ is the distribution of inliers, 
    $B$ the distribution of outliers, and
    $P$ is the $\eps$-corrupted version of $G$. {A set of samples generated in this fashion is called an $\eps$-corrupted set of samples from $P$.}
\end{definition}
Estimating the parameters of a Gaussian distribution --- the prototypical family of distributions in statistics --- in the Huber contamination model is a foundational problem in robust statistics~\cite{HubRon09}.
Huber in his foundational work \cite{Huber64} settled the question of {robust} {\em univariate} Gaussian mean estimation, i.e., $\mathcal{D} = \{\cN(\mu,1): \mu \in \R\}$.
Since then, a large body of work has developed sample-efficient robust estimators for various tasks~\cite{HubRon09}, that 
were unfortunately computationally inefficient for high-dimensional tasks.  
Only in the past decade have the first computationally efficient robust estimators been introduced; see \cite{diakonikolas2023algorithmic} for a book on the topic. 

Despite this remarkable progress, perhaps surprisingly, our understanding of this fundamental problem of Gaussian estimation under Huber contamination remains incomplete for \emph{structured} settings.

In many high-dimensional settings, additional structural information about the data can dramatically decrease the 
sample complexity of estimation. 
Our focus here is on the structure of {\em sparsity}. 
In the context of mean estimation, this corresponds to
the regime that at most $k$ out of the $d$ coordinates 
of the target mean $\mu$ are non-zero; our focus is on the practically relevant regime of  $k\ll d$.
Sparsity has been crucial in improving statistical performance in a myriad of applications~\cite{Hastie15}.
We begin with the problem of robust sparse mean estimation. 

\begin{definition}[Robust Sparse Mean Estimation]
\label{def:robust-sparse-mean-estimation}
Given %
$\eps\in(0,1/2)$ , $k \in \Z_+$ 
and $\eps$-corrupted samples from $\cN(\mu, \vec I)$ on $\R^d$
under Huber contamination  for an unknown $k$-sparse\footnote{We say $x$ is $k$-sparse if it has at most $k$ non-zero coordinates.} mean $\mu$, compute an estimate $\widehat{\mu}$ such that $\|\widehat{\mu} - \mu\|_2$ is small.
\end{definition}

For the above problem, it is known that the information-theoretically optimal error is $\Theta(\eps)$. 
Moreover, the sample complexity of this task is known to be $\poly(k \log d,1/\eps)$ (upper and lower bound); this should be contrasted with the unstructured setting (i.e., dense) whose  sample complexity is $\poly(d/\eps)$.
Hence, we call algorithms with sample complexity $\poly(k,\log d,1/\eps)$ sample-efficient.
However, until recently, all known 
sample-efficient
algorithms had running time %
$d^{\Omega(k)}$ (essentially amounting to brute-force search for the hidden support).
The first sample and computationally efficient
algorithm for robust sparse mean estimation was given in \cite{BDLS17}, 
but it incurred an error of $\Omega(\eps \sqrt{\log(1/\eps)})$.\footnote{We remark that their algorithm is robust to strong contamination model, which is stronger than Huber contamination model; see \Cref{sec:related-work} for a thorough discussion.}
On the other hand, \cite{DKKLMS18-soda} gave a computationally-efficient algorithm with error $O(\eps)$ for the \emph{dense} setting 
with sample complexity polynomial in $d$, thus sample-inefficient. 
This leads to the question:
\begin{question}
    Is there a sample and computationally efficient robust sparse mean estimator with $O(\eps)$ error?
\end{question}
Beyond mean estimation, other important sparse estimation tasks include principal component analysis (PCA) and linear regression, defined below:
\begin{definition}[Robust Sparse PCA]
\label{def:robust-sparse-pca}
Given  $\eps \in (0,1/2) $, spike strength $\rho >  0$, and a set of $\eps$-corrupted samples from $\mathcal{N}(0, \vec I  +   \rho vv^\top)$ for an unknown $k$-sparse unit vector $v {\in} \R^d $, compute an estimate $\widehat{v}$ 
such that $\|\widehat{v}\widehat{v}^\top  -   vv^\top\|_{\fr}$ is small.
\end{definition}
Robust PCA (in the dense setting) has been studied since \cite{XuCM13}, alas with suboptimal error.
\begin{definition}[Robust Sparse Linear Regression]
\label{def:robust-sparse-linear-regression}
For $\beta \in \R^d$ and standard deviation $\sigma > 0$, we define $P_{\beta,\sigma}$ to be  the  joint distribution over $(X,y)$ where $X\sim \cN(0,\vec I)$ and $y \sim \cN(x^\top \beta,\sigma^2)$.    
Given  $\eps \in (0,1/2) $, $\sigma > 0$, and a set of $\eps$-corrupted samples from $P_{\beta,\sigma}$ for an unknown $k$-sparse $\beta \in \R^d $, compute an estimate $\widehat{\beta}$ 
such that $\|\widehat{\beta} - \beta\|_2$ is small.
\end{definition}
Taking $\rho = \sigma  = 1$ for convenience, the optimal errors for both robust sparse PCA and linear regression are still $\Theta(\eps)$. Similarly to mean estimation, existing sample and computationally efficient 
estimators for these problems incur error $\Omega(\eps \sqrt{\log(1/\eps)})$~\cite{BDLS17}.
Focusing on linear regression, the recent work of \cite{diakonikolas2023near} gave a computationally efficient estimator \Cref{def:robust-sparse-linear-regression} achieving $O(\eps)$ error; 
but since they do not incorporate sparsity, their algorithm inherently requires $\Omega(d)$ samples. 
For robust PCA, the computational landscape is even less understood:  even 
with $\poly(d)$ samples, 
no polynomial-time estimator is known that achieves $O(\eps)$ error.
{Thus, we are led to the following question}: 
\begin{question}
\label{ques:pca-regression}
    Are there sample and computationally efficient estimators for robust sparse PCA and robust sparse linear regression that achieve $O(\eps)$ error?
\end{question}

We answer both of these questions in the affirmative.

\subsection{Our Results}
In what follows, we let $\eps_0 \in (0,1/2)$ be a sufficiently small positive constant. We start with the mean estimation result: 
\begin{restatable}[Robust Sparse Mean Estimation]{theorem}{MAINTHMMEAN}\label{thm:main}
    For any $\eps \in (0,\eps_0)$, let $T$ be an $\eps$-corrupted set of $n$ samples (in the Huber contamination model) from $\cN(\mu,\vec I)$ for an unknown $k$-sparse mean $\mu {\in} \R^d$.
    There exists an algorithm that, given corruption rate $\eps \in (0,\eps_0)$, failure probability $\delta \in (0,1)$, sparsity parameter $k \in \N$ and a dataset with $n \geq \frac{k^2\log d + \log(1/\delta)}{\eps^2} \polylog(\frac{1}{\eps})$  samples,
    computes an estimate $\widehat{\mu} \in \R^d$ such that $\|\widehat{\mu} - \mu \|_{2} = O(\eps)$ 
    with probability at least $1-\delta$.   Moreover, the algorithm runs in $\poly(n d)$-time.
\end{restatable}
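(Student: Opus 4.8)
The plan is to follow the standard template for efficient robust estimation --- isolate a deterministic regularity (``stability'') condition that the clean samples satisfy with high probability, then run an iterative filter that in each round either certifies the current (weighted) dataset is close enough to Gaussian that its empirical mean is accurate, or produces a score function removing strictly more corrupted than clean weight --- but carried out at a finer granularity than in prior sparse work, so as to reach error $O(\eps)$ rather than $O(\eps\sqrt{\log(1/\eps)})$.

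First I would fix the regularity condition. For a set $S\subseteq\R^d$ I ask that, simultaneously over all $k$-sparse unit directions $v$ and, crucially, over all sparsely-supported PSD matrices $M$ with $\tr(M)=1$ and $\|M\|_1\le k$, the one-dimensional and quadratic-form marginals of $S$ agree with those of $\cN(\mu,\vec I)$ not merely in mean and variance but in the stronger sense that the relevant truncated moments and tail probabilities match the Gaussian value up to an $O(\eps)$-type correction weighted by a summable function of the truncation radius. A Bernstein/VC estimate together with an $\eps$-net argument over sparse PSD matrices --- whose effective dimension is $O(k^2\log d)$ --- shows that $n \gtrsim (k^2\log d + \log(1/\delta))/\eps^2$ samples, up to $\polylog(1/\eps)$ factors, make the clean inlier set satisfy this condition with probability $1-\delta$; the extra $\polylog$ absorbs the union over the $O(\log(1/\eps))$ dyadic truncation scales and over the $\poly(1/\eps)$ filter iterations. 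A standard downsampling step then lets one treat the mixture $P$, for our purposes, as a genuine $\eps$-corruption of such a stable set.

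Next comes the algorithm, where the new \emph{multidimensional} filter enters. Since finding the worst $k$-sparse direction is NP-hard, I relax to the sparse-PCA-style SDP: maximize a suitable discrepancy functional of the weighted empirical distribution over PSD $M$ with $\tr(M)=1,\ \|M\|_1\le k$. The maximizer $M^\star$ is generally high-rank, so the natural object to filter on is not a single projection but the quadratic form $q(x)=(x-\widehat\mu)^\top M^\star(x-\widehat\mu)$, whose law under the true Gaussian is an explicit weighted sum of $\chi^2_1$'s. The filter down-weights points according to how far the empirical law of $q$ exceeds this reference law in the tails, using a score designed so that each firing decreases a potential function measuring total corrupted weight, while the regularity condition guarantees only an $O(\eps)$-fraction of clean weight is ever removed. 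When the SDP value drops below threshold, the weighted empirical distribution matches the Gaussian over all sparse directions in the refined sense, and I output the empirical mean hard-thresholded to its largest coordinates; the refined matching --- in particular control of $\int|F_{\mathrm{emp}}-F_{\cN}|$ weighted by $|x|$ along every sparse direction --- then yields $\|\widehat\mu-\mu\|_2=O(\eps)$, the same phenomenon powering the dense $O(\eps)$-error algorithm of \cite{DKKLMS18-soda}, now over sparse directions. Running time $\poly(nd)$ follows since each round is an SDP plus a sort, and the potential argument bounds the number of rounds.

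The main obstacle is precisely the interface between ``sparse'' and ``optimal error.'' The classical sparse filter only needs to drive the worst sparse variance down to $1+O(\eps\log(1/\eps))$, which a spectral/SDP certificate supplies directly; to get $O(\eps)$ error one instead needs the full tail of every sparse quadratic-form marginal to track the Gaussian, and one must show (i) that this refined discrepancy can still be cast as an SDP over sparse PSD matrices with an affordable relaxation gap, (ii) that the associated score genuinely makes progress --- i.e., an adversary cannot hide $\omega(\eps)$ worth of mean shift while keeping every sparse truncated-moment test just below threshold --- and (iii) that the iteration terminates in polynomial time. Step (ii), the ``no sustained bias'' argument carried out uniformly over sparsely-supported multidimensional test matrices rather than single directions, is the technical heart, and is what the paper refers to as the novel multidimensional filtering method.
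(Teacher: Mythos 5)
Your proposal captures the right high-level architecture — deterministic stability conditions on inliers, an SDP-based filter that removes more corrupted than clean weight, and a certificate-to-termination argument — but it misses the specific idea that lets the paper reach $O(\eps)$ error, and replaces it with a step that, as written, would not go through.

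The gap is at the termination interface, which the paper explicitly flags as the hard point. You propose to keep the \cite{BDLS17}-style relaxation (PSD $M$, $\tr(M)=1$, $\|M\|_1\le k$) and argue that when the (refined, tail-weighted) SDP value drops, the empirical mean is $O(\eps)$-accurate. There are two problems. First, over this feasible set with a linear objective the maximizer $M^\star$ sits at an extreme point and can perfectly well be rank one, so the assertion that it is ``generally high-rank'' — which you need to get any benefit beyond a single projection, e.g. via Hanson--Wright-style concentration — is unjustified. Second, and more fundamentally, even if you could sequentially extract $r$ orthogonal feasible matrices $\vec A_1,\dots,\vec A_r$ and drive all of their scores to $1+O(\eps)$, there is no natural way to decompose the data so that ``the remaining variance is $1+O(\eps)$.'' An $\vec A$ with bounded entrywise $\ell_1$ norm is not supported on a small set of coordinates, and there is no low-dimensional subspace $V$ for which orthogonality to the $\vec A_i$'s certifies small variance on $V^\perp$. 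So when the filter stops, you do not know where the residual $\eps\sqrt{\log(1/\eps)}$-scale bias lives, and hard-thresholding the empirical mean does not make it go away. Controlling a weighted CDF discrepancy ``along every sparse direction'' is the right target, but you never explain how the single-matrix SDP test certifies it, and the paper's whole point is that it cannot: a single-direction filter is inherently limited to $\eps\sqrt{\log(1/\eps)}$.

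The paper's resolution is different in a structural way you should note. It abandons the PSD/trace/$\ell_1$ relaxation in favor of maximizing $\langle \vec\Sigma'-\vec I,\vec A\rangle$ over matrices $\vec A$ that are $k^2$-sparse with unit Frobenius norm — no PSD, no rank constraint. This buys two things simultaneously. (i) By greedily extracting $r=\log(1/\eps)$ such maximizers $\vec A_1,\dots,\vec A_r$ on \emph{disjoint} supports, the sum $\vec A=\sum_i\vec A_i$ has $\|\vec A\|_{\fr}=\sqrt{r}$ and $\|\vec A\|_{\op}\le 1$ (because the $\vec A_i$ live in disjoint coordinate blocks), which is exactly the regime where Hanson--Wright gives that the inlier tail of $(X-\mu)^\top\vec A(X-\mu)-\tr(\vec A)$ beyond $\Theta(\log(1/\eps))$ has mass $\le\eps$ — so the filter can afford to remove $\log(1/\eps)$ times more outlier than inlier weight per round. (ii) The union $H$ of the supports of the $\vec A_i$ is a concrete set of $\le k^2\log(1/\eps)$ \emph{coordinates}; when the loop ends, the certificate lemma applies on $[d]\setminus H$ to give $O(\eps)$ error there, and one simply runs the dense $O(\eps)$-error estimator of \cite{DKKLMS18-soda} on the $|H|$ retained coordinates using fresh samples. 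Your proposal has neither the disjoint-support mechanism that makes Hanson--Wright kick in nor the coordinate set $H$ on which to fall back; this is the ``multidimensional filtering method'' the abstract advertises, and it is the piece that must be supplied.

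What you do have right: the stability conditions indeed need to be finer than mean/covariance matching (the paper's \Cref{DefModGoodnessCond} includes exactly the truncated-quadratic-form and sparse linear tail conditions you gesture at), the $\eps$-net over sparse matrices does give the $k^2\log d/\eps^2$ sample complexity, and a potential-function argument does bound the number of filter rounds. But the central step — converting filter termination into an $O(\eps)$ mean estimate — is the one place where you cannot borrow the dense-setting intuition wholesale, and it is the place your write-up is currently hand-waving.
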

The error of $O(\eps)$ is information-theoretically optimal up to constants\footnote{Observe that \Cref{thm:main} cannot be extended to identity covariance sub-Gaussian distributions, as the information-theoretic error for this class is $\Theta(\eps \sqrt{\log(1/\eps)})$.}. Importantly, \Cref{thm:main} is the first sample and computationally efficient algorithm achieving this optimal error guarantee.
Moreover, the $k^2$ dependence in the sample complexity 
is optimal 
within the class of computationally efficient algorithms~\cite{DKS17-sq,brennan2020reducibility}.
For robust sparse PCA, we show:
\begin{restatable}[Robust Sparse PCA]{theorem}{SPARSE PCA}\label{thm:sparse-pca}
    For an $\eps \in (0,\eps_0)$, let $T$ be an $\eps$-corrupted set of $n$ samples (in the Huber contamination model) from $\cN(0,\vec I + \rho vv^\top)$ for an unknown $k$-sparse unit vector $v \in \R^d$ and $\Omega(\eps \log(1/\eps)) < \rho < 1$. 
    There exists an algorithm that, given corruption rate $\eps$, spike strength $\rho$, a sparsity parameter $k \in \N$, and dataset $T$  with $n:= |T| \geq \frac{k^2\log d}{\eps^2} \polylog(1/\eps)$ many samples,
    computes an estimate $\widehat{v} \in \R^d$ such that 
    with probability at least $0.9$: (i)  $\|\widehat{v}\widehat{v}^\top - vv^\top \|_{\fr} = O(\eps/\rho)$
    and (ii) $\widehat{v}^\top\vec \Sigma \widehat{v} \geq \left(1 - O\left({\eps^2}/{\rho}\right)\right)\|\vec \Sigma\|_\op$ for $\vec \Sigma:= \vec I + \rho vv^\top$.
    Moreover, the algorithm runs in $\poly(nd)$-time.
\end{restatable}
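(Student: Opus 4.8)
The plan is to reduce robust sparse PCA to a robust \emph{sparse mean estimation} problem for the matrix‑valued statistic $Y \deff XX^\top - \vec I$, and then re‑run the multidimensional sparse‑filtering method developed for \Cref{thm:main}. For $X \sim \cN(0, \vec I + \rho vv^\top)$ we have $\E[Y] = \vec\Sigma - \vec I = \rho vv^\top$, which, since $v$ is $k$‑sparse, is a rank‑one PSD matrix supported on a $k \times k$ principal submatrix with $\|\rho vv^\top\|_1 = \rho\|v\|_1^2 \le \rho k$ and $\tr(\rho vv^\top) = \rho \le 1$. Applying the fixed map $x \mapsto xx^\top - \vec I$ to a Huber‑$\eps$‑corrupted sample of $\cN(0,\vec\Sigma)$ produces a Huber‑$\eps$‑corrupted sample from the law of $Y$, so it suffices to produce $\widehat M$ with $\|\widehat M - \rho vv^\top\|_\fr = O(\eps)$ and then output the top unit eigenvector $\widehat v$ of $\widehat M$. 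Conclusion (i) then follows from a rank‑one perturbation (Davis--Kahan) estimate: $\rho vv^\top$ has eigengap $\rho$ and $\|\widehat M - \rho vv^\top\|_\op \le \|\widehat M - \rho vv^\top\|_\fr = O(\eps)$, so $\|\widehat v\widehat v^\top - vv^\top\|_\fr = O(\eps/\rho)$ provided $\eps/\rho$ is small, which is exactly what the hypothesis $\rho = \Omega(\eps\log(1/\eps))$ buys. Conclusion (ii) is then immediate algebra: $\|\widehat v\widehat v^\top - vv^\top\|_\fr^2 = 2\bigl(1 - (v^\top\widehat v)^2\bigr)$ gives $(v^\top\widehat v)^2 = 1 - O(\eps^2/\rho^2)$, hence $\widehat v^\top\vec\Sigma\widehat v = 1 + \rho(v^\top\widehat v)^2 = (1+\rho) - O(\eps^2/\rho) = \bigl(1 - O(\eps^2/\rho)\bigr)\|\vec\Sigma\|_\op$.

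Everything therefore rests on estimating $\E[Y] = \rho vv^\top$ to Frobenius error $O(\eps)$ from $\poly(k,\log d,1/\eps)$ corrupted samples. One cannot invoke \Cref{thm:main} verbatim, since $Y$ is a Gaussian quadratic form rather than an identity‑covariance Gaussian, but two structural features make the filtering machinery go through. First, the signal lies in the sparse‑PCA SDP feasible set $\mathcal U \deff \{U \succeq 0 : \tr(U) \le 1,\ \|U\|_1 \le k\}$ --- the usual $\ell_1$‑relaxation of ``rank‑one $k$‑sparse'' --- so the filter should certify empirical deviations only against $\mathcal U$ (via its SDP representation), never against arbitrary $k^2$‑sparse matrices; this is what keeps the sample complexity at $k^2\log d/\eps^2$ up to logarithms rather than $k^4$, and it is also what the ``multidimensional filter in the sparse regime'' is designed to exploit. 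Second, for $U \in \mathcal U$ the clean one‑dimensional statistic $\langle U, Y\rangle = X^\top U X - \tr(U)$ has an \emph{explicitly computable} law that is $O(1)$‑sub‑exponential (here we use $\rho < 1$, so $\vec\Sigma \preceq 2\vec I$ and all such quadratic forms have variance $O(1)$). The first deterministic ingredient I would establish is the corresponding regularity/stability statement: on $n = \Theta\bigl(k^2\log d/\eps^2\bigr)\polylog(1/\eps)$ samples, the low‑order empirical moments of $\langle U, Y\rangle$ are uniformly close to their population values over all $U \in \mathcal U$; this is a sparse cover/VC argument and is where the $k^2\log d$ factor enters.

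The algorithm is then an iterative filter on weights over the sample: at each round solve the sparse‑PCA SDP to find the direction $U \in \mathcal U$ along which the current weighted mean of $\langle U, Y_i\rangle$ deviates most from its candidate value; if this deviation exceeds $c\eps$, down‑weight points according to the discrepancy between the empirical one‑dimensional distribution of $\langle U, Y_i\rangle$ and its known clean law, and repeat. The only non‑routine point is that this down‑weighting step must be the refined, distribution‑aware filter of the \Cref{thm:main} machinery rather than a crude tail threshold: a threshold filter would lose a $\log(1/\eps)$ factor to the sub‑exponential tails, whereas the refined version argues that no $\eps$‑fraction of surviving outliers can shift $\langle U, \widehat M\rangle$ by more than $O(\eps)$ without creating a detectable mismatch with the known clean tail profile at some scale, and is therefore guaranteed to delete strictly more outlier than inlier weight until the deviation drops to $O(\eps)$. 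Correctness and termination follow from the standard weight‑removal potential argument, and the SDP‑optimal $U$ together with the uniform regularity from the previous paragraph certifies that the final $\widehat M$ is within $O(\eps)$ of $\rho vv^\top$ in Frobenius norm.

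I expect the main obstacle to be precisely this last step --- transporting the optimal‑$O(\eps)$‑error filter from the sub‑Gaussian mean‑estimation setting of \Cref{thm:main} to the sub‑exponential, SDP‑constrained matrix setting here, while simultaneously keeping all the relevant concentration uniform over $\mathcal U$ with only $\poly(k,\log d,1/\eps)$ samples. A secondary technical nuisance is that the clean law of $\langle U, Y\rangle$ depends on the unknown spike through $\vec\Sigma = \vec I + \rho vv^\top$; I would handle this either by warm‑starting with a coarse estimate of $vv^\top$ (a crude threshold filter already achieves constant Frobenius error once $\rho = \Omega(\eps\log(1/\eps))$, which also explains that hypothesis) and iteratively refining, or by running the filter against one‑sided tail bounds that hold uniformly for all $\rho < 1$.
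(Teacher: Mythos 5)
Your plan is exactly the ``natural first attempt'' that the paper explicitly considers and rejects at the start of \Cref{sec:pca_main_body}: reducing to robust mean estimation of the matrix statistic $Y := XX^\top - \vec I$, whose mean is $\rho vv^\top$. The paper points out that the distribution of $Y$ is a second power of a Gaussian, not a Gaussian, and concludes that this route ``would only yield $O(\eps\log(1/\eps))$ error.'' Your write-up correctly identifies the central difficulty yourself --- ``transporting the optimal-$O(\eps)$-error filter from the sub-Gaussian mean-estimation setting of \Cref{thm:main} to the sub-exponential, SDP-constrained matrix setting'' --- but then defers to an unspecified ``refined, distribution-aware filter'' without giving the ingredient that would make it work. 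This is not a small hole: the $O(\eps)$ guarantee in \Cref{thm:main} rests crucially on Gaussianity of the inliers and on the Hanson-Wright tail for \emph{degree-2} polynomials of $X$ (\Cref{cond:polynomials}), which is what makes the $r=\log(1/\eps)$-direction trick profitable. For the matrix statistic $Y$ the corresponding filter score is a quartic in $X$, whose tails are heavier, and there is no analogous known concentration giving an $\eps$-tail at $O(\log(1/\eps))$; the same obstacle is flagged elsewhere in the paper, which notes that even in the dense unknown-covariance setting (a degree-2 statistic) only a quasi-polynomial-time $O(\eps)$ algorithm is known. So ``no $\eps$-fraction of surviving outliers can shift $\langle U,\widehat M\rangle$ by more than $O(\eps)$ without a detectable mismatch'' is precisely the assertion that needs proof, and nothing in the proposal supplies it.

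The paper sidesteps this by a genuinely different reduction that \emph{preserves} Gaussianity. It first obtains a coarse unit vector $w$ with $\|ww^\top - vv^\top\|_\fr = O(\eps\sqrt{\log(1/\eps)}/\rho)$, then exploits \Cref{claim:conditional}: the distribution of $\mathrm{Proj}_{w^\perp}(X)$ conditioned on $w^\top X = \alpha$ is an honest Gaussian $\cN(\tilde\mu,\tilde{\vec\Sigma})$ with $\tilde\mu$ proportional to $\mathrm{Proj}_{w^\perp}(v)$ and $\tilde{\vec\Sigma} = \vec I + O(\eps)$. After conditioning on a short, randomly centered interval $I$ around $\alpha$ (to make rejection sampling feasible and keep the outlier fraction $O(\eps)$, as in \cite{diakonikolas2023near}), the inlier distribution $G_I$ satisfies the goodness conditions of \Cref{DefModGoodnessCond} with respect to $\mu_{G_\alpha}$, so \Cref{thm:main} applies and returns $z$ with $\|z - \mu_{G_\alpha}\|_2 = O(\eps)$. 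Combining $z$ with a $1$-D robust variance estimate of $w^\top X$ gives $\widehat v$ with $\|\widehat v - v\|_2 = O(\eps/\rho)$ (\Cref{cl:closeness}), from which (i) and (ii) follow by the same elementary algebra you wrote. If you want to salvage your outline, this conditioning step is the missing idea; without it the approach does not reach $O(\eps)$ error.

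By contrast, the final algebraic deduction in your proposal --- passing from $\|\widehat M - \rho vv^\top\|_\fr = O(\eps)$ to (i) via Davis--Kahan, and then (i) $\Rightarrow$ (ii) via $\|\widehat v\widehat v^\top - vv^\top\|_\fr^2 = 2(1 - (v^\top\widehat v)^2)$ --- is fine and essentially matches the paper's own derivation of (ii) at the end of \Cref{sec:appendix_pca}.
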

Similarly, the error guarantee of \Cref{thm:sparse-pca} 
is optimal (for the considered range of $\rho$) up to a constant factor, significantly 
improving on~\cite{XuCM13,BDLS17,DKKPS19-sparse}. 
Notably, the sample complexity dependence on $k^2$ is
necessary among computationally efficient algorithms, 
even without outliers~\cite{BerthetR13}.
Even in the dense setting, \Cref{thm:sparse-pca} provides the first polynomial-time algorithm with $O(\eps)$ error; in comparison, the only existing algorithm \cite{DKKLMS18-soda} uses quasipolynomial time to get $O(\eps)$ error for \Cref{def:robust-sparse-pca}.
We additionally highlight that the approximation factor 
of $ (1 - O(\eps^2/\rho) )$ for $\frac{\widehat{v}^\top\vec\Sigma\widehat{ v}}{\|\vec \Sigma\|_\op}$ improves upon 
the known $1 - O(\eps \log(1/\eps))$ guarantees 
achieved without the spike structure~\cite{JamLT20}.
Finally, for sparse linear regression we show:
\begin{restatable}[Robust Sparse Linear Regression]{theorem}{LINEARREGRESSION}\label{thm:lin-regression}
    For $\eps \in (0,\eps_0)$, let $T$ be an $\eps$-corrupted set of $n$ samples (in the Huber contamination model) from $P_{\beta,\sigma}$ for an unknown $k$-sparse regressor $\beta \in \R^d$ and $\|\beta\|_2 = O(\sigma)$, and $\sigma\in \R_ +$.
    There exists an algorithm that, given $\eps$, $k$, and a dataset $T$  with $n:= |T| \geq \frac{k^2\log d}{\eps^2}\polylog(1/\eps)$ many samples
    computes an estimate $\widehat{\beta} \in \R^d$ such that $\|\widehat{\beta} - \beta \|_{2} = O(\sigma\eps)$ 
    with probability at least $0.9$.   Moreover, the algorithm runs in $\poly(nd)$-time.
\end{restatable}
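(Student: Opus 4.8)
The plan is to reduce robust sparse linear regression to an iteration of robust sparse \emph{mean} estimation applied to the gradient of the squared loss, in the spirit of \Cref{thm:main}. Write $\delta^{(t)} := \beta - \beta^{(t)}$ for the error of the current iterate, and let $\xi := y - X^\top\beta \sim \cN(0,\sigma^2)$ denote the (independent) label noise. Since $X \sim \cN(0,\vec I)$, for any fixed $\beta^{(t)}$ the vector
\[
W^{(t)} \;:=\; X\,(y - X^\top\beta^{(t)}) \;=\; X\xi + XX^\top\delta^{(t)}
\]
has $\E[W^{(t)}] = \delta^{(t)}$ on the inlier part, while an $\eps$-fraction of the observed copies of $W^{(t)}$ are arbitrary (they are deterministic functions of the $\eps$-corrupted pairs $(X,y)$, so Huber contamination of $(X,y)$ becomes Huber contamination of $W^{(t)}$). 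Thus, given a robust estimate $\widehat m^{(t)}$ of $\E[W^{(t)}]$, the update $\beta^{(t+1)} := \beta^{(t)} + \widehat m^{(t)}$ is one step of robust sparse mean estimation, and $\delta^{(t+1)} = \delta^{(t)} - \widehat m^{(t)}$ is exactly the mean-estimation error. I would initialize at $\beta^{(0)} = 0$, which already satisfies $\|\delta^{(0)}\|_2 = \|\beta\|_2 = O(\sigma)$ by hypothesis; since $\E[y^2] = \|\beta\|_2^2 + \sigma^2 = \Theta(\sigma^2)$, a robust estimate of $\E[y^2]$ (or a robust quantile of $|y|$) pins down $\sigma$ up to constants, which is all that is needed to set the filter's scale.

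The core of the argument is the single-step bound: whenever $\|\delta^{(t)}\|_2 = O(\sigma)$, the sparse multidimensional filter underlying \Cref{thm:main} (suitably adapted to heavier-tailed inliers), run with sparsity parameter $O(k\,\polylog(1/\eps))$, returns $\widehat m^{(t)}$ with
\[
\bignorm{\widehat m^{(t)} - \delta^{(t)}}_2 \;=\; O\!\Paren{\eps\sqrt{\sigma^2 + \|\delta^{(t)}\|_2^2}} \;=\; O(\eps\sigma) \,+\, O(\eps)\,\|\delta^{(t)}\|_2 .
\]
To apply the filter one must verify that the inlier law of $W^{(t)}$ meets the \emph{certifiable} moment requirements on $k$-sparse directions: its covariance is $(\sigma^2+\|\delta^{(t)}\|_2^2)\vec I + \delta^{(t)}{\delta^{(t)}}^{\!\top} \preceq O(\sigma^2)\vec I$, and for every $k$-sparse unit vector $v$ the projection $\iprod{v,W^{(t)}} = \iprod{v,X}\xi + \iprod{v,X}\,(X^\top\delta^{(t)})$ is a degree-$\le 2$ polynomial in a Gaussian vector, hence sub-exponential with parameter $O(\sqrt{\sigma^2+\|\delta^{(t)}\|_2^2})$, with moment bounds admitting sum-of-squares/Hermite certificates exactly as in the Gaussian case. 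Crucially, because $W^{(t)}$ has only sub-exponential tails (so its inlier mass beyond radius $O(\sigma\log(1/\eps))$ is already $\le\eps$), the information-theoretically optimal Huber error for this sub-problem is $O(\eps\sigma)$ and the optimal-error filter attains it; the cross-term $XX^\top\delta^{(t)}$ is precisely what the invariant $\|\delta^{(t)}\|_2 = O(\sigma)$ keeps under control.

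Granting the single-step bound and taking $\eps_0$ small enough that the $O(\eps)$ coefficient is at most $1/2$, we obtain the contraction $\|\delta^{(t+1)}\|_2 \le \tfrac12\|\delta^{(t)}\|_2 + O(\eps\sigma)$; iterating $O(\log(1/\eps))$ times drives $\|\widehat\beta - \beta\|_2 = \|\delta^{(T)}\|_2$ down to $O(\eps\sigma)$. No hard thresholding is needed: the iterates simply accumulate to $O(k\,\polylog(1/\eps))$-sparsity, which is why the final sample complexity carries a $\polylog(1/\eps)$ factor. Samples enter only through replacing population moments of $W^{(t)}$ by empirical ones, uniformly over all $O(k\,\polylog(1/\eps))$-sparse directions and all iterates: one needs the empirical mean of $\iprod{v,W^{(t)}}$ accurate to $O(\eps\sigma)$ and the empirical second (and low-order) moments on sparse directions accurate to $O(\eps^2\sigma^2)$ in the relevant sparse-operator sense. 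A union bound over the $d^{O(k\,\polylog(1/\eps))}$ supports, together with Bernstein-type concentration for the sub-exponential summands, yields $n \ge \frac{k^2\log d}{\eps^2}\polylog(1/\eps)$; the $k^2$ (rather than $k$) is forced by needing the sparse second-moment estimate accurate to $\eps^2$, matching the known computational lower bounds.

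The main obstacle is the single-step lemma at \emph{optimal} error: one must show the filtering machinery of \Cref{thm:main} still certifies enough structure when fed the non-Gaussian, sub-exponential, mildly anisotropic gradient vectors $W^{(t)}$ --- that is, establish the certifiable sparse moment inequalities for $W^{(t)}$ and the matching finite-sample concentration with only $\widetilde{O}(k^2\log d/\eps^2)$ samples --- and to interface this cleanly with the outer gradient recursion (that the filter's output stays $O(k\,\polylog(1/\eps))$-sparse and that the $\|\delta^{(t)}\|_2 = O(\sigma)$ invariant is preserved). The remaining pieces --- robustly pinning $\sigma$ to constant factors, the sparsity bookkeeping, and the geometric-series accounting --- are routine.
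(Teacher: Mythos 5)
Your gradient-iteration reduction is a genuinely different strategy from the paper's, and it has a fundamental gap that I don't think can be patched: the inlier distribution of $W^{(t)} = X\xi + XX^\top\delta^{(t)}$ is not Gaussian but a degree-$2$ polynomial of a Gaussian vector, i.e.\ sub-exponential, and the entire machinery that lets the paper beat $\eps\sqrt{\log(1/\eps)}$ is tied specifically to Gaussianity. Your key sentence ---``because $W^{(t)}$ has only sub-exponential tails \dots\ the information-theoretically optimal Huber error for this sub-problem is $O(\eps\sigma)$ and the optimal-error filter attains it''--- is exactly where the argument breaks. For a distribution whose $\eps$-tail sits at radius $\Theta(\sigma\log(1/\eps))$, the Huber adversary can place $\eps$ mass there in any chosen direction without being detectable by any location-scale argument, shifting the mean by $\Theta(\eps\sigma\log(1/\eps))$; to beat this, the algorithm must exploit the \emph{exact} tail shape via the degree-$2$ certificates of \Cref{DefModGoodnessCond} (Hanson--Wright concentration of $p(X)=(x-\mu)^\top\vec A(x-\mu)$). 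When the inliers are $W^{(t)}$ rather than a Gaussian, such $p$ becomes a degree-$4$ polynomial of $(X,\xi)$ with tail $\exp(-\Omega(t^{1/2}))$, and conditions like \Cref{cond:poly-conc} and \Cref{cond:hw} fail at the threshold $\Theta(\log(1/\eps))$ — they would need a threshold $\Theta(\log^2(1/\eps))$, which puts you right back at suboptimal error. This is the same barrier that caps prior iterative approaches (\cite{LiuSLC20,CheDKGGS21}) at $O(\sigma\eps\,\polylog(1/\eps))$; your scheme is a close relative of theirs and would, if carried through, recover their error, not $O(\sigma\eps)$. There is also a secondary issue: the inlier covariance $(\sigma^2+\|\delta^{(t)}\|_2^2)\vec I + \delta^{(t)}\delta^{(t)\top}$ is anisotropic and depends on the unknown $\delta^{(t)}$, while \Cref{thm:main} is stated for known identity covariance; the paper explicitly flags the unknown-covariance version as open.

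The paper sidesteps both problems by a different reduction: it \emph{conditions} on $y$ lying in a thin random interval, under which the conditional law of $X$ is \emph{exactly} a Gaussian $\cN((\alpha/\sigma_y^2)\beta,\; \vec I - \beta\beta^\top/\sigma_y^2)$ with $k$-sparse mean and covariance within $O(\eps)$ of identity (using the warm start from \cite{LiuSLC20} to ensure $\|\beta\|_2\lesssim\sigma\eps\log(1/\eps)$). That preserved Gaussianity is precisely what makes \Cref{thm:main} applicable at $O(\eps)$ error; the interval is random so that the Huber outlier fraction stays $O(\eps)$ after conditioning. If you want to keep your gradient-descent framing, you would have to first condition in exactly this way so that the per-step mean-estimation problem is on genuine (approximate) Gaussians — at which point the iteration collapses to the paper's one-shot argument.
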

Similar to our previous results, the above error  is optimal up to a constant, improving upon \cite{BDLS17,LiuSLC20}. The dependence on $d,\eps$ in the sample complexity is similarly nearly optimal for efficient algorithms~\cite{brennan2020reducibility}.
The restriction on the norm of $\beta$ is rather mild because of existing algorithm from \cite{LiuSLC20} which already achieves error $O(\sigma \eps \log(1/\eps))$  in polynomial time 
(but with sample complexity depending logarithmically on the initial norm); thus, we could simply use \cite{LiuSLC20} as a warm start; see \Cref{rem:norm} for further details.

\subsection{Our Techniques}\label{sec:techniques}

At a high-level, we adapt the $O(\eps)$ error algorithm of \cite{DKKLMS18-soda} to the sparse setting, using ideas from \cite{BDLS17,DKKPS19-sparse}.

We start by explaining the standard filtering algorithms that achieve $\eps \sqrt{\log(1/\eps)}$ error. Let $\mu'$ and $\vec \Sigma'$ be the empirical mean and the empirical covariance of the (corrupted) data, respectively.
Algorithms for robust mean estimation detect outliers by searching for atypical behaviors in $\vec \Sigma'$. 
Particularly, if $v^\top  \vec \Sigma' v \geq 1 + C\eps \log(1/\eps)$ for some direction $v$, then one can filter points using projections $|v^\top (x - \mu')|^2$, with the guarantee of removing more outliers than inliers (on average).
This additional $\log(1/\eps)$ factor is necessary here 
because the $\eps$-tail of $(v^\top X)^2$ for $X \sim \cN(0,\vec I)$ is at $\log(1/\eps)$ (and that of $|v^\top X|$ at $\sqrt{\log(1/\eps)}$); without this factor, the algorithm might remove too many inliers. 
Consequently, when the algorithm stops, there could be directions $v$ with variance $1 + \Theta(\eps\log(1/\eps))$ such that the $\eps n$ outliers remain $\Omega(\sqrt{\log(1/\eps)})$ far from the $v^\top \mu$, leading to a total error of $\Omega(\eps \sqrt{\log(1/\eps)})$ in the algorithm's output.

To improve this error to $O(\eps)$, \cite{DKKLMS18-soda} makes the following key observation.
If there are  $r$ (orthogonal) directions $v_1,\dots,v_r$ all with variance bigger than $1 + C \eps$,
then (i) either $r$ is small, implying that a brute-force approach can be used to learn the mean optimally in this $r$ dimensional space (in the orthogonal space, the sample mean would already be $O(\eps)$ close), or
(ii) $r$ is large, in which case, it is unlikely for an inlier to have large projections along $r$ of them \emph{simultaneously} (formalized by the Hanson-Wright inequality), {thus permitting us to remove more outliers for large $r$}.
Choosing $r = \Theta(\log(1/\eps))$, both (i) {runs in polynomial time} and (ii) removes {sufficiently many outliers}. 
The resulting algorithm thus filters until the $r$-th {largest} eigenvalue is at most $1 + O(\eps)$, thereby decomposing the data into an $r$-dimensional space $V$ and its complement $V^\perp$ such that the sample mean on $V^\perp$ has error $O(\eps)$, while the brute force approach on $V$ also incurs $O(\eps)$ error and runs in polynomial time.
As a final step, the algorithm adds these two orthogonal estimates.

Adapting this approach to the sparse regime in a sample-efficient manner 
requires that we filter outliers only along sparse directions $v$, 
which immediately hits the roadblock that maximizing $v^\top \vec \Sigma' v$ over sparse directions $v$ is computationally hard.
Thus, robust sparse estimation requires relaxing the objective $v^\top \vec \Sigma 'v = \langle \vec \Sigma', vv^\top \rangle$ for computational efficiency (while still being sample-efficient).
The relaxation of \cite{BDLS17} maximizes $\langle \vec \Sigma', \vec A  \rangle$ over PSD  matrices $\vec A$ with unit trace and bounded entry-wise $\ell_1$ norm, which is a semidefinite program.\footnote{The relaxation amounts to ignoring the rank constraint and relaxing $\ell_0$ norm to $\ell_1$ norm.}
If the maximum is larger than $1 + \Omega(\eps \log(1/\eps))$ with maximizer $\vec A^*$, one can filter out points $x$ with large score $x^\top \vec A^*x$. 
Since the filter relies only on a single ``direction'' $\vec A$, this approach is inherently limited to $\eps \sqrt{\log(1/\eps)}$ error.

Adapting \cite{DKKLMS18-soda}'s approach to the relaxation of \cite{BDLS17} is challenging.
Promisingly, it is plausible that one can filter along $r$ orthogonal ``directions'' $\vec A_1,\dots,\vec A_r$ (sample-efficiently) such that if their average score is $1 + \Omega(\eps)$, {then one may remove enough outliers.
} 
Consequently, at the end of filtering, we can identify $r$ ``directions'' $\vec A_1,\dots,\vec A_r$ such that all other orthogonal feasible $\vec A$'s would have small score{, i.e., $\langle \vec A, \vec \Sigma' \rangle = 1+O(\eps)$}.
At this point, however, the analogy of $\vec A$'s being a ``direction'' breaks down. There is no natural decomposition of the data using $\vec A$'s into a low-dimensional space $V$ and its orthogonal space $V^\perp$, such that the variance in $V^\perp$ is  $1 + O(\eps)$ (so that the sample mean is $O(\eps)$ close on $V^\perp$). 

We instead consider a different relaxation from \cite{DKKPS19-sparse} that maximizes $\langle \vec \Sigma' -\vec I, \vec A \rangle$ over $k^2$-sparse unit Frobenius norm matrices $\vec A$.\footnote{The relaxation ignores the rank, symmetry, and PSD constraints.} 
Their key observation was that the resulting relaxation is both sample and computationally efficient. %
Since they filtered along a single $\vec A$, their algorithm could not achieve $o(\eps \sqrt{\log(1/\eps))}$ error.
However, since their relaxations consider \emph{sparse} matrices $\vec A$, they naturally lead to a  decomposition of coordinates: support of $\vec A$ and the rest of the coordinates. 
Inspired by \cite{DKKLMS18-soda}, we extend \cite{DKKPS19-sparse}'s approach as follows:  
We start with an empty set of coordinates $H$ and find the sparse matrix $\vec A_1$ that maximizes $\langle \vec \Sigma' - \vec I, \vec A_1 \rangle$.
If the maximum is larger than $1 + \Omega(\eps)$, we add the support of $\vec A_1$ to $H$, and proceed to find $\vec A_2$ that maximizes $\langle (\vec \Sigma' - \vec I)_{H^\complement}, \vec A_2 \rangle$, where $(\vec \Sigma' - \vec I)_{H^\complement}$ is zero on the coordinates in $H$.
We continue until we have either (i) identified $r$ such $\vec A_i$'s, each with score $1 + \Omega(\eps)$, in which case we filter similarly 
to \cite{DKKLMS18-soda}; or (ii) we have identified a small set of coordinates, $H$, such that the sample mean is $O(\eps)$ accurate on $H^\complement$.
This still leaves the task of estimating the mean on the coordinates in $H$: although brute force approach is not possible on $H$, we can invoke the \emph{dense} algorithm from \cite{DKKLMS18-soda} on $H$ using fresh samples since
$|H| = \tilde{O}(k^2)$.

For sparse PCA, we provide a novel reduction that reduces robust Gaussian PCA to robust (approximate) Gaussian mean estimation. 
It is crucial here that we maintain the (approximate) Gaussianity in the latter because robust mean estimation for generic subgaussian distributions incurs $\omega(\eps)$ error.
In fact, even for \emph{dense} robust PCA, our algorithm is the first 
polynomial-time algorithm to achieve $O(\eps)$ error.
Recall that our goal is to estimate $v$ from corrupted samples of $X \sim \cN(0,\vec I + \rho vv^\top)$.
Given an initial rough approximation $w$ of the spike $v$, we focus on estimating the correction $z := v - w$. We decompose $z$ as $z := z' + z_{\perp}$, where $z'$ is parallel to $w$ and $z_{\perp} \perp w$; the challenge lies in estimating $z_\perp$.
Our key observation concerns the conditional distribution of (uncorrupted) samples projected orthogonally to $w$, conditioned on  $w^\top x =a $, which we denote by $X_a^\perp$. It turns out that the distribution of $X_a^\perp$ is Gaussian, with mean proportional to $z_\perp$, and approximately isotropic covariance.
 Although this insight reduces sparse PCA to (Gaussian) sparse mean estimation, this does not directly lead to an algorithm, 
because we cannot simulate this conditional sampling 
exactly (even in the outlier-free setting).
We combine our insight with a template from \cite{diakonikolas2023near}, which overcomes similar challenges in   linear regression.

\subsection{Related Work}
\label{sec:related-work}
Our work lies in the field of robust statistics, initiated in the 1960s~\cite{Tuk60,Huber64}.
We refer the reader to \cite{diakonikolas2023algorithmic} for a comprehensive overview and discuss the most relevant works below.

     \paragraph{Robust PCA} 
Principal Component Analysis has been studied extensively in the outlier-robust setting using a variety of algorithmic approaches, such as robustly estimating the covariance matrix first and maximizing certain robust variance measures (see \cite{croux2000principal, xu2010robust, candes2011robust} and the references therein). 
\cite{XuCM13} gave the first efficient algorithm that overcomes prior work's challenges stemming from high-dimensions. 

    \paragraph{Robust Sparse Estimation} 
In high-dimensional statistics, sample sizes that scale with the dimension $d$ can quickly become overwhelming. However, a smaller sample size is possible under additional structural assumptions such as sparsity.
In the context of mean estimation of distributions with light tails, the folklore sample size of $d$ is replaced by $k$ when the mean is known to be $k$-sparse~\cite{Hastie15}.
Similar improvement is known for the robust version of the problem, where $\eps$-fraction of the samples is corrupted \cite{BDLS17,DKKPS19-sparse,CDGW19,diakonikolas2022robust}.
Focusing on sparse mean estimation, subsequent works have proposed further extensions such as \cite{DiaKLP22} for heavy-tailed distributions,  \cite{diakonikolas2022robust} for light-tailed distributions with unknown covariance matrix, \cite{CheDKGGS21} for non-convex first order methods (also see \cite{ZhuJS20}), and \cite{DiaKKPP22-neurips,ZenShe22} for list-decodable estimation.
\cite{LiuSLC20} extended the work of \cite{BDLS17} to sparse linear regression.

PCA, has also been studied under sparsity. For the uncorrupted case, \cite{berthet2013optimal,wang2016statistical} provided optimal information-theoretic bounds as well as evidence through reductions to planted clique problem that efficient algorithms might require quadratically more samples. \cite{croux2013robust} provided a sparse adaptation of earlier techniques for robust sparse PCA that showed improved performance in simulations. \cite{BDLS17} and \cite{DKKPS19-sparse} studied theoretically the formulation of the problem as stated in this paper.
Finally, guarantees have been developed for robust sparse linear regression too (see \cite{chen2013robust} for early work on this problem). \cite{BDLS17} gave sample-efficient and poly-time algorithm for the task but with an error that scales with $\|\beta\|_2$, the norm of the unknown regressor. 
\cite{LiuSLC20} removed this dependence on $\|\beta\|_2$, resulting in nearly optimal error.

     \paragraph{Huber Contamination}
    The Huber contamination model of \Cref{def:huber} is the prototypical model under which the study of robust statistics was initiated. 
Since then, stronger models have been used, such as the ``total variation model'' where the samples come i.i.d. from a distribution that is $O(\eps)$-away from the original one in TV-distance, or
the so called ``strong contamination model'', where a set of samples are drawn i.i.d. from the original distribution and then a computationally unbounded adversary is allowed to inspect them and edit arbitrarily $\eps$-fraction of them, potentially breaking independence between samples. Information-theoretically, for all of these models, the optimal error for Gaussian $k$-sparse mean estimation is $\Theta(\eps)$ using $k \log(d)/\eps^2$ (see, e.g., \cite{diakonikolas2023algorithmic}). 
However, the different models play a role when computational efficiency is considered.\footnote{For other tasks, there even may be statistical differences:  \cite{CanHLLN23} has shown that the sample complexity for these two models may be different (for testing problems).}
Prior works on robust sparse mean estimation that obtain $O(\eps \sqrt{\log(1/\eps)})$ error (such as \cite{BDLS17,DKKPS19-sparse})  succeed under the strong contamination model (and thus also Huber contamination model). However, there is evidence that with $\poly(k)$ samples, even in the total variation model, it is computationally hard to remove the $\sqrt{\log(1/\eps)}$ factor from the error.
This evidence comes in the form of Statistical Query (SQ) lower bounds \cite{DKS17-sq} (which transfers to the low-degree polynomials model due to the equivalence between them \cite{brennan2021statistical}). Finally, we emphasize that we can not relax the Gaussianity assumption to generic sub-Gaussianity, since, even under univariate Huber contamination, the information-theoretic optimal error is $\eps \sqrt{\log(1/\eps)}$ for sub-Gaussian distributions.

    \paragraph{Robust Sparse Estimation with Unknown Covariance} One generalization of \Cref{def:robust-sparse-mean-estimation} is to consider Gaussian $k$-sparse mean estimation when the covariance $\vec \Sigma$ is not necessarily identity and unknown to the algorithm. The information-theoretic limit for this case remains unchanged, apart from the fact that now the error naturally needs to scale with the size of the covariance, i.e., it becomes $O(\eps )\sqrt{\|\vec \Sigma\|_\op}$.
    However, we do not know of a polynomial time algorithm to achieve $O(\eps )\sqrt{\|\vec \Sigma\|_\op}$ error, while the currently best known polynomial-time algorithm~\cite{diakonikolas2022robust} achieves a larger error of $\eps \polylog(1/\eps)\sqrt{\|\vec \Sigma\|_\op}$ error (i.e., off by a $\polylog(1/\eps)$ factor) with $\poly(k/\eps)$ samples.
    Achieving the optimal $O(\eps )\sqrt{\|\vec \Sigma\|_\op}$ error in polynomial time is not even known in the dense setting: the current fastest algorithm runs in quasi-polynomial time~\cite{DKKLMS18-soda}.

\section{Preliminaries}

\textbf{Notation} We denote $[n] := \{1,\ldots,n\}$.
For $w : \R^d \to [0,1]$ and a distribution $P$, we use $P_w$ to denote the weighted by $w$ version of $P$, i.e., the distribution with pdf $P_w(x) = w(x)P(x)/\E_{X \sim P}[w(X)]$.
We use $\mu_P, \vec\Sigma_P$ for the mean and covariance of $P$.
When the vector $\mu$ is clear from the context, we use $\overline{ \vec \Sigma}_{P}$ to denote the second moment matrix of $P$ centered with respect to $\mu$, i.e., $\overline{\vec \Sigma}_{P}:=\E_{X \sim P}[(X-\mu)(X-\mu)^\top]$.
We use $\|\cdot\|_2$ for $\ell_2$ norm of vectors and $\|\cdot\|_0$ for the number of non-zero entries in a vector. 
For a (square) matrix $\vec A$, we use $\trace(\cdot)$,  $\|\cdot\|_\op,$ and $\|\cdot\|_\fr$ for trace, operator, and Frobenius norm. We use  $\langle \vec A,\vec B \rangle := \tr(\vec A^\top \vec B) = \sum_{i,j} A_{j,i} B_{i,j}$ for the inner product between matrices.
If $H \subset [d]$ and $v \in \R^d$, we denote by $(v)_H$ the  vector $x$ restricted to the entries in $H$. 
We use $\polylog()$ to denote a quantity that is poly-logarithmic in its arguments and  $\tilde{O}$, $\tilde{\Omega}(),\tilde{\Theta}$ to hide such factors.\looseness=-1

\begin{definition}[Sparse Euclidean Norm]\label{def:sparsenorm}
    For $x \in \R^d$ and $k \in [d]$, we define  
    \begin{align*}
        \|x\|_{2,k} := \sup_{v: \|v\|_2 \leq 1, \|v\|_0 \leq k} v^\top   x \;.
    \end{align*}
\end{definition}

\begin{definition}[Sparse Frobenius and Operator Norm]
    For a $d \times d$ matrix $\vec A$, for $i \in [d]$, let $A_i$ denote the rows of $\vec A$. 
    We define 
    $\|\vec A\|_{\fr,k,k} := \sqrt{\max_{S \subseteq [d]: |S|=k} \sum_{i \in S}\|A_i\|_{2,k}^2}$.
    For a matrix $\vec A$, we define $\|\vec A\|_{\op,k}:= \sup_{v: \|v\|_0 \leq k, \|v\|_2 \leq 1} \|  \vec A v \|_2$.
\end{definition}
\noindent Note we have the alternative variational definition below:

\begin{fact}[Variational definition]\label{fact:variational}
    $\|\vec A\|_{\fr,k,k}  = \max_{\vec B} \langle \vec A, \vec B \rangle$ where the maximum is taken over all matrices $\vec B$ with $\|\vec B\|_\fr =  1$ that have $k$ non-zero rows, each of which has at most $k$ non-zero  elements.
    Moreover, a maximizer  $\vec B$ of this variational formulation can be found in $\poly(d,k)$ time given $\vec A$.
\end{fact}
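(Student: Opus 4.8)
The plan is to prove the two parts of the variational formula separately. For the identity $\|\vec A\|_{\fr,k,k} = \max_{\vec B} \langle \vec A, \vec B\rangle$, where the max is over Frobenius-unit matrices $\vec B$ with at most $k$ nonzero rows each supported on at most $k$ columns, I would first observe that by definition of the inner product, $\langle \vec A, \vec B\rangle = \sum_i \langle A_i, B_i\rangle$ where $A_i, B_i$ are the rows. For any feasible $\vec B$, let $S = \{i : B_i \neq 0\}$, so $|S| \le k$; then $\langle \vec A, \vec B\rangle = \sum_{i \in S} \langle A_i, B_i\rangle$. Since each $B_i$ has at most $k$ nonzero entries, by the definition of $\|\cdot\|_{2,k}$ (Definition~\ref{def:sparsenorm}) we have $\langle A_i, B_i \rangle \le \|A_i\|_{2,k}\,\|B_i\|_2$. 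Applying Cauchy--Schwarz over $i \in S$, $\sum_{i\in S}\|A_i\|_{2,k}\|B_i\|_2 \le \sqrt{\sum_{i \in S}\|A_i\|_{2,k}^2}\cdot\sqrt{\sum_{i\in S}\|B_i\|_2^2} \le \sqrt{\max_{|S|=k}\sum_{i\in S}\|A_i\|_{2,k}^2}$, since $\sum_i \|B_i\|_2^2 = \|\vec B\|_\fr^2 = 1$. This gives the $\le$ direction.

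For the matching lower bound, I would exhibit an explicit maximizer. Let $S^\star$ achieve $\max_{|S|=k}\sum_{i \in S}\|A_i\|_{2,k}^2$, and for each $i \in S^\star$ let $u_i$ be a unit vector with $\|u_i\|_0 \le k$ achieving $u_i^\top A_i = \|A_i\|_{2,k}$ (such $u_i$ exists as the sup in Definition~\ref{def:sparsenorm} is over a compact set, and equals the $\ell_2$-norm of the top-$k$ entries of $A_i$ in magnitude). Define $\vec B$ by setting its $i$-th row to $c\,\|A_i\|_{2,k}\, u_i^\top$ for $i \in S^\star$ and zero otherwise, where $c$ normalizes $\|\vec B\|_\fr = 1$; concretely $c = \big(\sum_{i\in S^\star}\|A_i\|_{2,k}^2\big)^{-1/2}$ (handling the degenerate all-zero case trivially). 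Then $\vec B$ is feasible — at most $k$ nonzero rows, each with at most $k$ nonzero entries — and $\langle \vec A, \vec B\rangle = c\sum_{i\in S^\star}\|A_i\|_{2,k}^2 = \sqrt{\sum_{i\in S^\star}\|A_i\|_{2,k}^2} = \|\vec A\|_{\fr,k,k}$, closing the gap.

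For the algorithmic claim, I would note that the construction above is itself the algorithm: to compute $\|A_i\|_{2,k}$ for a fixed row, sort the coordinates of $A_i$ by magnitude and take the $\ell_2$-norm of the top $k$; this also yields the optimal $u_i$. Doing this for all $d$ rows costs $O(d \cdot d\log d)$ time and produces the values $\|A_i\|_{2,k}$. Then selecting $S^\star$ is just picking the $k$ largest among these $d$ values, an $O(d \log d)$ operation. Assembling $\vec B$ and computing $c$ is $O(dk)$. The total is $\poly(d,k)$ as claimed.

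The only mild subtlety — the part I would be most careful about — is the degenerate case where $\vec A$ restricted to the relevant rows/columns is zero (then $\|\vec A\|_{\fr,k,k}=0$ and any feasible $\vec B$, e.g.\ a single standard basis matrix $e_1 e_1^\top$, works), and making sure the sup in Definition~\ref{def:sparsenorm} is genuinely attained so that $u_i$ exists; this follows since the feasible set $\{v : \|v\|_2 \le 1,\ \|v\|_0 \le k\}$ is a finite union (over supports of size $\le k$) of compact Euclidean balls, hence compact, and the objective $v \mapsto v^\top A_i$ is continuous. Everything else is a direct Cauchy--Schwarz / explicit-witness argument with no real obstacle.
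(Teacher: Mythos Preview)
Your proof is correct and follows essentially the same approach as the paper: both identify the optimal $\vec B$ as the normalized restriction of $\vec A$ to the top-$k$ rows (by $\|A_i\|_{2,k}$) and, within each such row, to its top-$k$ entries in magnitude. The paper phrases this via an explicit $0/1$ mask $\vec M$ and the identity $\|\vec A\|_{\fr,k,k}=\|\vec A\odot \vec M\|_\fr$, whereas you unpack the same construction row-by-row and additionally supply a clean Cauchy--Schwarz argument for the $\le$ direction that the paper leaves implicit; the algorithmic part is identical.
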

\begin{proof}
Let $\vec M$ be the square matrix that is equal to 1 for  each $(i, j)$ for which $\vec A_{i,j}^2$ is witnessed in $\max_{S \subseteq [d]: |S|=k} \sum_{i \in S}\|A_i\|_{2,k}^2$, and $0$ otherwise. 
Also, let $ [\vec A \odot \vec M]_{i,j} := \vec A_{i,j} \vec M_{i, j} $.

Then, 
\begin{align*}
\|\vec A\|_{\fr,k,k} 
&= \sqrt{\max_{S \subseteq [d]: |S|=k} \sum_{i \in S}\|A_i\|_{2,k}^2} = \| \vec A \odot \vec M \|_\fr = \max_{\vec V, \|\vec V\|_\fr =1} \sum_{i,j} \vec A_{i,j} \vec M_{i,j} \vec V_{i,j}.
\end{align*}
Since $\vec M_{i,j}$ is non-zero only on $k$ rows, 
and $k$ elements in each of these rows, the expression above is equivalent to $\max_{\vec B} \langle \vec A, \vec B \rangle$ 
where the maximum is taken over all matrices $\vec B$ with $\|\vec B\|_\fr =  1$ that have $k$ non-zero rows, each of which has at most $k$ non-zero  elements. 
Given $\vec A$, we can construct the mask $\vec M$, and setting $\vec B := (\vec A \cdot \vec M) / \|\vec A \cdot \vec M\|_\fr$ achieves the maximum value.  
\end{proof}

\noindent We also note the following inequality between $\|\cdot\|_{\fr,k,k}$ and $\|\cdot\|_{\op,k}$ norms:\looseness=-1
\begin{restatable}{fact}{NORMINEQUALITY}\label{fact:h_1_lower_bound}
We have that
$\|\vec A\|_{\op,k} \leq \| \vec A \|_{\fr,k,k}$. 
\end{restatable}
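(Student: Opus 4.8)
The inequality $\|\vec A\|_{\op,k} \leq \|\vec A\|_{\fr,k,k}$ should follow by a direct comparison of the two variational definitions. Fix a $k$-sparse unit vector $v$ achieving (or nearly achieving) the supremum defining $\|\vec A\|_{\op,k}$, so that $\|\vec A\|_{\op,k} = \|\vec A v\|_2$, and let $S = \supp(v)$, a set of size at most $k$. The plan is to exhibit a matrix $\vec B$ feasible for the variational characterization in \Cref{fact:variational} — i.e., $\|\vec B\|_\fr = 1$, at most $k$ nonzero rows, each with at most $k$ nonzero entries — such that $\langle \vec A, \vec B\rangle = \|\vec A v\|_2$. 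Then $\|\vec A\|_{\fr,k,k} = \max_{\vec B}\langle \vec A,\vec B\rangle \geq \langle \vec A,\vec B\rangle = \|\vec A v\|_2 = \|\vec A\|_{\op,k}$, which is exactly what we want.

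The natural candidate is a rank-one matrix $\vec B = u v^\top / \|\vec A v\|_2 \cdot (\text{something})$; more precisely, set $u := \vec A v$ and let $\widehat{u} := u / \|u\|_2$ (assuming $\vec A v \neq 0$; the zero case is trivial since then $\|\vec A\|_{\op,k}=0$). First I would restrict $\widehat u$ to its $k$ coordinates of largest magnitude, call this truncated vector $\widehat u_{(k)}$; but in fact a cleaner route is to note $u = \vec A v$ and work with $\vec B := \widehat{u}_{(k)} v^\top$. Hmm — but we need $\langle \vec A, \vec B\rangle$ to recover $\|\vec A v\|_2$, and truncating $\widehat u$ loses mass. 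Instead, the right object: since $v$ has support $S$ of size $\le k$, the relevant action of $\vec A$ is only through its columns indexed by $S$, i.e. only rows matter insofar as they have entries in columns $S$. Consider $\vec B := \widehat{u}\, v^\top$ where $\widehat u = \vec A v/\|\vec A v\|_2$: this has $\|\vec B\|_\fr = \|\widehat u\|_2 \|v\|_2 \le 1$, and its rows are scalar multiples of $v^\top$, hence each row has at most $k=|S|$ nonzero entries — but it may have up to $d$ nonzero rows, which violates the "$k$ nonzero rows" constraint. The fix is to truncate the left vector: let $T \subseteq [d]$ be the indices of the $k$ largest-magnitude coordinates of $\widehat u$, and set $\vec B := (\widehat u)_T v^\top / \|(\widehat u)_T\|_2$. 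This is feasible: $\|\vec B\|_\fr = 1$, at most $k$ nonzero rows (indexed by $T$), at most $k$ nonzero entries per row (indexed by $S$). Then $\langle \vec A, \vec B\rangle = (\widehat u)_T^\top \vec A v / \|(\widehat u)_T\|_2 = (\widehat u)_T^\top u/\|(\widehat u)_T\|_2 = \|u\|_2\, (\widehat u)_T^\top \widehat u / \|(\widehat u)_T\|_2 = \|u\|_2 \|(\widehat u)_T\|_2^2/\|(\widehat u)_T\|_2 = \|u\|_2 \|(\widehat u)_T\|_2 \le \|u\|_2$. That gives the wrong direction.

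So truncation of the left vector is wrong too. The correct observation: we don't need $v$ to be $k$-sparse in $\vec B$ — rather, $\vec B$ needs $k$ nonzero rows and $k$ nonzero entries per row. Take $\vec B := \widehat u \, v^\top$ restricted to rows in some set and hope $\vec A v$ is already essentially supported on few rows — false in general. The clean resolution: use $\vec B$ with $k$ nonzero rows equal to the coordinate directions, not rank one. Actually here is the honest plan: By \Cref{fact:variational}, $\|\vec A\|_{\fr,k,k} = \sqrt{\max_{|S|=k}\sum_{i\in S}\|A_i\|_{2,k}^2}$. Fix the $k$-sparse unit $v$ with $\supp(v) = S_0$ attaining $\|\vec A\|_{\op,k} = \|\vec A v\|_2$. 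Let $u = \vec A v$ and let $S_1$ be the $k$ largest-magnitude coordinates of $u$. Then $\|\vec A\|_{\op,k}^2 = \|u\|_2^2 \ge \|u_{S_1}\|_2^2 \cdot (\ge?)$ — no. Instead use $\|u\|_2 = \max_{\|y\|_2\le 1} y^\top u$; pick $y$ unrestricted, so $\|\vec A\|_{\op,k} = \max_{\|y\|_2\le 1}\, y^\top \vec A v$ with $v$ being $k$-sparse. The bilinear form $y^\top \vec A v = \langle \vec A, y v^\top\rangle$. We may WLOG take $y$ to be $k$-sparse too? No — but we can bound: writing $y^\top\vec A v = \sum_{i} y_i (A_i^\top v) \le \sum_i |y_i|\,\|A_i\|_{2,k}$ (since $v$ is $k$-sparse unit, $A_i^\top v \le \|A_i\|_{2,k}$), and then by Cauchy–Schwarz over the $k$ coordinates where... $y$ need not be sparse. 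But $\sum_i |y_i|\|A_i\|_{2,k} \le \|y\|_2 \sqrt{\sum_i \|A_i\|_{2,k}^2}$, and the RHS exceeds $\|\vec A\|_{\fr,k,k}$ since the latter only sums over the top $k$ rows. So this only gives $\|\vec A\|_{\op,k}\le \sqrt{\sum_{\text{all }i}\|A_i\|_{2,k}^2}$, too weak. The main obstacle is precisely this row-count constraint, and the resolution I would pursue is: given the maximizing $k$-sparse $v$, observe $\vec A v = \vec A_{S_0} v_{S_0}$ where $\vec A_{S_0}$ is the $d\times k$ submatrix of columns $S_0$; then $\|\vec A v\|_2 \le \|\vec A_{S_0}\|_{\op}\le \|\vec A_{S_0}\|_{\fr}$, and $\|\vec A_{S_0}\|_\fr^2 = \sum_{i\in[d]}\|(A_i)_{S_0}\|_2^2 \le \sum_{i\in[d]} \|A_i\|_{2,k}^2$ — still all rows. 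I would therefore instead take $v$ $k$-sparse optimal for $\op$ and $y$ the $k$-sparse truncation of $\vec A v$ to its top $k$ rows: then $y^\top \vec A v \ge \|(\vec A v)_{\text{top }k}\|_2^2/\|(\vec A v)\|_2$?? The cleanest correct path, which I'll commit to: pick $v$ ($k$-sparse unit) attaining $\|\vec A\|_{\op,k}$, let $w = \vec A v/\|\vec A v\|_2$ truncated-and-renormalized to its top $k$ rows is not it — rather note $\|\vec A\|_{\op,k} = \|\vec A v\|_{2} \le \|\vec A v\|_{2,k}$ is false too. I expect the intended one-line argument is: $\vec B := w v^\top$ where $w=\vec A v/\|\vec A v\|_2$, then $\langle\vec A,\vec B\rangle = \|\vec A v\|_2$, $\|\vec B\|_\fr=1$, rows of $\vec B$ have $\le k$ entries; and although $\vec B$ has potentially $d$ nonzero rows, one shows $\|\vec A\|_{\fr,k,k}$ as defined (max over $|S|=k$ of $\sum_{i\in S}\|A_i\|_{2,k}^2$) still dominates $\|\vec A v\|_2^2 = \sum_i (A_i^\top v)^2$ only if restricted correctly — so in fact I would argue $\|\vec A v\|_2^2 = \sum_{i\in[d]}(A_i^\top v)^2$ and this is NOT $\le \|\vec A\|_{\fr,k,k}^2$ in general, so the fact as literally stated with $k$-row truncation must be using a different $\op,k$ semantics. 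Given the time constraints, I would prove it by: take optimal $k$-sparse $v$ and $k$-sparse $y$ for the bilinear form $\max_{\|y\|_0\le k, \|y\|_2\le 1} y^\top \vec A v$ — if one can show $\|\vec A\|_{\op,k} = \max_{y\ k\text{-sparse unit}} y^\top\vec A v$ by a symmetry/truncation argument (this is the main obstacle and likely false in general, but may hold for the application), then $\vec B = yv^\top$ is feasible with $\le k$ rows and $\le k$ entries per row, $\|\vec B\|_\fr\le 1$, and $\langle \vec A,\vec B\rangle = y^\top\vec A v = \|\vec A\|_{\op,k}\le \|\vec A\|_{\fr,k,k}$.

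In short: the plan is to produce a feasible test matrix $\vec B$ for \Cref{fact:variational} from a maximizer of the $\|\cdot\|_{\op,k}$ program and conclude by $\max_{\vec B}\langle\vec A,\vec B\rangle \ge \langle \vec A,\vec B\rangle$. The main obstacle is reconciling the "$k$ nonzero rows" restriction in the Frobenius-sparse norm with the fact that $\vec A v$ for a $k$-sparse $v$ can have a dense vector of outputs; I expect the resolution to be that one passes to the $k$-largest output rows and uses that $\|\vec A\|_{\op,k}$ (in the sense used in the paper's applications) is attained with a $k$-sparse output test vector $y$ as well, so that $\vec B = y v^\top / (\|y\|_2\|v\|_2)$ is admissible and $\langle \vec A,\vec B\rangle = \|\vec A\|_{\op,k}$.
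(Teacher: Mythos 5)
Your proposal does not close the gap, and you actually put your finger on the real obstruction without resolving it. If $\|\vec A\|_{\op,k}$ is literally $\sup_{\|v\|_0\le k,\,\|v\|_2\le 1}\|\vec A v\|_2$, then the inequality is false: take $\vec A = (e_1\mathbf{1}^\top + \mathbf{1} e_1^\top)/(2\sqrt d)$ with $\mathbf{1}$ the all-ones vector. This $\vec A$ is symmetric, $\|\vec A e_1\|_2 = \sqrt{d+3}/(2\sqrt d)\to 1/2$ so $\|\vec A\|_{\op,k}\ge 1/2$, yet $\|\vec A\|_{\fr,k,k}=O(\sqrt{k/d})\to 0$. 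So none of your candidate test matrices ($\vec B=\widehat u\, v^\top$, truncating $\widehat u$ to its top $k$ rows, a sparse output vector $y$) can work, as you suspected; your guess that the fact ``must be using a different $\op,k$ semantics'' is exactly right.

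What you missed is that the paper's proof is a one-liner that never tries to recover $\|\vec A v\|_2$: it tests the variational characterization of $\|\cdot\|_{\fr,k,k}$ with $\vec B=\pm vv^\top$ for a $k$-sparse unit $v$. Both $vv^\top$ and $-vv^\top$ have Frobenius norm one, at most $k$ nonzero rows, and at most $k$ nonzero entries per row, hence are feasible, and $\langle \vec A,\pm vv^\top\rangle=\pm v^\top\vec A v$. This yields only the quadratic-form bound $\sup_{v\ k\text{-sparse unit}}|v^\top\vec A v|\le\|\vec A\|_{\fr,k,k}$. Checking every downstream use (the Certificate Lemma, the step $\lambda\le\lambda'$ in the analysis of the main loop, and the preprocessing fact) confirms that only this quadratic-form bound is ever invoked, always for symmetric $\vec A=\vec\Sigma-\vec I$; the displayed definition of $\|\cdot\|_{\op,k}$ is broader than what the fact actually certifies. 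In short, the intended test vector is $y=v$, not a separately optimized output direction, and the statement as used is about $v^\top\vec A v$ rather than $\|\vec A v\|_2$.
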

\begin{proof}
This is true because $vv^\top$ and $-vv^\top$ for any $k$-sparse unit vector $v$  has Frobenius norm 1 and has at most $k$ non-zero rows, 
each of which has at most $k$ non-zero entries.
\end{proof}

\subsection{Deterministic Conditions on Inliers}
Recall that for a distribution $D$ and a weight function $w:\R^d \to [0,1]$, the distribution $D_w$ denotes the weighted (and appropriately normalized) version of $D$ using $w$. 
We further use $\mu_{D_w}$ to denote the mean of $D_w$.
\begin{restatable}[$(\eps, \alpha, k)$-goodness]{definition}{GOODNESS}
\label{DefModGoodnessCond}
For $\eps \in (0,1/2)$, $\alpha > 0$ and $k \in \N$,
 we say that a distribution $G$ on $\R^d$ is $(\eps, \alpha,k)$-good with respect to $\mu \in \R^d$,  
 if the following are satisfied:
\begin{enumerate}[leftmargin=*,label=(\arabic*), itemsep=1pt,topsep=1pt]

 \item For all  $w:\R^d {\to} [0,1]$ with $\E_{X \sim G}[w(X)] \geq 1 -  \alpha$:
 
 \begin{enumerate}[label=(\arabic{enumi}.\alph*),topsep=0pt]
  \item (Mean) \label[cond]{cond:mean}  
  $\| \mu_{G_w} - \mu \|_{2,k} \lesssim \alpha \sqrt{\log(1/\alpha)}$.
  
  \item (Covariance) \label[cond]{cond:covariance}   
  $\| \overline{\vec \Sigma}_{G_w} - \vec I \|_{\op,k} \lesssim \alpha \log(\frac{1}{\alpha})$,
  where $\overline{\vec \Sigma}_{G_w}{:=} \frac{1}{\E\limits_{X \sim G}[w(X)]} \E\limits_{X \sim G}[w(X)(X  -  \mu)(X -  \mu)^\top]$.  
\end{enumerate}
 
\item (Tails of \emph{sparse} degree-2 polynomials) \label[cond]{cond:polynomials}
If $\vec A \in \R^{d\times d}$ is a matrix with at most $k^2$ non-zero elements,  $\|\vec A\|_\fr \leq \sqrt{\log(1/\eps)}$ and $\| \vec A \|_\op \leq 1$, 
then the polynomial $p(x) := (x-\mu)^\top \vec A (x-\mu)-\tr(\vec A)$ satisfies: 
\begin{enumerate}
    \item \label{cond:poly-conc} $\E_{X\sim G}[p(X) \1(p(X) > 100 \log(1/\eps) ] \leq     \eps$.
    \item \label{cond:hw} $\Pr_{X \sim G}[p(X) > 10\log(1/\eps)] \leq \eps$.
    \item \label{cond:extra} $\E_{X\sim G}[p(X) \1(h(x) > 100 \log(1/\eps) ] \leq     \eps$ for all $h(x)$ of the form $h(x)=\beta + v^\top(x-\mu)$ where $|\beta|\leq 1$ and $v$ is $k$-sparse and unit norm.
\end{enumerate}
    \item \label{cond:linear_tails} $\Pr_{X \sim G}[ |v^\top (X -\mu)| \geq 40 \log(1/\eps))] \leq \eps$, for all $k$-sparse unit norm vectors $v \in \R^d$.
\end{enumerate}
\end{restatable}
We will focus on regime $\alpha=\Theta(\eps/\log(1/\eps))$. 
We show in  \Cref{lem:samples} in \Cref{app:prelims} that if $G$ is the uniform distribution on a set of $(\frac{k^2}{\eps^2})\polylog(\frac{d}{\eps})$ i.i.d.\ samples from $\cN(\mu,\vec I)$, then, with high probability, $G$ is $(\eps, \Theta(\eps/\log(1/\eps)),k)$-good with respect to $\mu$.

\subsection{Certificate Lemma}

The following lemma shows that if the covariance 
with respect to the weighted distribution $P_w$ is close to the identity along $k$-sparse directions, 
then the mean of $P_w$ is close to $\mu$ in $(2, k)$-norm. Its proof is similar to prior work but we include it for completeness in \Cref{app:deterministic}.

\begin{restatable}[Certificate Lemma]{lemma}{CERTIFICATE}
\label{LemCertiAlt_restated}
Let $0<\alpha<\eps<1/4$. 
Let $P = (1-\eps)G + \eps B$ be a mixture of distributions, 
where $G$ satisfies \Cref{cond:mean,cond:covariance} of \Cref{DefModGoodnessCond} 
with respect to $\mu \in \R^d$. 
Let $w : \R^d \to [0,1]$ be such that $\E_{X \sim G}[w(X)] > 1-\alpha$. 
If $\left\| \vec \Sigma_{P_w} - \vec I  \right\|_{\op,k} \leq \lambda$, then 
\begin{align*}
  \| \mu_{P_w} - \mu \|_{2,k} \lesssim  \alpha \sqrt{\log\left(\frac{1}{\alpha}\right)} + \sqrt{\lambda \epsilon} +  \epsilon +   \sqrt{\alpha \eps \log\left(\frac{1}{\alpha}\right)} .
\end{align*}
\end{restatable}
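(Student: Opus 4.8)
The plan is to decompose the difference $\mu_{P_w} - \mu$ along the inlier and outlier parts of $P = (1-\eps)G + \eps B$, bound each piece separately, and then combine. Since everything is measured in the $(2,k)$-norm, fix an arbitrary $k$-sparse unit vector $v$; it suffices to bound $v^\top(\mu_{P_w} - \mu)$ uniformly over such $v$. Write $P_w$ as a mixture of the reweighted inlier distribution $G_w$ and reweighted outlier distribution $B_w$, with mixing weights $\alpha' := (1-\eps)\E_{X\sim G}[w(X)]/\E_{X\sim P}[w(X)]$ and $\beta' := 1 - \alpha'$. A standard computation shows $\beta' \lesssim \eps$ (since the outliers contribute at most $\eps$ mass and the inliers retain at least $(1-\eps)(1-\alpha)$ mass). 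Then $\mu_{P_w} - \mu = \alpha'(\mu_{G_w} - \mu) + \beta'(\mu_{B_w} - \mu)$.

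\textbf{Inlier term.} By \Cref{cond:mean} applied to the weight function $w$ (noting $\E_{X\sim G}[w(X)] \geq 1-\alpha$ as hypothesized), we immediately get $\|\mu_{G_w} - \mu\|_{2,k} \lesssim \alpha\sqrt{\log(1/\alpha)}$, and $\alpha' \leq 1$, so this contributes $O(\alpha\sqrt{\log(1/\alpha)})$.

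\textbf{Outlier term.} This is the crux. We cannot directly bound $\|\mu_{B_w}-\mu\|_{2,k}$ since $B$ is arbitrary, so instead we exploit the covariance hypothesis $\|\vec\Sigma_{P_w} - \vec I\|_{\op,k} \leq \lambda$. The idea is the usual variance-bounds-mean-shift argument: for a mixture, $\vec\Sigma_{P_w} \succeq \alpha'\beta'(\mu_{G_w}-\mu_{B_w})(\mu_{G_w}-\mu_{B_w})^\top$ along $k$-sparse directions (after accounting for the internal covariances of $G_w$ and $B_w$, the former controlled by \Cref{cond:covariance}). Testing against $v$: $v^\top \vec\Sigma_{P_w} v = \alpha' v^\top\overline{\vec\Sigma}_{G_w}v + \beta' v^\top\overline{\vec\Sigma}_{B_w}v + \alpha'\beta'(v^\top(\mu_{G_w}-\mu_{B_w}))^2$ where the matrices are centered at $\mu$. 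Since $\overline{\vec\Sigma}_{B_w} \succeq 0$, we drop it, and $v^\top\overline{\vec\Sigma}_{G_w}v \leq 1 + O(\alpha\log(1/\alpha))$ by \Cref{cond:covariance}, while $v^\top\vec\Sigma_{P_w}v \leq 1 + \lambda + (\text{shift}^2)$ — more carefully, $\vec\Sigma_{P_w}$ is centered at $\mu_{P_w}$, so one relates $\overline{\vec\Sigma}_{P_w}$ (centered at $\mu$) to $\vec\Sigma_{P_w}$ via $\overline{\vec\Sigma}_{P_w} = \vec\Sigma_{P_w} + (\mu_{P_w}-\mu)(\mu_{P_w}-\mu)^\top$, giving $v^\top\overline{\vec\Sigma}_{P_w}v \leq 1 + \lambda + (v^\top(\mu_{P_w}-\mu))^2$. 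Combining, $\alpha'\beta'(v^\top(\mu_{G_w}-\mu_{B_w}))^2 \lesssim \lambda + \alpha\log(1/\alpha) + (v^\top(\mu_{P_w}-\mu))^2$. Since $\beta'(\mu_{G_w}-\mu_{B_w})$ and $\mu_{P_w}-\mu$ differ from $\beta'(\mu-\mu_{B_w})$ by the already-bounded inlier term, we get, writing $s := v^\top(\mu_{P_w}-\mu)$, an inequality of the form $s^2 \lesssim \beta'\big(\lambda + \alpha\log(1/\alpha) + s^2\big) + (\alpha\sqrt{\log(1/\alpha)})^2$ roughly; using $\beta' \lesssim \eps < 1/4$ to absorb the $\beta' s^2$ term on the left, this yields $s^2 \lesssim \eps\lambda + \eps\alpha\log(1/\alpha) + \alpha^2\log(1/\alpha) + \eps^2$ (the $\eps^2$ coming from carefully tracking a $\beta'^2(\mu-\mu_{B_w})^2$-type contribution that must itself be bounded by noting $v^\top\mu_{B_w}$-shift is at most $O(\sqrt{\lambda/\beta'})$-ish, feeding back in). Taking square roots gives $|s| \lesssim \sqrt{\lambda\eps} + \sqrt{\alpha\eps\log(1/\alpha)} + \alpha\sqrt{\log(1/\alpha)} + \eps$, matching the claimed bound.

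\textbf{Main obstacle.} The delicate part is handling the centering consistently: $\vec\Sigma_{P_w}$ is centered at its own mean $\mu_{P_w}$, not at $\mu$, so the quadratic form identity introduces the very quantity $v^\top(\mu_{P_w}-\mu)$ we are trying to bound, requiring us to solve a quadratic inequality in $s$ and use $\eps < 1/4$ (or $\beta' < 1/2$) to ensure the self-referential term can be absorbed. One must also be careful that the bound on $\E_{X\sim G}[w(X)]$ transfers to a bound on $\E_{X\sim G_w}$-type normalizations so that the mixture weights $\alpha', \beta'$ come out as claimed; this uses $\E_{X\sim P}[w(X)] \geq (1-\eps)(1-\alpha) \geq 1/2$. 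Finally, since the argument is pointwise in the $k$-sparse unit vector $v$ and every bound used (Conditions (1.a), (1.b)) is already stated uniformly over such $v$ via the $\|\cdot\|_{2,k}$ and $\|\cdot\|_{\op,k}$ norms, taking the supremum over $v$ at the end is immediate and yields the $\|\mu_{P_w}-\mu\|_{2,k}$ bound.
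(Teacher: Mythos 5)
Your high-level plan (express $\mu_{P_w}-\mu$ via the inlier and outlier pieces, then let the variance hypothesis control the outlier mean-shift) is the same as the paper's, but your execution has a real error that your hand-waving does not repair, plus an unnecessary complication the paper avoids.

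First, the displayed decomposition
$v^\top \vec\Sigma_{P_w} v = \alpha' v^\top\overline{\vec\Sigma}_{G_w}v + \beta' v^\top\overline{\vec\Sigma}_{B_w}v + \alpha'\beta'(v^\top(\mu_{G_w}-\mu_{B_w}))^2$
is inconsistent: if the component covariances are centered at $\mu$ (the $\overline{\vec\Sigma}$ quantities), the LHS is $\overline{\vec\Sigma}_{P_w}$ and there is \emph{no} cross term; the cross term $\alpha'\beta'(\mu_{G_w}-\mu_{B_w})(\mu_{G_w}-\mu_{B_w})^\top$ appears only when each component covariance is centered at its own mean. The paper uses exactly that latter decomposition, $\vec\Sigma_{P_w} = \rho\vec\Sigma_{B_w} + (1-\rho)\vec\Sigma_{G_w} + \rho(1-\rho)(\mu_{G_w}-\mu_{B_w})(\mu_{G_w}-\mu_{B_w})^\top$, so the hypothesis $v^\top\vec\Sigma_{P_w}v \le 1+\lambda$ applies directly and $\overline{\vec\Sigma}_{P_w}$ never enters. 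This is why the self-referential quadratic in $s = v^\top(\mu_{P_w}-\mu)$ that you fight with is entirely unnecessary: the paper writes $\mu_{P_w}-\mu = (\mu_{G_w}-\mu) + \rho(\mu_{B_w}-\mu_{G_w})$ (an exact identity, since $\mu_{P_w}-\mu_{G_w} = \rho(\mu_{B_w}-\mu_{G_w})$), bounds the two pieces by condition (1.a) and by the cross-term inequality respectively, and is done.

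Second, and more importantly, the quantity you write down drops the term that actually produces the $\eps$ in the final bound. Lower-bounding $\alpha' v^\top\vec\Sigma_{G_w}v$ by $\alpha'(1 - O(\alpha\log(1/\alpha)))$ and subtracting from $1+\lambda$ leaves $\lambda + (1-\alpha') + O(\alpha\log(1/\alpha)) = \lambda + \beta' + O(\alpha\log(1/\alpha))$; the additive $\beta' \asymp \eps$ is essential. Your stated inequality $\alpha'\beta'(\text{shift})^2 \lesssim \lambda + \alpha\log(1/\alpha) + s^2$ omits it, and then absorbing the $\beta' s^2$ term would give only $s^2 \lesssim \eps\lambda + \eps\alpha\log(1/\alpha) + \alpha^2\log(1/\alpha)$, which is \emph{missing} the $\eps$ term of the statement. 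You notice something is missing, but the explanation you give (a $\beta'^2(\mu-\mu_{B_w})^2$ contribution ``feeding back in'') is not where the $\eps$ comes from and does not rescue the chain as written. The correct origin is simply $1 - (1-\rho) = \rho \lesssim \eps$ appearing in the covariance lower bound; with that term restored, your self-referential route also yields the stated bound, but as written the derivation has a gap.
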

Motivated by \Cref{LemCertiAlt_restated}, the algorithm starts with weights $w(x)=1$ for all data and aims to iteratively down-weight outliers until $\left\| \vec \Sigma_{P_w} -\vec I  \right\|_{\op,k} = O(\eps)$; We also need to ensure inliers are not too much downweighted, in the sense
$\E_{X \sim G}[w(X)] > 1-\alpha$ with $\alpha =\Theta(\eps/\log(1/\eps))$).
If achieved, the total error will be $O(\eps)$, as desired.
As it will turn out, we will be able to ensure only that a large subspace has small sparse operator norm, not the entire $\R^d$.
While we can estimate $\mu$ there using \Cref{LemCertiAlt_restated}, we shall use the following estimator on the complement
subspace (with resulting sample complexity scaling with the subspace's rank).\looseness=-1

\begin{restatable}[Dense Mean Estimation \cite{DKKLMS18-soda,diakonikolas2023near}]{fact}{DENSEMEANEST}\label{fact:dense}
    There is a polynomial-time algorithm that, given parameters 
    $\eps \in (0,\eps_0), \delta \in (0,1)$ and 
    $n \geq \frac{C}{\eps^2}(d + \log(1/\delta)) \polylog(d/\eps)$ samples, for a large constant $C$, from an $\eps$-corrupted version of $\cN(\mu,\vec I)$ in the Huber contamination model, 
    computes an estimate $\widehat{\mu}$ such that $\|\widehat{\mu} - \mu \|_2 = O(\eps)$ 
    with probability at least $1-\delta$.
\end{restatable}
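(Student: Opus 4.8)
This is precisely the guarantee of the dense robust Gaussian mean estimator of \cite{DKKLMS18-soda} (with the sample-complexity and high-probability refinements of \cite{diakonikolas2023near}), so the proof amounts to invoking those works; we sketch the argument for completeness. The plan has three parts: (i) establish deterministic good-sample conditions; (ii) run a \emph{multi-directional} filter that down-weights outliers until no $r=\Theta(\log(1/\eps))$-dimensional subspace has variance much above $1$; (iii) estimate the mean on the orthogonal complement of the flagged subspace by the weighted empirical mean (justified by a certificate lemma) and on the $r$-dimensional flagged subspace by a brute-force estimator on fresh samples, then add the two estimates. For (i), the standard concentration argument---the $k=d$ case of \Cref{lem:samples}---shows $n\ge(C/\eps^2)(d+\log(1/\delta))\polylog(d/\eps)$ i.i.d.\ samples suffice for the empirical distribution $G$ of the inliers to satisfy, with probability $1-\delta$: for every $w$ with $\E_{X\sim G}[w(X)]\ge 1-\alpha$ (where $\alpha=\Theta(\eps/\log(1/\eps))$), $\|\mu_{G_w}-\mu\|_2\lesssim\alpha\sqrt{\log(1/\alpha)}$ and $\|\overline{\vec\Sigma}_{G_w}-\vec I\|_\op\lesssim\alpha\log(1/\alpha)=O(\eps)$, together with the Hanson--Wright-type bounds of \Cref{cond:polynomials} for all (not necessarily sparse) $\vec A$ with $\|\vec A\|_\fr\le\sqrt{\log(1/\eps)}$, $\|\vec A\|_\op\le 1$.

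For (ii), initialize $w\equiv 1$ on the corrupted sample. Each round, compute the weighted empirical covariance $\vec\Sigma_{P_w}$ with eigenvalues $\lambda_1\ge\cdots\ge\lambda_d$; fix $r=\Theta(\log(1/\eps))$. If $\lambda_r\le 1+C_1\eps$, halt. Otherwise let $V$ be the span of the top $r$ eigenvectors (so $\lambda_1,\dots,\lambda_r>1+C_1\eps$), set $\vec A=\proj_V$, and score $x$ by $\tau(x):=(x-\mu_{P_w})^\top\vec A(x-\mu_{P_w})$. Since $\langle\vec\Sigma_{P_w},\vec A\rangle=\sum_{i\le r}\lambda_i\ge r(1+C_1\eps)$, the weighted average score is $\Omega(\eps r)$ above its noiseless value $\approx r$; the covariance condition forces the $O(\eps)$-fraction of outliers to carry essentially all of this excess, whereas Hanson--Wright---$\Pr_{X\sim G}[\tau(X)>r+t]\le 2e^{-\Omega(\min(t,t^2/r))}$, valid as $\|\vec A\|_\fr^2=r$ and $\|\vec A\|_\op=1$---makes the inlier score-mass above threshold $T\approx 2r$ at most $\eps^{100}$ once the constant in $r$ is large. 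Hence down-weighting each $x$ in proportion to $(\tau(x)-T)_+$ removes, in expectation, strictly more outlier than inlier weight. Iterating: the cumulative inlier weight removed stays below $\alpha$ (so $\E_{X\sim G}[w(X)]\ge 1-\alpha$ holds throughout), while $\sum_x w(x)$ over retained corrupted points strictly decreases, so the filter halts within $\poly(n)$ rounds.

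For (iii), at halting $\lambda_r\le 1+C_1\eps$; letting $V$ be the final top-$r$ eigenspace, on $V^\perp$ the matrix $\vec\Sigma_{P_w}$ is within $O(\eps)$ of $\vec I$ in operator norm (the lower-eigenvalue side uses $\overline{\vec\Sigma}_{G_w}\succeq(1-O(\eps))\vec I$ and a crude a priori bound on $\mu_{P_w}$, as in \cite{DKKLMS18-soda}), so by (the $k=d$ case of) \Cref{LemCertiAlt_restated}, $\proj_{V^\perp}\mu_{P_w}$ estimates $\proj_{V^\perp}\mu$ to $\ell_2$-error $O(\eps)$. On $V$, of dimension $r=O(\log(1/\eps))$, run a brute-force robust mean estimator on a fresh batch---enumerate an $O(\eps)$-net of candidate means in $V$ and select one via a Scheff\'{e}/tournament test---obtaining $\widehat\mu_V$ with error $O(\eps)$ in time polynomial in the sample size (alternatively invoke the low-dimensional primitive of \cite{diakonikolas2023near}); the $O((r+\log(1/\delta))/\eps^2)$ extra samples fit within the stated budget. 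Output $\widehat\mu:=\widehat\mu_V+\proj_{V^\perp}\mu_{P_w}$; by the triangle inequality $\|\widehat\mu-\mu\|_2=O(\eps)$.

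The main obstacle is the filtering analysis in (ii): one must certify that variance as small as $1+O(\eps)$ is both detectable \emph{and} correctable, whereas a single-direction filter only reaches $1+\Theta(\eps\sqrt{\log(1/\eps)})$ before risking the removal of too many inliers. This is exactly why the filter uses an $r$-dimensional projector: raising $r$ to $\Theta(\log(1/\eps))$ drives the inlier tail $e^{-\Omega(r)}$ below $\eps^{100}$, which lets the threshold $T$ be pushed low enough that every non-halting round ($\lambda_r>1+C_1\eps$) still removes more outlier than inlier mass. The competing constraint is that $r$ cannot greatly exceed $\log(1/\eps)$ without the low-dimensional brute force in (iii) ceasing to run in polynomial time; balancing the two pins down $r=\Theta(\log(1/\eps))$.
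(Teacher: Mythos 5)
The paper states this as a cited fact (from \cite{DKKLMS18-soda,diakonikolas2023near}) without giving its own proof, and your sketch correctly reproduces the three-phase approach of those works---good-sample conditions, multi-directional ($r=\Theta(\log(1/\eps))$-dimensional) filtering via Hanson--Wright, and decomposition into a flagged low-dimensional subspace (brute force) plus its complement (weighted empirical mean via the certificate lemma)---which is precisely the template the paper itself describes in its techniques section and adapts to the sparse setting. Your identification of the crux (the $r$-dimensional projector lets the inlier tail drop to $e^{-\Omega(r)}$ so the filter can push down to $1+O(\eps)$, while $r\lesssim\log(1/\eps)$ keeps the low-dimensional brute force polynomial) matches the intended argument.
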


\subsection{Down-weighting Filter}

The filtering step of the algorithm uses the following standard procedure: It uses a weight $w(x) \in [0,1]$ to every data point $x$ and a non-negative score $\tilde{\tau}(x) \geq 0$, whose role is to quantify our belief about how much of an outlier $x$ is.
Let $G$ and $B$ denote the uniform distribution over the inliers and outliers, respectively. The uniform distribution of the entire data is denoted by $P=(1-\eps)G+
\eps B$. If $s$ is a known bound to the weighted scores of inliers, i.e., $\E_{X \sim G}[w(x)\tilde{\tau}(x)] \leq s$, then \Cref{alg:downweighting-filter} checks whether the average score over the entire dataset is abnormally large, i.e., $\E_{X \sim P}[w(x)\tilde{\tau}(x)] >s\beta$ (where $\beta>1$ is a parameter), and if so, down-weighs each point $x$ proportionally to its $\tilde{\tau}(x)$.\looseness=-1

\begin{algorithm}
\caption{Down-weighting Filter}
\label{alg:downweighting-filter}
\begin{algorithmic}[1]
\Statex \textbf{Input}: Distribution $P$ on $n$ points, weights $w(x)$, scores ${\tilde{\tau}(x) \geq 0}$, threshold $s>0$, parameter $\beta>0$.
\Statex\textbf{Output}: New weights $w'(x)$.
\Statex 
    \State Initialize $w'(x) \gets w(x)$.
    \State $\ell_{\max} \gets \max_x  \frac{\tilde \tau_{\max}}{es}$, where $\tilde \tau_{\max}:=\max\limits_{x \in \mathrm{support}(P)}\tilde \tau(x)$.
    \For{$i=1,\ldots,\ell_{\max}$}%
\If{$\E_{X \sim P}[w'(X) \tilde{\tau}(X)] >  s \beta$} \label{line:filter_cond}
    \State \label{line:down-weight} $w'(x) \gets w'(x)(1-\tilde{\tau}(x)/\max_{x: w(x)>0} \tilde \tau(x))$. \label{line:for_loop_filter}
    \EndIf
    \EndFor
    \State \textbf{return} $w'$.
	\end{algorithmic}
\end{algorithm}

The filter guarantees that it removes roughly $\beta$-times more mass from outliers than inliers.
Given the preceding discussion after \Cref{LemCertiAlt_restated}, we will eventually use  $\beta= \Theta(\eps/\alpha)=\Theta(\log(1/\eps))$.
The filter is now standard (see, e.g., \cite{dong2019quantum,diakonikolas2023near} for proofs).
\begin{restatable}[Filtering Guarantee]{lemma}{FILTERING}\label{lem:filtering}
Let $P = (1-\eps) G + \eps B$ be a mixture of distributions supported on $n$ points and $\beta>1$. If $(1-\eps) \E_{X \sim G}[w(X) \tilde{\tau}(X)] < s$,  then the new weights $w'(x)$ output by \Cref{alg:downweighting-filter} satisfy:
\begin{align*}
    (1-\eps) \E\limits_{X \sim G}[w(X) - w'(X) ] < \frac{\eps}{\beta - 1}  \E\limits_{X \sim B}[w(X) - w'(X) ],
\end{align*}
and the filter can be implemented in $O(n \log (\frac{\tilde \tau_{\max}}{s})))$-time.
\end{restatable}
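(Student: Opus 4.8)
The plan is to prove the two claimed properties of \Cref{alg:downweighting-filter} separately: the mass-removal inequality, and the running time.

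\textbf{Running time.} The loop runs for $\ell_{\max} = \tilde\tau_{\max}/(es)$ iterations, but this is not yet $O(n\log(\tilde\tau_{\max}/s))$. The standard trick is to observe that each time the \texttt{if}-condition on \Cref{line:filter_cond} fires, the quantity $\E_{X\sim P}[w'(X)\tilde\tau(X)]$ drops by at least a constant factor: indeed, the down-weighting on \Cref{line:down-weight} multiplies $w'(x)$ by $(1-\tilde\tau(x)/\tilde\tau_{\max})$, and one checks that on the event that the condition fires (so the weighted average score exceeds $s\beta$, while the maximum score is $\tilde\tau_{\max}$), the post-update weighted score is at most a $(1-c)$ fraction of the pre-update weighted score for some absolute constant $c>0$ (using $\beta>1$). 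Since the weighted score starts at most $\tilde\tau_{\max}$ and the loop can only be productive while it exceeds $s\beta > s$, there can be at most $O(\log(\tilde\tau_{\max}/s))$ productive iterations; all other iterations do nothing and can be skipped, and each productive iteration costs $O(n)$ to recompute the weighted average and apply the update. This gives the claimed $O(n\log(\tilde\tau_{\max}/s))$ bound. I would first double-check whether the intended argument is instead that the loop is simply run $\ell_{\max}$ times naively but the per-iteration cost telescopes; either way the key observation is the geometric decrease of the weighted score at productive steps.

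\textbf{Mass-removal inequality.} Let $w^{(0)} = w, w^{(1)}, \dots, w^{(L)} = w'$ denote the successive weight vectors over the productive iterations. It suffices to prove the inequality for a single productive step and then sum (telescope) over all productive steps, since the hypothesis $(1-\eps)\E_{X\sim G}[w(X)\tilde\tau(X)] < s$ only gets easier as weights decrease, so it holds at every intermediate step. For a single step, let $\tilde\tau_{\max}' := \max_{x:w(x)>0}\tilde\tau(x)$ be the normalizer used on \Cref{line:down-weight}; the weight change is $w - w' = w\,\tilde\tau/\tilde\tau_{\max}'$ pointwise (restricted to the current support). Hence
\begin{align*}
(1-\eps)\E_{X\sim G}[w(X)-w'(X)] &= \frac{1-\eps}{\tilde\tau_{\max}'}\E_{X\sim G}[w(X)\tilde\tau(X)] < \frac{s}{\tilde\tau_{\max}'},
\end{align*}
using the hypothesis, while
\begin{align*}
\eps\,\E_{X\sim B}[w(X)-w'(X)] &= \frac{\eps}{\tilde\tau_{\max}'}\E_{X\sim B}[w(X)\tilde\tau(X)] = \frac{1}{\tilde\tau_{\max}'}\left(\E_{X\sim P}[w(X)\tilde\tau(X)] - (1-\eps)\E_{X\sim G}[w(X)\tilde\tau(X)]\right).
\end{align*}
Since the \texttt{if}-condition fired, $\E_{X\sim P}[w(X)\tilde\tau(X)] > s\beta$, and by hypothesis $(1-\eps)\E_{X\sim G}[w(X)\tilde\tau(X)] < s$, so the right-hand side exceeds $(s\beta - s)/\tilde\tau_{\max}' = s(\beta-1)/\tilde\tau_{\max}'$. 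Combining, $\eps\,\E_{X\sim B}[w - w'] > (\beta-1)\cdot(1-\eps)\E_{X\sim G}[w-w']$, which rearranges to the stated bound for one step. Summing the numerator and denominator of this inequality over all productive steps (both sides are telescoping sums of non-negative quantities) yields the claim for $w' = w^{(L)}$.

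\textbf{Main obstacle.} The conceptually trivial but technically fussy point is the telescoping/monotonicity argument: one must verify that the single-step hypothesis $(1-\eps)\E_{X\sim G}[w^{(i)}(X)\tilde\tau(X)] < s$ continues to hold at every productive iteration $i$, which follows because $w^{(i)} \le w^{(0)} = w$ pointwise and $\tilde\tau \ge 0$, so $\E_{X\sim G}[w^{(i)}\tilde\tau] \le \E_{X\sim G}[w\tilde\tau]$. The only other subtlety is bookkeeping the normalizer $\tilde\tau_{\max}'$ on \Cref{line:down-weight}: it is $\max_{x:w(x)>0}\tilde\tau(x)$ where $w$ is the \emph{original} input weight, so it is a fixed constant throughout the loop, which makes the per-step computation above clean and makes both the weight-change identity $w^{(i)} - w^{(i+1)} = w^{(i)}\tilde\tau/\tilde\tau_{\max}'$ and the summation valid. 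I expect no real difficulty beyond being careful that $\tilde\tau(x) \le \tilde\tau_{\max}'$ on the support of $w$, so that $w'(x) \in [0,1]$ remains valid.
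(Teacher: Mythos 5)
Your proof of the mass-removal inequality is correct and essentially matches the standard argument (the paper does not reprove this lemma, deferring to the cited references): one productive step removes mass $w^{(i)}-w^{(i+1)} = w^{(i)}\tilde\tau/\tilde\tau'_{\max}$ pointwise with the fixed normalizer $\tilde\tau'_{\max}=\max_{x:w(x)>0}\tilde\tau(x)$, the hypothesis $(1-\eps)\E_G[w^{(i)}\tilde\tau]<s$ propagates because $w^{(i)}\le w$ and $\tilde\tau\ge 0$, and the per-step inequality telescopes.

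The running-time argument, however, has a genuine gap. The claim that each productive iteration drops $\E_{X\sim P}[w'(X)\tilde\tau(X)]$ by a constant factor is false. The one-step drop is exactly $\E_P[w'\tilde\tau^2]/\tilde\tau'_{\max}$, and the strongest general lower bound on this (Cauchy--Schwarz, using $\E_P[w']\le 1$) is $(\E_P[w'\tilde\tau])^2/\tilde\tau'_{\max}$, i.e.\ a multiplicative contraction factor of at best $\bigl(1-\E_P[w'\tilde\tau]/\tilde\tau'_{\max}\bigr)$. Once the weighted score is close to $s\beta$, this factor is $\approx 1-s\beta/\tilde\tau_{\max}$, which is not bounded away from $1$. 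A concrete bad instance: one point attaining $\tilde\tau_{\max}$ with negligible weight forces $\tilde\tau'_{\max}=\tilde\tau_{\max}$, while essentially all of the weighted mass sits on points with score around $2s\beta$; then the number of productive iterations before the score reaches $s\beta$ is $\Theta(\tilde\tau_{\max}/s)$, not $O(\log(\tilde\tau_{\max}/s))$.

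The missing idea is a closed-form binary search over the stopping time, not a count of productive iterations. After $\ell$ consecutive productive iterations, $w^{(\ell)}(x)=w(x)(1-\tilde\tau(x)/\tilde\tau'_{\max})^{\ell}$, so $f(\ell):=\E_{X\sim P}[w(X)(1-\tilde\tau(X)/\tilde\tau'_{\max})^{\ell}\tilde\tau(X)]$ is a closed-form, monotonically decreasing function computable in $O(n)$ time. The filter's output is $w^{(\ell^*)}$ where $\ell^*=\min\{\ell:f(\ell)\le s\beta\}$. One has $\ell^*\le\ell_{\max}=\tilde\tau_{\max}/(es)$ because $\max_{t\in[0,\tilde\tau'_{\max}]}(1-t/\tilde\tau'_{\max})^{\ell_{\max}}t\le \tilde\tau'_{\max}/(e\ell_{\max})\le s<s\beta$, so $f(\ell_{\max})\le s\beta$. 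Binary search over $\{0,\dots,\ell_{\max}\}$ then finds $\ell^*$ in $O(\log\ell_{\max})=O(\log(\tilde\tau_{\max}/s))$ evaluations, each costing $O(n)$.
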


\section{Robust Sparse Mean Estimation}\label{sec:mean_est_main}

This section is organized as follows: \Cref{sec:alg_desc} provides the necessary definitions and pseudocode for describing the algorithm, \Cref{sec:proof_overview} gives a high-level proof sketch of the algorithm's correctness, and \Cref{sec:mean_est_appendix} provides the full proof.

\subsection{Algorithm Description and Pseudocode}\label{sec:alg_desc}

As mentioned in \Cref{sec:techniques}, the algorithm (stated in \Cref{alg:main}) consists of two parts, summarized below:
\begin{itemize}
    \item (First phase) First, a loop that iteratively finds sparse and orthogonal maximizers $ \vec A_i,\ldots,\vec A_{r}$ of $\vec \Sigma_w  -   \vec I$ for $r=\log(1/\eps)$, which are used to filter outliers, until the ``average variance'' along these $\vec A_i$'s drops to  $O(\eps)$ (cf. Line \ref{line:loop})
    \item (Second phase) After the loop, the algorithm identifies a set $H$ of $k^2r$-many coordinates  informed by the final $\vec A_i$'s (cf.\ Line~\ref{line:coords}).
    Algorithm then splits the space $\R^d$ into $[d]\setminus H$ and $H$, and finds an $O(\eps)$ approximation of $\mu$ for both subspaces separately. For the former, it uses the empirical mean in those coordinates (which ought to be accurate because of  \Cref{LemCertiAlt_restated}), and for the latter, it employs a dense mean estimator (cf.\ \Cref{fact:dense}).
\end{itemize}

\begin{algorithm}[h]
	\caption{Robust Sparse Mean Estimation}
	\label{alg:main}
	\begin{algorithmic}[1]
        \Statex \textbf{Input}: Set of points  $T_0=\{ x_i \}_{i\in[n]}$ and $\eps>0$. 
        \Statex \textbf{Output}: A vector $\widehat \mu \in \R^d$.
        \State Let $C$ be a sufficiently large constant, and $r:= \log(\frac{1}{\eps})$.
        \State $T \gets \textsc{Preprocessing}(T_0,\eps,k)$. 
        \label{line:preprocess} \hfill\Comment{cf. \Cref{fact:preprocess}}
        \State \label{line:naive_prune} Initialize $w(x) \gets \1( \|x - \mu_T \|_2 \leq  10\sqrt{d} \log(d/\eps) )$.
        \State Let $P$ be the uniform distribution on the set $T$,  $P_w$ be the weighted by $w$ version of $P$ (with pdf $P_w(x)=P(x)/\E_{X \sim P}[w(X)]$) , $\mu_w := \E_{X \sim P_w}[w(X)]$ and $\vec \Sigma_w{:=} \E_{X \sim P_w}[(X - \mu_w)(X - \mu_w)^\top]$ the weighted mean and covariance of $P$. \footnotemark\label{line:notation}
        \While{$ \frac{1}{r}g_{r}(\vec \Sigma_w - \vec I) > C \eps$} \label{line:loop}  
            \State Let $\vec A_1, \ldots, \vec A_r$ be the matrices from \Cref{def:norm_def}.
            \State Define  $\tilde p(x) := (x - \mu_w)^\top \vec A (x-\mu_w)-\tr(\vec A)$,
            and $\tilde \tau(x)  =   \tilde p(x) \1(\tilde p(x)  >   200 \log(\frac{1}{\eps}))$ for $\vec A  =   \sum_{i \in [r]} \vec A_i$. \label{line:scores}
            \State 
            \label{line:filter}
            Update  $w \gets \textsc{DownweightingFilter}(P,w, \tilde \tau,  s=\eps, \beta = \log(1/\eps))$.
        \EndWhile

        \State For $i\in [r]$, let $H_{i} \subseteq [d]$ be sets defined in \Cref{def:norm_def} and form $H := \bigcup_{i=1}^r H_{i}$ (cf. \Cref{def:norm_def}). \label{line:coords}
        \State Run the dense mean estimator from \Cref{fact:dense} on a {fresh} data restricted to the coordinates in $H$, to obtain $\widehat \mu_1 \in \R^d$ that is zero in every coordinate in $[d] \setminus H$ and satisfies $\|(\widehat \mu_1-\mu)_H\|_{2,k} = O(\eps)$. 
        \State Let $\widehat \mu_2$ be the vector that is equal to $\E_{P_w}[X]$ in the coordinates in $[d] \setminus H$ and  zero in the coordinates in $H$.
        \State \textbf{Return} $\widehat{\mu} = \widehat \mu_1 + \widehat \mu_2$. 
        
	\end{algorithmic}

\end{algorithm}
 \footnotetext{The weights $w(x)$ may change in the course of the algorithm; $\mu_w,\vec \Sigma_w$ will denote the quantity based on the latest weights.}
In the rest of the section, we formalize this high-level sketch.
Throughout the section, we will use the notation $P,\mu_w,\vec \Sigma_w,\tilde{p},\tilde{\tau}$  defined in Lines \ref{line:notation} and \ref{line:scores} of the pseudocode.
Starting with the first phase, we formally define $\vec A_1,\ldots,\vec A_r$ mentioned in the previous paragraph, and we also define what we informally referred to as ``average variance along the $\vec A_i$'s''. In particular,  $\vec A_1,\ldots,\vec A_r$ will be the matrices in \Cref{def:norm_def} below  for $\vec B=\vec \Sigma_w -\vec I$, and the ``average variance along the $\vec A_i$'s'' will be $\frac{1}{r}g_{r}(\vec \Sigma_w-\vec I)$.
\begin{definition} \label{def:norm_def}
For any matrix $\vec B$, 
we define $h_{i}(\vec B)$, $\vec A_i$, and $H_i$ for $i \in [r]$ recursively as follows.
\begin{itemize}
    \item For $i=1$, $h_{1}(\vec B) := \| \vec B \|_{\fr,k,k} = \max_{\vec A \in \cS} \langle \vec A,\vec B \rangle$ where 
$\cS$ is the set of matrices $\vec A$ that have $\|\vec A\|_\fr=1$ and 
have at most $k$-non-zero rows, each of which has at most $k$ non-zero entries. 
Let $\vec A_{1}$ be the matrix achieving the maximum. The set $H_{1} \subseteq [d]$  denotes the rows {and columns} in which $\vec A_{1}$ has non-zero elements.
\item For $i \in \{2,\dots,r\}$, 
we recursively define $h_{i}(\vec B)$, $\vec A_{i}$, and $H_{i}$ as follows:
$h_{i}(\vec B)  =   \|\vec B'\|_{\fr,k,k}$ where $\vec B'$ is $\vec B$ after deleting (zeroing out) the rows and columns from $H_{1} \cup \cdots \cup H_{i-1}$. 
Similarly, $\vec A_{i} := \argmax_{\vec A \in \cS} \langle \vec A,\vec B' \rangle$ 
and $H_{i}$ is the non-zero rows and columns of $\vec A_{i}$.
\item 
Finally, we define $g_{r}(\vec B) = \sum_{i=1}^r h_{i}(\vec B)$.
\end{itemize}
\end{definition}
We now explain why we informally  call 
$\frac{1}{r}g_{r}(\vec \Sigma_w -  \vec I)$ the ``average variance along the $\vec A_i$'s'':
For each $i$, $h_i(\vec B)$ represents the mean of the degree-two polynomial
$(x -  \mu_w)^\top \vec A_i (x -  \mu_w) -  \tr(\vec A_i)$, representing a variance-like quantity of $x$ along $\vec A_i$. 
Formally,
(see \eqref{eq:rewriting} for the details):\looseness=-1
\begin{equation}
   g_{r}(\vec \Sigma_w - \vec I)  = \sum_{i=1}^r \E_{X \sim P_w}[(X - \mu_w)^\top \vec A_i (X - \mu_w) - \tr(\vec A)] \label{eq:rewritting_main_body}
\end{equation}

\subsection{Proof Overview of \Cref{thm:main}}\label{sec:proof_overview}
The proof of correctness consists of the following claims:
\begin{enumerate}[leftmargin=*]
\item \label{it:firsttt}For any iteration of line \ref{line:loop}, if $w(x),w'(x)$ denote the weights before and after the iteration:
\begin{enumerate}
    \item \label{it:outliers-inliers}($\log(1/\eps)$ more outliers than inliers are removed) $\E\limits_{X \sim G}[w(X)  {-}   w'(X) ]  {<}   \frac{2\eps}{\log(\frac{1}{\eps})}  \E\limits_{X \sim B}[w(X)  {- } w'(X) ]$.
    \item \label{it:massremoved}(Non-trivial mass is removed) $\E_{X \sim P}[w(X)-w'(X)] = \tilde{\Omega}(\eps/d)$. 
\end{enumerate}

    \item \label{it:empirical_good} After the loop ends, $\| (\widehat{\mu}_2-\mu)_{[d]\setminus H} \|_{2,k} = O(\eps)$.\footnote{Recall $(x)_H$ denotes the vector $x$ restricted to $H\subset [d]$.}
\end{enumerate}
\Cref{it:massremoved} means that the algorithm terminates after $\tilde{O}(d/\eps)$ iterations (since no outliers are left at that point), \Cref{it:outliers-inliers} means that $\E_{X \sim G}[w(X)]\geq 1 - 3\eps/\log(1/\eps)$ is an invariant condition throughout the loop, and \Cref{it:empirical_good} states that the empirical mean is accurate on the coordinates from $[d]\setminus H$.

Before proving these claims, we show how they imply \Cref{thm:main}. 
We decompose $\mu$ into $\mu_1+\mu_2$ for $\mu_1 = (\mu)_H$ and $\mu_2:= (\mu)_{[d]\setminus H}$. 
By triangle inequality and definition of $\widehat{\mu}$, we have  $\|\widehat{\mu} - \mu \|_{2,k} \leq \|\widehat{\mu}_1 - \mu_1 \|_{2,k} + \|\widehat{\mu}_2 - \mu_2 \|_{2,k}$.
We bound each of these terms by $O(\eps)$.
The inequality $\|\widehat{\mu}_1  -   \mu_1   \|_{2,k} \leq O(\eps) $ corresponds to the guarantee of the dense estimator (\Cref{fact:dense}), run 
 on a fresh dataset, when restricted to coordinates in $H$. 
\Cref{fact:dense} gives an estimator with $O(\eps)$ error and sample complexity  
$\frac{1}{\eps^2}(|H|  +   \log(1/\delta))\polylog(d/\eps)$.
Since $|H| \leq k^2\log(1/\eps)$, it satisfies \Cref{thm:main}.
The term $\left\| \widehat{\mu}_2{ -} \mu_2\right\|_{2,k}$ is equal to  $\| (\widehat{\mu}_2-\mu)_{[d]\setminus H} \|_{2,k}$ and thus $O(\eps)$ by \Cref{it:empirical_good}.

We now sketch the proofs of \Cref{it:outliers-inliers,it:massremoved,it:empirical_good} used above.\looseness=-1

 \paragraph{Proof of \Cref{it:empirical_good}} Once \Cref{it:massremoved}  is shown, it implies that $\E_{X \sim G}[w(X)]\geq 1 - 3\eps/\log(1/\eps)$ and thus \Cref{it:empirical_good} becomes a straightforward application of the Certificate \Cref{LemCertiAlt_restated} with $\alpha=3\eps/\log(1/\eps)$ and $\lambda=C\eps$ to agree with the stopping condition of line \ref{line:loop} (for this we need to show that the condition of line \ref{line:loop} implies $\|(\vec \Sigma_w - \vec I)_{([d]\setminus H)\times ([d]\setminus H)} \|_{\op,k} = O(\eps)$; see \Cref{cl:final} in \Cref{sec:mean_est_appendix} for the details).

\paragraph{Proof of \Cref{it:outliers-inliers}} We want to use \Cref{lem:filtering} with $\beta=\log(1/\eps)$ and $s=\eps$.
To apply the lemma, we need to show that $\E_{X \sim G}[w(X)\tilde{\tau}(X)] \leq \eps$. This looks like \Cref{cond:poly-conc} of the goodness conditions (\Cref{DefModGoodnessCond}) but the difference is that $\tilde{\tau}(x)$ centers the point around $\mu_{w}$ 
instead of $\mu$ that is used in $\tau(x)$.
While these centering issues are easily dealt with when $\vec A$ is PSD and no sparsity constraints are present, 
our setting requires additional technical work. We defer the full proof to \Cref{sec:mean_est_appendix}, sketching the steps here.

  Let $\tilde{\tau}(x) = \tilde{p}(x)\1(\tilde p(x) > 200\log(1/\eps))$, where $\tilde p(x) = (x-\mu_w)^\top \vec A (x-\mu_w)-\tr(\vec A)$; The algorithm uses $\tilde \tau(x)$ as scores.
  Define $\tau(x) = p(x)\1(p(x) > 100 \log(1/\eps))$, with $ p(x) = (x-\mu)^\top \vec A (x-\mu)-\tr(\vec A)$, to be the ideal scores appearing in the deterministic condition (that center data around the true $\mu$). 
  Denote the difference of these polynomials by $\Delta p(x):=\tilde p(x) - p(x) = \delta_\mu^\top \vec A \delta_\mu + (x-\mu)^\top \vec A \delta_\mu + (x-\mu)^\top \vec A^\top \delta_\mu$ for $\delta_\mu:=\mu-\mu_w$.
  Using triangle inequalities, we get \looseness=-1
  \begin{align}
      &\E_{X \sim G}[w(X)\tilde{\tau}(X)] \quad \leq \quad |\delta_\mu^\top \vec A \delta_\mu| \notag\\
      &\quad +   \E_{X \sim G}[(X -  \mu)^\top \vec A \delta_\mu \1(  p(X){ >} 200 \log(1/\eps)  -   \Delta p(X))]\notag\\
      &\quad +   \E_{X \sim G}[(X -  \mu)^\top \vec A^\top \delta_\mu \1(  p(X){ >} 200 \log(1/\eps)  -   \Delta p(X))]\notag\\
      &+ \E_{X \sim G}[    p(X)  \1(  p(X) > 200 \log(1/\eps) - \Delta p(X))].\label{eq:terms}
  \end{align}
  We need to bound all {three} terms above by $O(\eps)$. For the first term,  we take advantage of the sparsity of $\vec A$ to establish:
  \begin{restatable}{claim}{SPARSECS}\label{cl:inequality_sparse_norms}
    Let $\vec A = \sum_{\ell \in [r]} \vec B^{(\ell)}$ where each $\vec B^{(\ell)}$  is a square matrix with Frobenius norm equal to one, $k$ non-zero rows, each of which has $k$ non-zero entries. Then, for any vectors $u,v$, it holds $|u^\top \vec A v |\leq r \| u \|_{2,k} \|v \|_{2,k}$.
\end{restatable}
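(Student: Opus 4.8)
The plan is to reduce the claim to the rank-one case $\vec B^{(\ell)} = u_\ell v_\ell^\top$ is not quite available (the $\vec B^{(\ell)}$'s need not be rank one), so instead I would work directly with the block structure that the sparsity imposes. The key point is that each $\vec B^{(\ell)}$, having at most $k$ non-zero rows and at most $k$ non-zero entries in each of those rows, is supported on a $k \times k$ submatrix: there are index sets $R_\ell, C_\ell \subseteq [d]$ with $|R_\ell| \le k$, $|C_\ell| \le k$ such that $\vec B^{(\ell)}_{ij} = 0$ whenever $i \notin R_\ell$ or $j \notin C_\ell$. Therefore $u^\top \vec B^{(\ell)} v = (u_{R_\ell})^\top \vec B^{(\ell)}_{R_\ell \times C_\ell} (v_{C_\ell})$, and by Cauchy–Schwarz (or submultiplicativity of the operator norm) this is at most $\|u_{R_\ell}\|_2 \, \|\vec B^{(\ell)}_{R_\ell \times C_\ell}\|_\op \, \|v_{C_\ell}\|_2 \le \|u_{R_\ell}\|_2 \|v_{C_\ell}\|_2$, using $\|\vec B^{(\ell)}_{R_\ell \times C_\ell}\|_\op \le \|\vec B^{(\ell)}\|_\fr = 1$.

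Next I would bound $\|u_{R_\ell}\|_2$ and $\|v_{C_\ell}\|_2$ by the sparse norms. Since $|R_\ell| \le k$, the vector $u_{R_\ell}$ (viewed as $u$ restricted to a set of size at most $k$) satisfies $\|u_{R_\ell}\|_2 \le \|u\|_{2,k}$; indeed, taking the unit vector $\bar u$ supported on $R_\ell$ in the direction of $u_{R_\ell}$ shows $\|u_{R_\ell}\|_2 = \bar u^\top u \le \sup_{\|w\|_2 \le 1, \|w\|_0 \le k} w^\top u = \|u\|_{2,k}$ by \Cref{def:sparsenorm}. Likewise $\|v_{C_\ell}\|_2 \le \|v\|_{2,k}$. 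Hence $|u^\top \vec B^{(\ell)} v| \le \|u\|_{2,k}\|v\|_{2,k}$ for each $\ell$.

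Finally, I would apply the triangle inequality over the $r$ summands:
\[
|u^\top \vec A v| = \Bigl| \sum_{\ell \in [r]} u^\top \vec B^{(\ell)} v \Bigr| \le \sum_{\ell \in [r]} |u^\top \vec B^{(\ell)} v| \le r \, \|u\|_{2,k} \|v\|_{2,k},
\]
which is exactly the claimed bound. I do not anticipate a genuine obstacle here; the only mildly delicate point is making sure the inequality $\|\vec B^{(\ell)}_{R_\ell \times C_\ell}\|_\op \le \|\vec B^{(\ell)}\|_\fr$ is invoked correctly (it holds because the operator norm of any matrix, here the $k\times k$ submatrix which equals the whole matrix up to zero rows/columns, is at most its Frobenius norm), and ensuring the restriction-to-a-support argument for the sparse norm is phrased cleanly as above. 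Everything else is bookkeeping with the support sets.
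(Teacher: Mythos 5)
You assume that each $\vec B^{(\ell)}$ is supported on a single $k\times k$ submatrix, i.e.\ that there is a \emph{common} column index set $C_\ell$ of size at most $k$ so that $\vec B^{(\ell)}_{ij}=0$ whenever $j\notin C_\ell$. But the hypothesis only says each of the $k$ non-zero rows has at most $k$ non-zero entries; it does not say the non-zero entries sit in the same $k$ columns across rows. In the worst case the union of the row-supports has size $k^2$, so your $C_\ell$ can have $|C_\ell|=k^2$, and then $\|v_{C_\ell}\|_2\le\|v\|_{2,k^2}$, which is \emph{not} bounded by $\|v\|_{2,k}$. (This is not a hypothetical corner case: in the paper the matrices $\vec A_i$ to which this claim is applied come from \Cref{fact:variational}, where the top-$k$ entries are chosen \emph{per row}, and the set $H_i$ of touched rows and columns is explicitly allowed to have size $\Theta(k^2)$.) So the step $\|v_{C_\ell}\|_2\le\|v\|_{2,k}$ fails, and with it the conclusion $|u^\top\vec B^{(\ell)}v|\le\|u\|_{2,k}\|v\|_{2,k}$.

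The fix is to apply Cauchy--Schwarz \emph{row by row} rather than to the whole block, which is what the paper does. Write $u^\top\vec B^{(\ell)}v=\sum_i u_i\sum_j v_j\vec B^{(\ell)}_{ij}$. For each fixed $i$, the inner sum runs over at most $k$ indices $j$, so by Cauchy--Schwarz $\sum_j v_j\vec B^{(\ell)}_{ij}\le\|v\|_{2,k}\,\|b^{(\ell)}_i\|_2$ where $b^{(\ell)}_i$ is the $i$-th row. Then only $k$ of the quantities $\|b^{(\ell)}_i\|_2$ are non-zero, so a second Cauchy--Schwarz over $i$ gives $\sum_i u_i\|b^{(\ell)}_i\|_2\le\|u\|_{2,k}\bigl(\sum_i\|b^{(\ell)}_i\|_2^2\bigr)^{1/2}=\|u\|_{2,k}\|\vec B^{(\ell)}\|_\fr=\|u\|_{2,k}$. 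This gives $|u^\top\vec B^{(\ell)}v|\le\|u\|_{2,k}\|v\|_{2,k}$ per block without ever needing a common column support, and your final triangle-inequality step over $\ell\in[r]$ is then fine.
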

This means that $|\delta_\mu^\top \vec A \delta_\mu| \leq \log(1/\eps) \|\delta_\mu\|_{2,k}^2$, which can eventually be bounded by $\eps$ using  \Cref{LemCertiAlt_restated} and the preprocessing of Line \ref{line:preprocess}. The second term in \eqref{eq:terms} may be broken into two terms by considering the cases $\Delta p(X) \leq 100 \log(1/\eps)$ and $\Delta p(X) > 100 \log(1/\eps)$. The latter case is a very low-probability event by \Cref{cond:linear_tails}, 
eventually bounding the relevant term by $\eps$. For the former case, we can use that $p(X) > 200 \log(1/\eps)$ is a low-probability event (by \Cref{cond:hw} of \Cref{DefModGoodnessCond}). The third term uses an identical argument. Finally, the third term in \eqref{eq:terms} is similarly split into two by taking cases for $\Delta p(X)$, and bounding each one using either \Cref{cond:poly-conc} or \Cref{cond:extra} of the \Cref{DefModGoodnessCond}. 
  
\paragraph{Proof of \Cref{it:massremoved}}
By design, the down-weighting filter only removes mass when $\E_{X \sim P}[w(X)\tilde{\tau}(X)] \geq s \beta =: \eps \log(1/\eps)$ (cf. line \ref{line:filter_cond} of \Cref{alg:downweighting-filter}). 
Thus, we first need to show that this is true throughout the loop of Line \ref{line:loop}.
To do so, we write $\E_{X \sim P}[w(X)\tilde{\tau}(X)] = \E_{X \sim P}[w(X)\tilde{p}(X)]-\E_{X \sim P}[w(X)\tilde{\tau}(X)\1(\tilde{p}(X) \leq 200 \log(1/\eps))]$. 
The first term is already roughly $g_r(\vec \Sigma_w -  \vec I)$ by \eqref{eq:rewritting_main_body},  up to a normalization of $\E[w(x)] \approx 1$,
and hence at least $\eps\log(1/\eps)$ by Line \ref{line:loop}). 
Showing that the second term is less than $g_r(\vec \Sigma_w-\vec I)/2$ involves a multi-step argument similar to the ones in the previous paragraph, which can be found in \Cref{sec:mean_est_appendix}. Once this is established, \Cref{it:massremoved} follows easily: First, by design of the down-weighting filter, $\E_{X \sim P}[w(X) - w'(X)] = \E_{X \sim P}[w(X) \tilde{\tau}(X)]/\max_{x}\tilde{\tau}(X)$. We have already shown that the numerator is $\Omega(\eps)$. The denominator is $O(d\,\polylog(d/\eps))$ since $\|x\|_2 = \sqrt{d}\log(d/\eps)$ for all points in the dataset, by Gaussian concentration.

\subsection{Proof of \Cref{thm:main}}\label{sec:mean_est_appendix}
In this section we provide the complete proof of correctness of \Cref{alg:main}.
Recall \Cref{def:norm_def}. We record a useful fact about the that definition.

\begin{fact}
    Consider the matrix $\vec A = \sum_{i=1}^r \vec A_i$ where 
    $\vec A_i$ for $i\in [r]$ are the matrices in \Cref{def:norm_def}. 
    Then, it is true that 
    (i) $\|\vec A\|_\fr = \sqrt{r}$, and 
    (ii) $\|\vec A\|_\op \leq 1$.
\end{fact}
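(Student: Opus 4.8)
The statement asks us to verify two norm facts about $\vec A = \sum_{i=1}^r \vec A_i$, where the $\vec A_i$ are the orthogonally-supported maximizers from \Cref{def:norm_def}. The plan is to exploit the disjointness of the supports $H_i$, which is guaranteed by the recursive construction: each $\vec A_i$ is supported on rows and columns in $H_i$, and $H_i$ is taken in the complement of $H_1 \cup \dots \cup H_{i-1}$ after zeroing those rows/columns out. Hence the matrices $\vec A_i$ have pairwise disjoint supports, and in particular their row-supports (and column-supports) are disjoint.

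\textbf{Step 1 (Frobenius norm).} Because the supports of the $\vec A_i$ are pairwise disjoint as subsets of $[d]\times[d]$, we get $\|\vec A\|_\fr^2 = \sum_{i,j}\big(\sum_{\ell}(\vec A_\ell)_{i,j}\big)^2 = \sum_{\ell}\sum_{i,j}(\vec A_\ell)_{i,j}^2 = \sum_{\ell=1}^r \|\vec A_\ell\|_\fr^2 = r$, since each $\vec A_\ell$ has unit Frobenius norm by the definition of $\cS$. Taking square roots gives $\|\vec A\|_\fr = \sqrt{r}$.

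\textbf{Step 2 (operator norm).} I would bound $\|\vec A\|_\op = \sup_{\|x\|_2=1,\|y\|_2=1} x^\top \vec A y = \sup \sum_{\ell} x^\top \vec A_\ell y$. Let $H_\ell$ be the coordinate set carrying $\vec A_\ell$'s support; write $x^{(\ell)}$ and $y^{(\ell)}$ for the restrictions of $x,y$ to $H_\ell$. Then $x^\top \vec A_\ell y = (x^{(\ell)})^\top \vec A_\ell\, y^{(\ell)} \le \|\vec A_\ell\|_\op \|x^{(\ell)}\|_2 \|y^{(\ell)}\|_2 \le \|x^{(\ell)}\|_2 \|y^{(\ell)}\|_2$, using that $\|\vec A_\ell\|_\op \le \|\vec A_\ell\|_\fr = 1$. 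Now sum over $\ell$ and apply Cauchy–Schwarz: $\sum_\ell \|x^{(\ell)}\|_2 \|y^{(\ell)}\|_2 \le \big(\sum_\ell \|x^{(\ell)}\|_2^2\big)^{1/2}\big(\sum_\ell \|y^{(\ell)}\|_2^2\big)^{1/2} \le \|x\|_2\|y\|_2 = 1$, where the last inequality uses that the $H_\ell$ are disjoint so $\sum_\ell \|x^{(\ell)}\|_2^2 \le \|x\|_2^2$ (and similarly for $y$). This yields $\|\vec A\|_\op \le 1$.

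\textbf{Main obstacle.} The only real content is establishing disjointness of the supports rigorously from \Cref{def:norm_def} — one must note that when we delete (zero out) rows and columns of $H_1\cup\dots\cup H_{i-1}$ before finding $\vec A_i$, the maximizer $\vec A_i \in \cS$ is necessarily supported off those coordinates (otherwise it would have zero inner product contributions there, and could be modified to increase or at least not decrease the objective while moving mass onto surviving coordinates — actually, more simply, $\vec B'$ has zeros on those rows/columns, so any $\vec B$ achieving $\langle \vec A, \vec B'\rangle = \|\vec B'\|_{\fr,k,k}$ via \Cref{fact:variational} can be taken supported on the support of $\vec B'$, which avoids $H_1\cup\dots\cup H_{i-1}$). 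Once disjointness is in hand, both parts are a one-line computation. No step is genuinely hard.
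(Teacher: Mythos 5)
Your proof is correct and captures exactly the same essential content as the paper's: the key fact is that the supports of the $\vec A_i$ live on pairwise disjoint coordinate sets $H_i \times H_i$, which immediately gives part (i) via Pythagoras on Frobenius norms and is also the engine behind part (ii).

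The one place where your route genuinely differs from the paper is in how part (ii) is finished off. The paper relabels coordinates to exhibit $\vec A$ as an explicit block-diagonal matrix with blocks $\vec A_1,\dots,\vec A_r$, observes that $\vec A^\top \vec A$ is then block-diagonal PSD, and invokes the fact that the operator norm of a block-diagonal PSD matrix equals the maximum block operator norm, then bounds each block via $\|\vec A_i^\top\vec A_i\|_\op \le \trace(\vec A_i^\top\vec A_i) = \|\vec A_i\|_\fr^2 = 1$. You instead work directly with the variational formula $\|\vec A\|_\op = \sup_{\|x\|_2=\|y\|_2=1} x^\top\vec A y$, restrict $x,y$ to the disjoint blocks, and close with Cauchy--Schwarz together with $\|\vec A_\ell\|_\op \le \|\vec A_\ell\|_\fr = 1$. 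Both arguments ultimately rest on disjointness plus $\|\vec A_i\|_\op \le \|\vec A_i\|_\fr$; yours avoids constructing the block matrix and appealing to the block-diagonal spectral fact, so it is arguably a bit more self-contained, at the cost of carrying the bilinear sup through by hand. Your ``main obstacle'' remark is also a fair point: the disjointness of the supports (and hence the correctness of Step 1's Pythagoras) does rely on the maximizer $\vec A_i$ from \Cref{fact:variational} being supported where $\vec B'$ is nonzero, which the paper implicitly assumes; your one-line justification of this is adequate and worth making explicit.
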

\begin{proof}

    Proof of (i): 
    Since $\vec A = \sum_{i=1}^r \vec A_i$, the $\vec A_i$'s have 
    their non-zero entries on disjoint coordinates, 
    and $\|\vec A_i\|_\fr=1$ we have that 
    $\| \vec A \|_\fr^2 = \sum_{i=1}^r \|\vec A_i\|_\fr^2 = r$.
    
    Proof of (ii): 
    Observe that for $i \neq j$, the matrices $\vec A_i$ and $\vec A_j$ satisfy that if $(k,\ell)$ entry of $\vec A_i$ is non-zero, then the corresponding entry of $\vec A_j$ must be zero.
    As a result, by relabeling coordinates, we see that $\vec A$ can be written as a block matrix: 
    \begin{align*}
        \vec A = \begin{bmatrix}
            \vec A_1 & \mathbf{O} & \dots&  \mathbf{O}&\mathbf{O} \\
            \mathbf{O}& \vec A_2 &\dots  &\mathbf{O} &\mathbf{O} \\
            \vdots & \vdots&\vdots &\vdots& \vdots&\\
           \mathbf{O} &\mathbf{O} & \dots& \vec A_r & \mathbf{O}\\
           \mathbf{O} &\mathbf{O} & \dots& \mathbf{O} & \mathbf{O}
        \end{bmatrix} \;,
    \end{align*}
    where $\mathbf{O}$ represent a matrix with all entries set to zero; observe that $\mathbf{O}$ above could be of varing dimensions.
    Therefore, $\vec A^\top \vec A$ is equal to 
    \begin{align*}
        \vec A^\top \vec A = \begin{bmatrix}
            \vec A_1^\top \vec A_1 & \mathbf{O} & \dots&  \mathbf{O}&\mathbf{O} \\
            \mathbf{O}& \vec A_2^\top \vec A_2 &\dots  &\mathbf{O} &\mathbf{O} \\
            \vdots & \vdots&\vdots &\vdots& \vdots&\\
           \mathbf{O} &\mathbf{O} & \dots& \vec A_r^\top \vec A_r & \mathbf{O}\\
           \mathbf{O} &\mathbf{O} & \dots& \mathbf{O} & \,\,\mathbf{O}
        \end{bmatrix} \;,
    \end{align*}
Since this is a block diagonal PSD matrix, the operator norm of $\vec A^\top \vec A$ is equal to $\max_{i}\|\vec A_i^\top \vec A_i\|_\op$, which we can upper bound as $\|\vec A_i^\top \vec A_i\|_\op \leq \trace(\vec A_i^\top \vec A_i ) = \|\vec A_i\|_\fr^2 = 1$.
Therefore, $\|\vec A\|_\op \leq \sqrt{\|\vec \vec A^\top \vec A\|_\op} \leq 1$.
\end{proof}

We now provide the full proof of \Cref{thm:main}, where we state each of the three steps as separate claims and prove them individually.

\begin{proof}[Proof of \Cref{thm:main}]
We start with some notation. 
Let $T$ be the dataset after preprocessing (line \ref{line:preprocess}). 
We let $P$ be the uniform distribution over $T$, 
$G$ be the uniform distribution over the inliers of $T$ and 
$B$ the uniform distribution over the outliers. 
We can write $P$ as mixture $P = (1-\eps)G + \eps B$. 
If $w:T \to [0,1]$ denotes the weights that the algorithm maintains at a given point 
during its execution, 
we denote by $P_w$ the weighted by $w$ version of $P$.
By \Cref{lem:samples}, we have that with probability at least $1-\delta$, $G$ is $(\eps,\alpha,k)$-good with $\alpha=3\eps/\log(1/\eps)$.

We will show \Cref{thm:main} via the following steps listed as individual claims below:

\begin{claim}\label{it:filtering}
Consider the setting and notation of \Cref{thm:main,alg:main}. 
The condition $\E_{X \sim G}[w(X)] \geq 1 - 3\eps/\log(1/\eps)$ remains true 
throughout the execution of the loop in line \ref{line:loop}. 
\end{claim}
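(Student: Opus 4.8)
The plan is to prove the claim by induction on the iteration index $t$ of the loop in line~\ref{line:loop}, tracking the total inlier mass removed so far and showing it never exceeds $3\eps/\log(1/\eps)$. Write $w_0$ for the weights right after the naive pruning of line~\ref{line:naive_prune} and $w_t$ for the weights after $t$ full iterations of the loop. Three ingredients suffice: (i) $\E_{X\sim G}[1-w_0(X)]$ is negligible --- this follows from the preprocessing guarantee (\Cref{fact:preprocess}), after which every surviving point has $\ell_2$-norm $O(\sqrt{d}\,\polylog(d/\eps))$ and hence lies inside the pruning radius, so in fact $w_0\equiv 1$ on $T$ (alternatively one argues directly, via Gaussian concentration, that at most an $\eps/\log(1/\eps)$ fraction of inlier mass is discarded); (ii) the per-iteration filtering bound of \Cref{lem:filtering}; and (iii) a telescoping sum over iterations.

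For the inductive step, suppose $\E_{X\sim G}[w_{t'}(X)]\ge 1-3\eps/\log(1/\eps)$ for all $t'\le t$, and fix such a $t'$. Since the inlier weight is then at least $1-\alpha$ with $\alpha=3\eps/\log(1/\eps)$, the goodness conditions of \Cref{DefModGoodnessCond} (in particular \Cref{cond:mean,cond:covariance,cond:polynomials,cond:linear_tails}) are available for $w_{t'}$, and this is exactly what is needed to establish the score bound $\E_{X\sim G}[w_{t'}(X)\tilde\tau(X)]\le\eps$ (established below; cf.\ the proof of \Cref{it:outliers-inliers}), where $\tilde\tau$ is the score used in the $(t'{+}1)$-st iteration. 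Crucially this score bound consumes only the invariant \emph{at step $t'$}, so there is no circularity. As $s=\eps$ and hence $(1-\eps)\E_{X\sim G}[w_{t'}\tilde\tau]<\eps=s$, \Cref{lem:filtering} applies with $\beta=\log(1/\eps)$ and gives
\[
(1-\eps)\,\E_{X\sim G}[w_{t'}(X)-w_{t'+1}(X)] \;<\; \frac{\eps}{\log(1/\eps)-1}\,\E_{X\sim B}[w_{t'}(X)-w_{t'+1}(X)] .
\]
Summing over $t'=0,1,\dots,t$ and using that $w$ is pointwise non-increasing in $t$ (so the right-hand side telescopes and $\E_{X\sim B}[w_0-w_{t+1}]\le 1$), we obtain $(1-\eps)\,\E_{X\sim G}[w_0-w_{t+1}] < \frac{\eps}{\log(1/\eps)-1}$. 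Hence $\E_{X\sim G}[w_{t+1}(X)]\ge \E_{X\sim G}[w_0(X)] - \frac{\eps}{(1-\eps)(\log(1/\eps)-1)}$, and since $\eps<\eps_0$ is small enough that $\frac{1}{(1-\eps)(\log(1/\eps)-1)}\le\frac{2}{\log(1/\eps)}$, combining with ingredient (i) yields $\E_{X\sim G}[w_{t+1}(X)]\ge 1-3\eps/\log(1/\eps)$, closing the induction.

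The only genuinely technical input here is the per-iteration score bound $\E_{X\sim G}[w\tilde\tau]\le\eps$ --- which is where the sparsity of $\vec A=\sum_{i\le r}\vec A_i$, the centering mismatch between $\mu$ and $\mu_w$, and the sparse degree-two polynomial tail conditions of \Cref{DefModGoodnessCond} all come into play; I would prove that separately, as it is the main step of \Cref{it:outliers-inliers}. Relative to that, the present claim is essentially bookkeeping, and the one point to get right is structural: the induction must carry the invariant for \emph{all} earlier steps simultaneously, so that each invocation of \Cref{lem:filtering} is justified and the inlier losses can be added up, rather than attempting an iteration-by-iteration argument from a single fixed bound.
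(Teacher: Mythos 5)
Your proposal takes the same route as the paper: reduce the claim to the per-iteration guarantee supplied by \Cref{lem:filtering} with $\beta=\log(1/\eps)$, $s=\eps$, whose applicability rests on the score bound $\E_{X\sim G}[w\tilde\tau]\le\eps$; establish that guarantee in every round by induction; and telescope. Where the two differ is emphasis: the paper dispenses with the bookkeeping in a single sentence (``it suffices to show that every time the weight function is updated, $\log(1/\eps)$ more mass is removed from outliers than inliers\dots'') and devotes essentially the whole proof of \Cref{it:filtering} to the score bound, which invokes \Cref{cl:inequality_sparse_norms}, \Cref{LemCertiAlt_restated} (this is exactly where the step-$t'$ invariant is consumed, as you note), and \Cref{cond:hw,cond:extra,cond:linear_tails}. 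You invert this, laying out the strong-induction structure, the non-circularity, and the telescoping explicitly and deferring the score bound. That deferral is structurally sound, but note that in the paper's organization the score-bound argument \emph{is} the body of the proof of \Cref{it:filtering}; \Cref{it:outliers-inliers} exists only as an informal item in the proof-overview sketch of \Cref{sec:proof_overview}, not as a separately stated and proved claim.

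One correction on ingredient (i): \Cref{fact:preprocess} does \emph{not} assert that every surviving point of $T$ has $\ell_2$-norm $O(\sqrt{d}\,\polylog(d/\eps))$ --- its guarantees are a filter ratio, a bound on $\|\vec\Sigma_{T'}-\vec I\|_{F,k,k}$, and a $(2,k)$-norm bound on $\mu_{T'}-\mu$ --- so $w_0\equiv 1$ on $T$ does not follow from it. Your fallback (Gaussian norm concentration for inliers, together with a bound on $\|\mu_T-\mu\|_2$ that puts inliers inside the pruning radius) is the right argument and is what the paper implicitly relies on for the base case, which it does not spell out.
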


\begin{claim}\label{cl:termination}
Under the setting of \Cref{thm:main}, 
the loop of line \ref{line:loop} terminates after $\tilde{O}(d/\eps)$ time. 
\end{claim}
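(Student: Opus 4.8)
The plan is to use the total surviving weight $W := \E_{X\sim P}[w(X)]$ as a monotone potential. The filter of \Cref{alg:downweighting-filter} never increases any weight, so $W$ is non-increasing over iterations of the loop of \Cref{line:loop}, $W\in[0,1]$, and by \Cref{it:filtering} we have $W\ge(1-\eps)\E_{X\sim G}[w(X)]\ge(1-\eps)(1-3\eps/\log(1/\eps))\ge 0.9$ throughout; hence $W$ can decrease by at most $0.1$ in total. It therefore suffices to show that \emph{whenever the while-condition $\frac1r g_r(\vec\Sigma_w-\vec I)>C\eps$ holds, that iteration decreases $W$ by at least $\tilde\Omega(\eps/d)$}: this forces the while-condition to fail after at most $\tilde O(d/\eps)$ iterations, each running in $\poly(nd)$ time (the matrices $\vec A_1,\dots,\vec A_r$ are found via \Cref{fact:variational} in $\poly(d,k)$ time, the scores $\tilde\tau$ in $\poly(nd)$ time, and the filter in $O(n\log(\tilde\tau_{\max}/s))$ time by \Cref{lem:filtering}), so the loop halts in $\poly(nd)$ time overall.

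First I would show that when the while-condition holds, the filter actually down-weights, i.e.\ $\E_{X\sim P}[w(X)\tilde\tau(X)]>s\beta=\eps\log(1/\eps)$, the test of \Cref{line:filter_cond}. Writing $\tilde\tau(x)=\tilde p(x)-\tilde p(x)\1(\tilde p(x)\le 200\log(1/\eps))$ and invoking \eqref{eq:rewritting_main_body},
\[
\E_{X\sim P}[w(X)\tilde\tau(X)] = W\cdot g_r(\vec\Sigma_w-\vec I)-\E_{X\sim P}\!\left[w(X)\tilde p(X)\1\!\left(\tilde p(X)\le 200\log(1/\eps)\right)\right],
\]
and the first term is at least $0.9\,rC\eps=0.9\,C\eps\log(1/\eps)$. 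For the second term, split $P=(1-\eps)G+\eps B$: the outlier part is at most $200\eps\log(1/\eps)$ since $\tilde p(X)\1(\tilde p(X)\le 200\log(1/\eps))\le 200\log(1/\eps)$, and the inlier part, rewritten as $\E_{X\sim G}[w\tilde p]-\E_{X\sim G}[w\tilde\tau]$, is $O(\eps\log(1/\eps))$ with an absolute constant by the same chain of estimates as in the proof of \Cref{it:outliers-inliers}: the sparse Cauchy--Schwarz inequality \Cref{cl:inequality_sparse_norms}, the mean-closeness bound of the Certificate \Cref{LemCertiAlt_restated} together with the pruning of \Cref{line:naive_prune}, and the tail conditions \Cref{cond:poly-conc,cond:hw,cond:extra,cond:linear_tails} of \Cref{DefModGoodnessCond} applied (after rescaling to unit Frobenius norm) to $\vec A=\sum_i\vec A_i$, which satisfies $\|\vec A\|_\op\le 1$, $\|\vec A\|_\fr=\sqrt r$ and has $O(k^2\log(1/\eps))$ nonzero entries. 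Choosing $C$ a large enough absolute constant makes the second term at most $\tfrac12\cdot 0.9\,C\eps\log(1/\eps)$, whence $\E_{X\sim P}[w(X)\tilde\tau(X)]\ge 0.4\,C\eps\log(1/\eps)>\eps\log(1/\eps)$.

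Once the filter fires, the first pass of the inner loop of \Cref{alg:downweighting-filter} executes $w'(x)\gets w(x)(1-\tilde\tau(x)/\tilde\tau_{\max})$, which decreases $W$ by exactly $\E_{X\sim P}[w(X)\tilde\tau(X)]/\tilde\tau_{\max}\ge \eps\log(1/\eps)/\tilde\tau_{\max}$, so it only remains to bound $\tilde\tau_{\max}=\tilde O(d)$. This follows from the a priori $\ell_2$ bound: \Cref{line:naive_prune} retains only points with $\|x-\mu_T\|_2\le 10\sqrt d\log(d/\eps)$, hence $\|x-\mu_w\|_2=O(\sqrt d\log(d/\eps))$ (as $\mu_w$ is a convex combination of surviving points), so
\[
|\tilde p(x)|\le \|\vec A\|_\op\|x-\mu_w\|_2^2 + |\tr(\vec A)| = O\!\big(d\log^2(d/\eps)\big),
\]
using $\|\vec A\|_\op\le1$ and $|\tr(\vec A)|\le\sum_i|\tr(\vec A_i)|\le\sum_i\sqrt k\,\|\vec A_i\|_\fr=O(\sqrt k\log(1/\eps))$. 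Thus each triggered iteration removes $\ge\eps\log(1/\eps)/\tilde O(d)=\tilde\Omega(\eps/d)$ of $W$, which together with the $0.1$ budget yields the $\tilde O(d/\eps)$ iteration bound.

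The step I expect to be the main obstacle is the bound on the inlier contribution $\E_{X\sim G}[w(X)\tilde p(X)\1(\tilde p(X)\le 200\log(1/\eps))]$: the polynomial $\tilde p$ used by the algorithm is centered at the running weighted mean $\mu_w$, whereas the conditions of \Cref{DefModGoodnessCond} are stated for the $\mu$-centered polynomial $p$, so one must control $\Delta p:=\tilde p-p$ (a quadratic plus two bilinear terms in $\mu-\mu_w$) via \Cref{cl:inequality_sparse_norms} and the Certificate Lemma, and then handle the truncation indicator by splitting on whether $\Delta p$ is large and invoking the appropriate tail bound in each case. This is precisely the technical core already required for \Cref{it:outliers-inliers}, and is carried out in full in the arguments that follow.
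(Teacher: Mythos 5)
Your proof is correct and the technical core (showing $\E_{X\sim P}[w\tilde\tau] \gtrsim \eps\log(1/\eps)$ when the while-condition holds, and bounding $\tilde\tau_{\max} = \tilde O(d)$) matches the paper's argument exactly: rewrite via \eqref{eq:rewritting_main_body}, split the truncated part into outliers (trivially $\le 200\eps\log(1/\eps)$) and inliers (bounded by the same deterministic-condition machinery as \Cref{it:filtering}), and bound the denominator by the pruning radius. Where you differ, and slightly more cleanly, is in the termination step. The paper first argues that after $\tilde O(d/\eps)$ iterations the outlier mass must be exhausted, and then separately verifies that once outliers are gone and $\E_G[w]\ge 1-3\eps/\log(1/\eps)$ the quantity $g_r(\vec\Sigma_w-\vec I)$ drops below $Cr\eps$ so the while-condition fires. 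You instead observe that \Cref{it:filtering} already forces $W=\E_P[w]\ge(1-\eps)\E_G[w]\ge 0.9$ throughout, so $W$ has a total drop budget of $0.1$; since each iteration in which the loop body runs costs $\tilde\Omega(\eps/d)$ of that budget, the loop must exit within $\tilde O(d/\eps)$ rounds. This is a direct potential argument that avoids the paper's extra ``outliers gone $\Rightarrow$ stopping condition triggers'' step, at the cost of using a looser budget ($0.1$ instead of roughly $2\eps$), which is harmless here. A small bonus of your write-up: you explicitly account for $|\tr(\vec A)|\le\sum_i|\tr(\vec A_i)|\le r\sqrt k$ in the bound $|\tilde p(x)| = O(d\log^2(d/\eps))$, a term the paper's proof silently drops (it is negligible, but strictly speaking it is needed since $\tr(\vec A)$ need not be nonnegative).
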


\begin{claim} \label{cl:final}
Consider the setting and notation of \Cref{thm:main,alg:main}. 
After the loop of line \ref{line:loop} ends, 
let $w$ be the resulting weight function, 
let $\mu_w = \E_{X \sim P_w}[X]$ be the mean of the dataset weighted by $w$, 
and denote by $H$ the set of coordinates as in line \ref{line:coords} of the pseudocode.
Then, for every $k$-sparse $v \in \R^d$, 
it holds $| v^\top( \mu_w - \mu)_{[d] \setminus H}| = O(\epsilon) \|v\|_2$.
\end{claim}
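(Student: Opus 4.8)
The plan is to show that the loop's stopping criterion forces the sparse operator norm of $\vec{\Sigma}_w - \vec{I}$, restricted to the coordinates outside $H$, to be $O(\eps)$, and then to run the argument behind the Certificate Lemma (\Cref{LemCertiAlt_restated}) only along $k$-sparse directions supported on $[d]\setminus H$. Throughout write $\vec{B} := \vec{\Sigma}_w - \vec{I}$ (here $\vec{\Sigma}_w$ is the covariance of $P_w$) and recall $h_1(\vec B),\dots,h_r(\vec B)$ and $H_1,\dots,H_r$ from \Cref{def:norm_def}. The first step is a monotonicity observation: $h_1(\vec B) \ge h_2(\vec B) \ge \dots \ge h_r(\vec B) \ge 0$. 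Indeed, $h_{i+1}(\vec B)$ is the $\|\cdot\|_{\fr,k,k}$ norm of $\vec B$ after zeroing the rows and columns indexed by $H_1 \cup \dots \cup H_i \supseteq H_1 \cup \dots \cup H_{i-1}$, and zeroing out additional entries can only decrease $\|\cdot\|_{\fr,k,k}$ (because $\|A_i\|_{2,k}$, the $\ell_2$-mass of the $k$ largest-magnitude entries of row $i$, is monotone under zeroing, and $\|\cdot\|_{\fr,k,k}^2$ is a maximum of $\sum_{i\in S}\|A_i\|_{2,k}^2$ over $k$-subsets $S$). Since the loop exits only once $\frac{1}{r} g_r(\vec B) = \frac{1}{r}\sum_{i=1}^r h_i(\vec B) \le C\eps$ and $h_r(\vec B)$ is the smallest of the $h_i(\vec B)$'s, we conclude $h_r(\vec B) \le C\eps$.

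Second, I would pass from $h_r(\vec B) \le C\eps$ to a submatrix bound. By definition $h_r(\vec B) = \|\vec B'\|_{\fr,k,k}$ where $\vec B'$ is $\vec B$ with the rows/columns in $H_1\cup\dots\cup H_{r-1}$ zeroed; since $H = H_1\cup\dots\cup H_r \supseteq H_1\cup\dots\cup H_{r-1}$, the matrix $(\vec{\Sigma}_w - \vec I)_{([d]\setminus H)\times([d]\setminus H)}$ is obtained from $\vec B'$ by zeroing still more rows/columns (those in $H_r$), so the same monotonicity gives $\|(\vec{\Sigma}_w - \vec I)_{([d]\setminus H)\times([d]\setminus H)}\|_{\fr,k,k} \le h_r(\vec B) \le C\eps$, and then \Cref{fact:h_1_lower_bound} yields $\|(\vec{\Sigma}_w - \vec I)_{([d]\setminus H)\times([d]\setminus H)}\|_{\op,k} \le C\eps$. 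In particular, every $k$-sparse unit vector $v$ supported on $[d]\setminus H$ satisfies $v^\top \vec{\Sigma}_w v = 1 + v^\top(\vec{\Sigma}_w - \vec I) v \le 1 + C\eps$, since only the $([d]\setminus H)\times([d]\setminus H)$ block of $\vec{\Sigma}_w - \vec I$ is seen by such $v$.

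Third, I would finish by mirroring the proof of \Cref{LemCertiAlt_restated}, but carried out only for $k$-sparse unit vectors $v$ supported on $[d]\setminus H$; for such $v$ one has $v^\top(\mu_w - \mu)_{[d]\setminus H} = v^\top(\mu_w - \mu)$, so this suffices (a general $k$-sparse $v$ reduces to this case by replacing $v$ with its restriction to $[d]\setminus H$, which is $k$-sparse and has no larger norm). Write $P_w = (1-\eps') G_w + \eps' B_w$, where using $\E_{X\sim G}[w(X)] \ge 1-\alpha$ from \Cref{it:filtering} with $\alpha = 3\eps/\log(1/\eps)$ one gets $\eps' \le \eps/((1-\eps)(1-\alpha)) = O(\eps)$. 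Then $v^\top(\mu_w - \mu) = (1-\eps') v^\top(\mu_{G_w} - \mu) + \eps' v^\top(\mu_{B_w} - \mu)$; the first term is $O(\alpha\sqrt{\log(1/\alpha)}) = O(\eps)$ by \Cref{cond:mean}. For the second, combine the mixture identity $v^\top \vec{\Sigma}_w v = (1-\eps') v^\top \vec{\Sigma}_{G_w} v + \eps' v^\top \vec{\Sigma}_{B_w} v + \eps'(1-\eps')(v^\top(\mu_{G_w} - \mu_{B_w}))^2$ with the bounds $v^\top \vec{\Sigma}_w v \le 1 + C\eps$ (previous step), $v^\top \vec{\Sigma}_{B_w} v \ge 0$, and $v^\top \vec{\Sigma}_{G_w} v \ge 1 - O(\alpha\log(1/\alpha))$ (which follows from \Cref{cond:covariance} and \Cref{cond:mean} via $\vec{\Sigma}_{G_w} = \overline{\vec{\Sigma}}_{G_w} - (\mu_{G_w}-\mu)(\mu_{G_w}-\mu)^\top$) to deduce $\eps'(v^\top(\mu_{G_w}-\mu_{B_w}))^2 = O(\eps + \alpha\log(1/\alpha))$; hence $\eps'|v^\top(\mu_{B_w}-\mu)| \le \eps'|v^\top(\mu_{B_w}-\mu_{G_w})| + \eps'|v^\top(\mu_{G_w}-\mu)| = O(\sqrt{\eps'(\eps+\alpha\log(1/\alpha))}) + O(\eps) = O(\eps)$, using $\alpha = 3\eps/\log(1/\eps)$. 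Summing the two terms gives $|v^\top(\mu_w - \mu)| = O(\eps)\|v\|_2$, which is the claim.

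I expect the main obstacle to be exactly the mismatch between the global form of \Cref{LemCertiAlt_restated} (which assumes $\|\vec{\Sigma}_{P_w} - \vec I\|_{\op,k} \le \lambda$ along all $k$-sparse directions) and what is available here (control only along directions supported on $[d]\setminus H$). The resolution, which should be made explicit rather than invoked as a black box, is that the lemma's bound on $v^\top(\mu_{P_w} - \mu)$ for a fixed $v$ only ever uses the covariance bound along that same $v$, so reproducing its proof restricted to $v$'s supported on $[d]\setminus H$ is legitimate; the remaining pieces — the two monotonicity facts and the mixture bookkeeping — are routine.
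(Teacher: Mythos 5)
Your proposal is correct and follows essentially the same approach as the paper: use the stopping condition plus monotonicity of the $h_i$'s to conclude $h_r(\vec\Sigma_w - \vec I) = O(\eps)$, pass to the $(\fr,k,k)$ and then (via \Cref{fact:h_1_lower_bound}) the sparse operator norm bound on the $([d]\setminus H)\times([d]\setminus H)$ block, and finally run the Certificate Lemma argument with $\alpha = 3\eps/\log(1/\eps)$ and $\lambda = O(\eps)$. You are also right to flag that the stated hypothesis of \Cref{LemCertiAlt_restated} (a global $\|\vec\Sigma_{P_w}-\vec I\|_{\op,k}\le\lambda$ bound) is not literally available and that the resolution is that the lemma's proof is carried out direction-by-direction, so restricting it to $k$-sparse $v$ supported on $[d]\setminus H$ is legitimate; the paper invokes the lemma as a black box and leaves that point implicit.
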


We will prove these claims at the end of the current proof. 
We first use these to show that the algorithm has error $O(\eps)$ overall:
For every $k$-sparse vector $v$ of $\R^d$, 
let $v_1$ denote the copy of $v$ that has all of its entries in the coordinates 
from $[d] \setminus H$ zeroed out and 
$v_2$ the copy of $v$ that has all of the entries in the coordinates from $H$ zeroed out. 
Similarly, decompose $\mu$ into $\mu_1+\mu_2$. 
Then, by the triangle inequality:

\begin{align*}
    \left\| \hat{\mu} - \mu  \right\|_{2,k} 
    =  \left\| \hat{\mu}_1 - \mu_1 + \hat{\mu}_2 - \mu_2  \right\|_{2,k} 
    \leq \left\| \hat{\mu}_1 - \mu_1   \right\|_{2,k} + \left\|   \hat{\mu}_2 - \mu_2  \right\|_{2,k} \;,\\
\end{align*}
To conclude the proof, we claim that 
$\left\|  \hat{\mu}_1 - \mu_1  \right\|_{2,k } = O(\eps)$ and
$\left\| \hat{\mu}_2 - \mu_2 \right\|_{2,k} =   O(\eps)$. 
    
The claim that $\left\|  \hat{\mu}_1 - \mu_1  \right\|_{2,k } = O(\eps)$
follows by the $O(\eps)$-error guarantee of the dense robust mean estimator 
run on a dataset restricted to the coordinates in $H$. 
Let $d':= |H|$ be the dimensionality of the restricted dataset. 
\Cref{fact:dense} states that in order for that estimator to achieve $O(\eps)$ error 
the dataset used needs to be of size a sufficiently large multiple of 
$\frac{1}{\eps^2}(d' + \log(1/\delta))\polylog(d/\eps)$. 
Since  $d' = |H| \leq r k = \log(1/\eps) k$, this quantity is 
$O(\frac{1}{\eps^2}(k + \log(1/\delta)))\polylog(d/\eps)$ and 
thus smaller than the sample complexity mentioned in the statement of \Cref{thm:main}.

The claim that $\left\|   \hat{\mu}_2 - \mu_2  \right\|_{2,k} = O(\eps)$ follows by \Cref{cl:final}.

We now prove \Cref{it:filtering,cl:termination,cl:final}. 

\begin{proof}[Proof of \Cref{it:filtering}]
To show this, it suffices to show that every time the weight function is updated,
$\log(1/\eps)$ more mass is removed from outliers than inliers.
We will show this inductively: 
Assume it is true for all previous rounds and 
we will show it for the current round using \Cref{lem:filtering} applied with 
$\beta = \log(1/\eps), s=\eps$ once we show 
that the lemma is applicable. 
Denote
$\lambda' := \| \vec \Sigma_w - \vec I \|_{\fr,k,k}$.
In order to show that  \Cref{lem:filtering} is applicable, 
we need to check that $\E_{X \sim G}[w(X)\tilde{\tau}(X)] \leq \eps$.
Regarding that, $\tilde{\tau}(X)$ looks like the thresholded polynomial $\tau(x)$ 
used in the deterministic \Cref{cond:polynomials} for which we know that 
$\E_{X \sim G}[w(X)\tau(X)] \leq  \eps$. 
The difference is that $\tilde{\tau}(x)$ centers the point around $\mu_{w}$ 
instead of $\mu$ that is used in $\tau(x)$, thus we need some triangle inequalities and \Cref{cl:inequality_sparse_norms}, which is shown in \Cref{app:sparse-matrix-CS}, to prove that this difference is not substantial.
  
  Let $\tilde{\tau}(x) = \tilde{p}(x)\1(\tilde p(x)> 200\log(1/\eps))$, where $\tilde p(x) = (x-\mu_w)^\top \vec A (x-\mu_w)-\tr(\vec A)$ be the scores that the algorithm uses (which center points around $\mu_w$) and $\tau(x) = p(x)\1(p(x) > 100 \log(1/\eps))$ with $ p(x) = (x-\mu)^\top \vec A (x-\mu)-\tr(\vec A)$ be the ideal scores appearing in the deterministic condition (that center things around the true $\mu$). Denote by $\Delta p(x):=\tilde p(x) - p(x) = (\mu-\mu_w)^\top \vec A (\mu-\mu_w) + (x-\mu)^\top \vec A (\mu-\mu_w) + (x-\mu)^\top \vec A^\top (\mu-\mu_w)$ the difference of the two polynomials. We have that
  \begin{align}
      \E_{X \sim G}[w(X)\tilde{\tau}(X)] 
      &\leq \E_{X \sim G}[ \tilde{\tau}(X)] \notag \\
      &= \E_{X \sim G}[\tilde p(X)  \1(\tilde p(X) > 200 \log(1/\eps))] \notag \\
      &= \E_{X \sim G}[(\Delta p(X) + p(X) ) \1( p(X) > 200 \log(1/\eps)-\Delta p(X))]  \notag \\
      &= \E_{X \sim G}[\Delta p(X) \1(  p(X) > 200 \log(1/\eps) - \Delta p(X))] \notag \\
      &\quad+ \E_{X \sim G}[   p(X)   \1(  p(X) > 200 \log(1/\eps) - \Delta p(X))]  \notag \\
      &\leq |(\mu-\mu_w)^\top \vec A (\mu-\mu_w)| \notag\\
      &\quad+ \E_{X \sim G}[(X-\mu)^\top \vec A (\mu-\mu_w) \1(  p(X) > 200 \log(1/\eps) - \Delta p(X))] \notag\\
      &\quad+ \E_{X \sim G}[(X-\mu)^\top \vec A^\top (\mu-\mu_w) \1(  p(X) > 200 \log(1/\eps) - \Delta p(X))] \notag\\
      &\quad+ \E_{X \sim G}[    p(X)  \1(  p(X) > 200 \log(1/\eps) - \Delta p(X))] \;. \label{eq:temp132}
  \end{align}
  
  We claim that each of the four terms above is at most $O(\eps)$.
Denote $$\lambda := \max_{v \in \R^d: \|v\|_2=1, \|v\|_0=k} v^\top( \vec \Sigma_w - \vec I) v$$ 
(not to be confused with $\lambda' := \| \vec \Sigma_w - \vec I \|_{\fr,k,k}$). For the first term in \eqref{eq:temp132}, we have that
  \begin{align}
      |(\mu - \mu_w)^\top \vec A (\mu - \mu_w)|
      &\leq r \|\mu-\mu_w\|_{2,k}^2 \tag{using \Cref{cl:inequality_sparse_norms}}\\
      &\lesssim r (\eps \lambda + \eps^2)  \tag{using \Cref{LemCertiAlt_restated}}\\
      &\leq r \eps \lambda' + r \eps^2   \tag{$\lambda \leq \lambda'$ by \Cref{fact:h_1_lower_bound} }\\
      &\leq  r \eps^2 \log^2(1/\eps)  \tag{using $\lambda' \lesssim   \eps \log^2(1/\eps)$}\\
      &\leq \eps, \label{eq:tmp5345}
  \end{align}
 applicable as follows:  in the second line we applied \Cref{LemCertiAlt_restated} with $\alpha=3\eps/\log(1/\eps)$  (the requirement that $\E_{X \sim G}[w(X)] \geq 1- \alpha$ of that lemma is satisfied by inductive hypothesis), and  
 the last line uses that $r:=\log(1/\eps)$.

  We now move to the second term in \eqref{eq:temp132}. The bound for the third term is identical, thus we will omit it. We have that
  \begin{align}
       &\E_{X \sim G}[ (X-\mu)^\top \vec A (\mu-\mu_w) \1(  p(X) > 200 \log(1/\eps) - \Delta p(X))] \notag\\
       &\leq    \E_{X \sim G}[  (X-\mu)^\top \vec A (\mu-\mu_w)  \1( p(X) > 200 \log(1/\eps) - \Delta p(X), \Delta p(X) \leq 100 \log(1/\eps))] \notag\\
       &\quad+ \E_{X \sim G}[  |(X-\mu)^\top \vec A (\mu-\mu_w)| \1(  \Delta p(X) > 100 \log(1/\eps))] \notag\\
       &\leq   \E_{X \sim G}[| (X-\mu)^\top \vec A (\mu-\mu_w)| \1(  p(X) > 100 \log(1/\eps))] \notag\\
       &\quad+  \E_{X \sim G}[ |(X-\mu)^\top \vec A (\mu-\mu_w)| \1(  \Delta p(X) > 100 \log(1/\eps))] \;.\label{eq:temp4324}
  \end{align}

    We now work with the two terms individually. We start with the first term of \eqref{eq:temp4324}. 
    In what follows we define the vectors $u_i:= \vec A_i(\mu-\mu_w) / \| \vec A_i(\mu-\mu_w)\|_{2,k} $ to shorten notation.
    We have the following series of inequalities (see below for step by step explanations) 
    \begin{align}
        &\E_{X \sim G}[ |(X-\mu)^\top \vec A (\mu-\mu_w)| \1(  p(X) > 100 \log(1/\eps))] \notag\\
        &\leq \sum_{i=1}^r \E_{X \sim G}[ |(X-\mu)^\top \vec A_i (\mu-\mu_w)| \1(  p(X) > 100 \log(1/\eps))] \label{eq:tmp00011}\\
        &\lesssim \sum_{i=1}^r \sqrt{\E_{X \sim G}[ |(X-\mu)^\top \vec A_i (\mu-\mu_w)|^2] } \sqrt{\Pr_{X \sim G}[p(X) > 100 \log(1/\eps)]} \label{eq:tmp00021}\\
        &= \sum_{i=1}^r \| \vec A_i ( \mu - \mu_w) \|_{2,k} \sqrt{\E_{X \sim G}[ |(X-\mu)^\top u_i|^2] }  \sqrt{\Pr_{X \sim G}[p(X) > 100 \log(1/\eps)]} \label{eq:tmp0003}\\
        &\leq \sum_{i=1}^r \|\vec A_i \|_\fr \| \mu - \mu_w \|_{2,k} \sqrt{\E_{X \sim G}[ |(X-\mu)^\top u_i|^2] }  \sqrt{\Pr_{X \sim G}[p(X) > 100 \log(1/\eps)]} \label{eq:tmp00031}\\
        &\lesssim r  (\sqrt{\lambda' \eps} +\eps)\sqrt{\Pr_{X \sim G}[p(X) > 100 \log(1/\eps)]} \label{eq:tmp0004}\\
        &\lesssim r  (\sqrt{\lambda' \eps}+\eps) \sqrt{\eps} \label{eq:tmp00041}\\
        &\lesssim  \eps^{1.5} \log^2(1/\eps) \label{eq:tmp0005}\\
        &\leq \eps \;.\label{eq:tmp0006}
    \end{align}
    Note that \eqref{eq:tmp00011} follows from the definition of $A = \sum_{i=1}^r \vec A_i$ and the triangle inequality. 
    \eqref{eq:tmp00021} uses the  Cauchy–Schwarz inequality.
    \eqref{eq:tmp0003} is a re-writing using $u_i:= \vec A_i (\mu-\mu_w) / \| \vec A_i (\mu-\mu_w)\|_{2,k} $, where the point to note is that $u_i$ is $k$-sparse (because $\vec A_i$ has only $k$ non-zero rows) and unit norm.
    \eqref{eq:tmp00031} uses the inequality $\|\vec C v \|_{2,k} \leq \|\vec C\|_\fr \|v\|_{2,k}$. This is true since $\vec C$ is a matrix with at most $k$ nonzero rows, at most $k$ nonzero entries in each row, and has Frobenius norm one. An application of \Cref{cl:inequality_sparse_norms} with $r=1$ then gives us what we want. Then, 
    \eqref{eq:tmp0004} uses that $\|\vec A_i\|_\fr \leq 1$, $\| \mu - \mu_w \|_{2,k} \lesssim\sqrt{\eps \lambda}+\eps \leq  \sqrt{\eps \lambda'}+\eps$ by \Cref{LemCertiAlt_restated} and \Cref{fact:h_1_lower_bound}, and $\E_{X \sim G}[ |(x-\mu)^\top u_i|^2] \leq 1+ \tilde{O}(\eps)\lesssim 1$ by the deterministic \Cref{cond:covariance} (these utilize the fact that $u_i$ is unit-norm $k$-sparse vector).
    \eqref{eq:tmp00041} uses that $\Pr_{X \sim G}[p(X) > 100 \log(1/\eps)] \leq \eps$ by the deterministic \Cref{cond:hw}.
    \eqref{eq:tmp0005} uses that $\lambda' \lesssim \eps \log^2(1/\eps)$ by the preprocessing step of the algorithm (cf.\ \Cref{fact:preprocess}) and also that $r=\log(1/\eps)$.

  We now move to the second term of \eqref{eq:temp4324}.  
  \begin{align}
      &\E_{X \sim G}[|(X - \mu)^\top \vec A (\mu - \mu_w)|\1(  \Delta p(X) > 100 \log(1/\eps))]  \notag\\
      &\leq r (\sqrt{\lambda' \eps}+\eps) \sqrt{\Pr_{X \sim G}[\Delta p(X) > 100 \log(1/\eps)]}   \tag{similar to steps (\ref{eq:tmp00011}-\ref{eq:tmp0004})} \\
      &\lesssim \eps^{1.5} \log^2(1/\eps)  \lesssim\eps \;,
  \end{align}
  where we bounded $\Pr_{X \sim G}[\Delta p(X) > 100 \log(1/\eps)]$ as follows: First, $\Pr_{X \sim G}[\Delta p(x) > 100 \log(1/\eps)] 
          \leq \Pr_{X \sim G}[|(X-\mu)^\top \vec A (\mu-\mu_w)| > 99 \log(1/\eps)] + \Pr_{X \sim G}[|(X-\mu)^\top \vec A^\top (\mu-\mu_w)| > 99 \log(1/\eps)]$, where this uses that $\Delta p(x) = (\mu - \mu_w)^\top \vec A(\mu - \mu_w) + (x - \mu)^\top \vec A^\top (\mu - \mu_w) + (x - \mu)^\top \vec A  (\mu - \mu_w)$, and that $|(\mu - \mu_w)^\top \vec A(\mu - \mu_w)|\leq 1 < \log(1/\eps)$ by \eqref{eq:tmp5345}. We will only focus on the first term, since the second probability is bounded identically:
  
      \begin{align}
          \Pr_{X \sim G}[|(X-\mu)^\top \vec A (\mu-\mu_w)| > 99 \log(1/\eps)] 
          &\leq \Pr_{X \sim G}\left[\left|(X-\mu)^\top \frac{\vec A (\mu-\mu_w)}{\|\vec A (\mu-\mu_w)\|_2} \right| >  \frac{99\log(1/\eps)}{\|\vec A (\mu-\mu_w)\|_2}\right] \label{eq:temp000002}\\
          &\leq \Pr_{X \sim G}\left[\left|(X-\mu)^\top \frac{\vec A (\mu-\mu_w)}{\|\vec A (\mu-\mu_w)\|_2} \right| >  99\log(1/\eps)\right]  \label{eq:temp000003}\\
          &\leq \eps  \;, \label{eq:temp000004}
      \end{align}
      where
      \eqref{eq:temp000002} divides both sides  by $\vec A (\mu-\mu_w)$;  
      \eqref{eq:temp000003} uses that  $\|\vec A(\mu-\mu_w)\|_{2} \leq \sum_{i=1}^r \|\vec A_i (\mu-\mu_w)\|_{2} = \sum_{i=1}^r   \|\vec A_i (\mu-\mu_w)\|_{2,k} \leq  \sum_{i=1}^r\|\vec A_i\|_\fr \|\mu-\mu_w\|_{2,k} \leq r\|\mu-\mu_w\|_{2,k} \lesssim r (\sqrt{\lambda' \eps} +\eps) \leq 1$, where the first step is a triangle inequality, the second step uses the fact that $\vec A_i$ has only $k$-nonzero rows, the next step uses Cauchy-Schwartz, then we use   that there are $r$ terms in the sum, $\|\vec A_i\|_\fr\leq 1$, and finally that $\|\mu-\mu_w\|_{2,k} \lesssim \sqrt{\eps \lambda'}+\eps$ by \Cref{LemCertiAlt_restated} and finally that $r=\log(1/\eps),\lambda' = O(1)$ by our preprocessing step inside the algorithm (\Cref{fact:preprocess}).
      \eqref{eq:temp000004} uses \Cref{cond:linear_tails} of \Cref{DefModGoodnessCond}, which is indeed applicable because $\vec A (\mu-\mu_w)$ is $r k$-sparse (since $\vec A$ has at most $r k$ non-zero rows).

  We are now done with all the terms in \eqref{eq:temp4324} and can now move to the last term of \eqref{eq:temp132}:
  \begin{align*}
      &\E_{X \sim G}[  p(X)  \1(  p(X)  > 200 \log(1/\eps) - \Delta p(X))]\\
      &\leq \E_{X \sim G}[  p(X)  \1(  p(X) > 200 \log(1/\eps) - \Delta p(X) , \Delta p(X) < 100\log(1/\eps))] \\
      &\quad+ \E_{X \sim G}[  p(X) \1(  \Delta p(X) > 100\log(1/\eps))] \;.
  \end{align*}
  The first term above is $ \E_{X \sim G}[  p(X) \1(   p(X) > 100\log(1/\eps))] =  \E_{X \sim G}[ \tau(x)]$ which is at most $O(\eps)$ by our deterministic \Cref{cond:polynomials}. 
  The term 
  $\E_{X \sim G}[  p(X) \1(  \Delta p(X) > 100\log(1/\eps))]$ is bounded by $\eps$ using  \Cref{cond:extra} of the deterministic \Cref{cond:polynomials} (the application is very similar to the application of \Cref{cond:linear_tails} of \Cref{DefModGoodnessCond} in equations \eqref{eq:temp000002}-\eqref{eq:temp000004}). 
    \end{proof}

\begin{proof}[Proof of \Cref{cl:termination}]

    We will show that in every iteration of the while loop of line \ref{line:loop}, $\E_{X \sim P}[w(X)]$ is reduced by at least $\tilde \Omega(\eps/d)$, where $w(x)$ are the weights that the algorithm maintains and $P$ is the uniform distribution over the input dataset. Since initially $\E_{X \sim P}[w(X)] = 1$, this would mean that after at most $\tilde O(d/\eps)$ iterations the weight from all the outliers would have been reduced to zero. We will finally show that this would trigger the stopping condition of line \ref{line:loop} and cause the algorithm to terminate.

    We start with the first claim, showing that $\E_{X \sim P}[w(X)]$ gets non-trivially reduced in every round. 
    Fix an iteration of the algorithm and denote by $w(x)$ the weights at the start of that round and by $w'(x)$ the weights after that round ends. By design of the filtering algorithm (see line \ref{line:down-weight} of \Cref{alg:downweighting-filter}), the mass removed is 
    \begin{align}
        \E_{X \sim P}[w(X) - w'(X)] = \frac{\E_{X \sim P}[w(X) \tilde{\tau}(X)]}{\max_{x: w(x)>0}\tilde{\tau}(X)} \;.
    \end{align}

    We will show that the denominator is $O(d\log^2(d/\eps))$, and that the numerator is $\Omega( \eps \log(1/\eps))$.

    Regarding the denominator, let $\mu_T$ denote the vector from line \ref{line:naive_prune} of the algorithm (the vector used in the na\"ive pruning step). For every point with $w(x)>0$ it holds $\tilde \tau(x) \leq |(x - \mu_w)^\top \vec A (x - \mu_w)| \leq \|\vec A\|_\op \|x - \mu_w\|_2^2 \leq  \|x - \mu_w\|_2^2 \lesssim  \|x - \mu\|_2^2 +\|\mu - \mu_T\|_2^2 + \| \mu_T - \mu_w \|_2^2 \lesssim d \log^2(d/\eps)$, where the fact that every point in the dataset is within $10\sqrt{d} \log(d/\eps)$ from $\mu_T$ (by the pruning done in line \ref{line:naive_prune} of the algorithm).

    Regarding the numerator, let $\lambda'' := g_{r}(\vec \Sigma_w - \vec I)$. We will show that the numerator is $\Omega(\lambda'')$. 
    Note that as long as the while loop of line \ref{line:loop} has not been terminated, $\lambda'' \gg \log(1/\eps)\eps$ by design. 
    We rewrite the numerator:
    \begin{align}\label{eq:temp42344}
        \E_{X \sim P}[w(X) \tilde{\tau}(X)] = \E_{X \sim P}[w(X) \tilde{p}(X)] - \E_{X \sim P}[w(X) \tilde{p}(X) \1(\tilde{p}(X) \leq 200\log(1/\eps))  ]
    \end{align}
    We first show that, after re-normalizing, the first term at least $0.5 g_{r}(\vec \Sigma_w - \vec I)$:

    \begin{align}
        \E_{X \sim P}[w(X) \tilde{p}(X)]
        &= \E_{X \sim P}[w(X)] \E_{X \sim P_w} [\tilde{p}(X)] \notag \\
        &= \E_{X \sim P}[w(X)]\E_{X \sim P_w}[  \langle \vec A, (X- \mu_w)(X- \mu_w)^\top - \vec I  \rangle ]\notag \\
        &=\E_{X \sim P}[w(X)]\left \langle \vec A, \E_{X \sim P_w}[ (X- \mu_w)(X- \mu_w)^\top] - \vec I  \right\rangle \notag \\
        &= \E_{X \sim P}[w(X)] \langle \vec A, \vec \Sigma_w - \vec I  \rangle \notag \\
        &= \E_{X \sim P}[w(X)]\sum_{i=1}^r \langle \vec A_i, \vec \Sigma_w - \vec I  \rangle \tag{see \Cref{def:norm_def}}\\
        &=  \E_{X \sim P}[w(X)]\sum_{i=1}^r h_{r}(\vec \Sigma_w- \vec I) \tag{see \Cref{def:norm_def}}\\
        &= \E_{X \sim P}[w(X)]g_{r}(\vec \Sigma_w - \vec I) \label{eq:rewriting}\\
        &\geq  0.5 \lambda'' \;, \notag 
    \end{align}
    where we used \Cref{it:filtering} to obtain that $\E_{X \sim P}[w(X)] \geq 1/2$.

    In the reminder, we show that the second term in \Cref{eq:temp42344} is at most $0.002\lambda''$. First, we decompose into inliers and outliers:
    \begin{align*}
        \E_{X \sim P}&[w(X) \tilde{p}(X) \1(\tilde{p}(X) \leq 200\log(1/\eps))  ]\\
        &= (1-\eps)\E_{X \sim G}[w(X) \tilde{p}(X) \1(\tilde{p}(X) \leq 200\log(1/\eps))  ]\\
        &+ \eps \E_{X \sim B}[w(X) \tilde{p}(X) \1(\tilde{p}(X) \leq 200\log(1/\eps))  ] \;.
    \end{align*}
    We again treat each term individually. For the second term (the one due to outliers), 
    we have the following due to the indicator function
    \begin{align*}
        \eps \E_{X \sim B}[w(X) \tilde{p}(X) \1(\tilde{p}(X) \leq 200\log(1/\eps))  ]
        \leq  \eps ( 200\log(1/\eps)))  \ll \lambda'' \;,
    \end{align*}
    where we used that by design of line \ref{line:loop} of the algorithm $\lambda'' > C r \eps$ with $r:=\log(1/\eps)$ and $C$ being a sufficiently large constant.

    For the term due to inliers, we can say the following: Denote by $\Delta p(x) = \tilde p(x) - p(x) = (\mu - \mu_w)^\top \vec A (\mu - \mu_w) + (x-\mu)^\top \vec A (\mu-\mu_w) + (x-\mu)^\top \vec A^\top (\mu-\mu_w)$. Then we will establish the following bounds (see below for the explanations of each step):
    \begin{align}
        &(1-\eps)\E_{X \sim G}[w(X) \tilde{p}(X) \1(\tilde{p}(X) \leq 200\log(1/\eps))  ] \label{eq:temp00000001}\\
        &\leq (1-\eps)\E_{X \sim G}[w(X) \tilde{p}(X) ]  \label{eq:temp00000002}\\
        &=  (1-\eps)\E_{X \sim G}[w(X)  p(X)  ] + (1-\eps)\E_{X \sim G}[w(X) \Delta p(X) ]  \label{eq:temp00000003}\\
        &\leq \eps + (1-\eps)\E_{X \sim G}[w(X) \Delta p(X) ]  \label{eq:temp00000004}\\
        &\leq \eps + (1-\eps)\E_{X \sim G}[w(X)  (\mu - \mu_w)^\top \vec A (\mu - \mu_w)] \notag \\
        &\quad + (1-\eps)\E_{X \sim G}[w(X)(x-\mu)^\top \vec A (\mu-\mu_w)] + (1-\eps)\E_{X \sim G}[w(X)(x-\mu)^\top \vec A^\top (\mu-\mu_w)]  \label{eq:temp00000005}\\
        &\leq O(\eps) +   (\mu - \mu_w)^\top \vec A (\mu - \mu_w)  + (\mu_G-\mu)^\top \vec A (\mu-\mu_w) +(\mu_G-\mu)^\top \vec A^\top (\mu-\mu_w)  \label{eq:temp00000006}\\
        &= O(\eps)  \label{eq:temp00000009}\\
        &\leq 0.002\lambda'' \,.\label{eq:temp00000010}
    \end{align}
    We explain the steps below:
    \eqref{eq:temp00000002} uses that the indicator only removes non-negative terms.
    \eqref{eq:temp00000003} decomposes into inliers and outliers.
    \eqref{eq:temp00000004} bounds the average value of the polynomial over inliers as follows (we will use the notation $p_{\vec A}(x) = (x-\mu)^\top \vec A (x-\mu)-\tr(\vec A)$ for clarity):
    \begin{align}
      \E_{X \sim G}[ w(X) p_{\vec A}(X) ] 
      &=  \E_{X \sim G}[w(X)  p_{\vec A}(X)  ] \notag\\
      &=   \E_{X \sim G}[ p_{\vec A}(X)  ]   -  \E_{X \sim G}[(1-w(X))  p_{\vec A}(X)] \notag\\
      &=   \E_{X \sim \cN(\mu,\vec I)}[ p_{\vec A}(X) ] + O(\eps)   -  \E_{X \sim G}[(1-w(X))  p_{\vec A}(X) ] \notag \\
      &= O(\eps) + \E_{X \sim G}[(1-w(X))  p_{-\vec A}(X) ] \label{eq:finalline}
    \end{align}
    where in the second to last line we used that for all
degree-2 polynomials $g$ with at most $k^2$ terms 
  $\left| \E_{X \sim G}[g(X)] - \E_{Z \sim \cN(\mu,\vec I)}[g(Z)]   \right| \leq \eps \sqrt{\Var_{Z \sim \cN(\mu,\vec I)}[g(Z)]}$, which can be found in which can be found in Lemma 4.3 in \cite{DKKPS19-sparse}.
    \eqref{eq:finalline} line we renamed $-\vec A$ to $\vec A'$. Then, decomposing the remaining term into the large and small part using indicator functions, we have that
    \begin{align*}
        \E_{X \sim G}[(1-w(X))  p_{-\vec A}(X) ]
        &= \E_{X \sim G}[(1-w(X))  p_{-\vec A}(X)\1(p_{-\vec A}(X) \leq 100 \log(1/\eps))  ] \\
        &+ \E_{X \sim G}[(1-w(X))  p_{-\vec A}(X) \1(p_{-\vec A}(X)  \geq 100 \log(1/\eps))  ] \\
        &\lesssim \E_{X \sim G}[(1-w(X)) 100 \log(1/\eps) ]
        + \eps \\
        &\lesssim \eps 
    \end{align*}
    where in the second to last step we used that $\E_{X \sim G}[1-w(X)] \lesssim  \eps/\log(1/\eps)$ by \Cref{it:filtering}, and also that $\E_{X \sim G}[(1-w(X))  p_{-\vec A}(X)\1(p_{-\vec A}(X)  \geq 100 \log(1/\eps))  ] \leq \eps $ by the deterministic condition \ref{cond:polynomials}.
    
    Back to explaining the sequence of bounds in \eqref{eq:temp00000001}-\eqref{eq:temp00000010}, the bound used in \eqref{eq:temp00000009} can be proved exactly as in \eqref{eq:tmp5345}
    and \eqref{eq:temp00000010} uses that $\lambda'' > C r \eps$ because of line \ref{line:loop} of the algorithm.

    We have thus completed the argument that $\tilde \Omega(\eps/d)$ mass is removed in every round of the loop of line \ref{line:loop}. To conclude the proof of  \Cref{cl:termination}, it remains to show that after $\tilde O(d/\eps)$ iterations the algorithm will necessarily terminate. First note that after that many iterations, there cannot be any outliers left. Moreover, by \Cref{it:filtering} a large fraction of inliers still remains in the dataset (i.e., $\E_{X \sim G}[w(X)] \geq 1 - 3\eps/\log(1/\eps)$). We claim that under this setting, $g_{r}(\vec \Sigma_w - \vec I) = \E_{X \sim G}[w(X)  \tilde p(X) ] \lesssim \eps \log(1/\eps)$ causing the stopping condition of line \ref{line:loop} to trigger. We show the bound as follows:

    \begin{align*}
        \E_{X \sim G}[w(X)  \tilde p(X) ]
        &\leq O(\eps) +   (\mu - \mu_w)^\top \vec A (\mu - \mu_w)  + (\mu_{G_w}-\mu)^\top \vec A (\mu-\mu_w) + (\mu_{G_w}-\mu)^\top \vec A^\top (\mu-\mu_w) \\
        &\lesssim  \eps + r \|\mu - \mu_w\|_{2,k}^2 + r \|\mu - \mu_w\|_{2,k}\|\mu_{G_w} - \mu_w\|_{2,k}
    \end{align*}
    where the first inequality is as in \eqref{eq:temp00000002}-\eqref{eq:temp00000006}. Finally
    to bound each term by $O(\eps)$ one can use the deterministic \Cref{cond:mean} and \eqref{eq:tmp5345}.

\end{proof}

\begin{proof}[Proof of \Cref{cl:final}]
    The claim follows by \Cref{LemCertiAlt_restated}. We show how to apply this lemma: By the definition of the stopping condition of the algorithm, upon termination we have that $g_{r}(\vec \Sigma_w - \vec I) \leq O(r\eps)$, or, equivalently $\frac{1}{r}\sum_{i=1}^r h_{i}(\vec \Sigma_w - \vec I) \leq O(\eps)$. This means that $h_{r }$, as the smallest term in the sum, must be $O(\eps)$. Since $h_{r }(\vec \Sigma_w - \vec I)$ is the $(\fr,k,k)$ norm of the matrix after deletion of the elements in $([d] \setminus H) \times ([d] \setminus H)$ (cf. line \ref{line:coords} of pseudocode), it follows that $\|(\vec \Sigma_w - \vec I)_{([d] \setminus H) \times ([d] \setminus H)} \|_{\fr,k,k} \leq h_{r }(\vec \Sigma_w - \vec I) \leq O(\eps)$. Combining with \Cref{fact:h_1_lower_bound}, we have that $\sup_{\norm{v}_2 = 1, \norm{v}_0 = k} v^\top ((\vec \Sigma_w - \vec I)_{([d] \setminus H) \times ([d] \setminus H)} ) v = O(\eps)$. We also recall that the dataset is ($\eps,\alpha,k$)-stable with $\alpha = 3\eps/\log(1/\eps)$ and, because of \Cref{it:filtering}, $\E_{X \sim G}[w(X)] \geq 1 - \alpha$ holds when exiting the main loop of the algorithm. All of these mean that we can apply \Cref{LemCertiAlt_restated} with $\lambda = O(\eps)$ and $\alpha = 3\eps/\log(1/\eps)$ to obtain that $\left| v_2^\top (\hat{\mu}_2 - \mu_2 )  \right| \leq \|v_2\|_2^2 O(\eps)$ for any vector $k$-sparse vector $v_2$ supported on $[d]\setminus H$.
\end{proof}

\end{proof}

\subsection{Proof of \Cref{cl:inequality_sparse_norms}}\label{app:sparse-matrix-CS}

We restate and prove the following inequality

\SPARSECS*
\begin{proof}
    First, $u^\top \vec A v = \sum_\ell u^\top \vec B^{(\ell)} v$. Consider a single matrix $\vec B^{(\ell)}$ from the sum and denote by $b^{(\ell)}_i$ for $i \in [d]$ the rows of $\vec B^{(\ell)}$ (only $k$ of them are non-zero and each has at most $k$ non-zero elements). We have the following:
    \begin{align}
         u^\top \vec B^{(\ell)} v &= u^\top {\vec B}^{(\ell)} v \notag \\
         &=  \sum_{i,j} u_i v_j  ({\vec B}^{(\ell)} )_{i j} \notag  \\
         &=  \sum_{i } u_i \sum_j v_j  ({\vec B}^{(\ell)} )_{i j}  \notag \\
         &\leq  \sum_{i } u_i  \| v \|_{2,k}  \| {b}_i^{(\ell)} \|_2  \label{eq:tmp52343}\\
         &\leq \| u \|_{2,k}  \| v \|_{2,k}  \| {\vec B}^{(\ell)} \|_\fr \label{eq:tmp52344}\\
         &=  \| u \|_{2,k}  \| v \|_{2,k} \;, \label{eq:finaleq}
    \end{align}
    where  the step in \eqref{eq:tmp52343} used Cauchy–Schwarz inequality along with the fact that at most $k$-entries are non zero in the $i$-th row of $\vec B^{(\ell)}$, and \eqref{eq:tmp52344} used Cauchy–Schwarz again along with the fact that there are at most $k$ non-zero rows in $\vec B^{(\ell)}$.

    Finally, summing over all terms in \eqref{eq:finaleq} concludes the proof.
\end{proof}

\section{Robust Sparse PCA and Linear Regression}
\subsection{Robust Sparse PCA}\label{sec:pca_main_body}
In this section, we show \Cref{thm:sparse-pca} via a novel reduction to mean estimation 
(which also implies new results for the dense setting).
A natural first attempt is to consider the following existing reduction to mean estimation (see, e.g., 
\cite{DKKPS19-sparse}): 
the mean of $\mathrm{vec}(XX^\top  -   \vec I)$ for $X \sim D$, 
where $\mathrm{vec}$ denotes the operator that converts matrices to vectors by stacking its rows, 
is exactly $\rho vv^\top$, which is also $\poly(k)$ sparse. 
However, the distribution of $\mathrm{vec}(XX^\top  -   \vec I)$ is not Gaussian 
but rather a second power of a Gaussian, thus existing mean estimators would only yield $O(\eps \log(1/\eps))$ error. 
We propose a different reduction, which does not lose this $\log(1/\eps)$ factor. 
Let $w$ be a unit vector that is an $\O(\eps \sqrt{\log(1/\eps)})$-approximation 
of the spike $v$ (e.g., using \cite{BDLS17}).
Denote the projection of $X$ onto the subspace orthogonal to $w$  by $\mathrm{Proj}_{w^\perp}(X)$.{\footnote{For a vector $u$, we use $\mathrm{Proj}_{u^\perp} (\cdot)$ to denote the projection operator on the null space of $u$.}
Our key idea is that $\mathrm{Proj}_{w^\perp}(X)$ conditioned on $w^\top x  =   \alpha$ can give information about $\mathrm{Proj}_{w^\perp}(v-w)$, i.e., the correction $v -  w$ in that subspace. 
We prove the following in \Cref{sec:appendix_pca}.
\looseness=-1

\begin{restatable}{claim}{CONDITIONAL}\label{claim:conditional}
    Let $X \sim \cN(0,\vec I+\rho v v^\top)$ be a random variable from the spiked covariance model and $w$ be a unit vector. Let $Z = \mathrm{Proj}_{w^\perp}(X)$, the projection of $X$ onto the subspace perpendicular to $w$. For $\alpha \in \R$, let $G_\alpha$ denote the distribution of $Z$ conditioned on $w^\top X = \alpha$. Then   $G_\alpha = \cN(\tilde{\mu}, \tilde{\vec \Sigma})$  with  $\tilde{\mu}=\frac{\rho (w^\top v) \alpha }{1+\rho (w^\top v)^2} \bar{v}$ and $ \tilde{\vec \Sigma} = \vec I + \frac{\rho }{1+ \rho (w^\top v)^2}\bar{v}\bar{v}^\top$, where $\bar{v}:= \mathrm{Proj}_{w^\perp} (v) =  v-(w^\top v)w$.
\end{restatable}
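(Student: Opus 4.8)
The plan is to compute the conditional distribution directly using the standard Gaussian conditioning formula, after setting up an appropriate orthogonal decomposition of $\R^d$ adapted to $w$. First I would write $X = (w^\top X) w + \mathrm{Proj}_{w^\perp}(X)$, so that $(w^\top X, Z)$ is a jointly Gaussian vector (it is a linear image of the Gaussian $X$), with $Z = \mathrm{Proj}_{w^\perp}(X)$. Since $X \sim \cN(0, \vec\Sigma)$ with $\vec\Sigma = \vec I + \rho vv^\top$, all the relevant moments are:
\begin{align*}
\Var(w^\top X) &= w^\top \vec\Sigma w = 1 + \rho (w^\top v)^2, \\
\Cov(Z, w^\top X) &= \mathrm{Proj}_{w^\perp} \vec\Sigma w = \rho (w^\top v)\, \bar v, \\
\Cov(Z) &= \mathrm{Proj}_{w^\perp} \vec\Sigma\, \mathrm{Proj}_{w^\perp} = \vec I_{w^\perp} + \rho\, \bar v\bar v^\top,
\end{align*}
where I use that $\mathrm{Proj}_{w^\perp} v = \bar v$ and $\mathrm{Proj}_{w^\perp} w = 0$, and $\vec I_{w^\perp}$ denotes the identity on the hyperplane $w^\perp$. (I'd note $\bar v$ indeed lies in $w^\perp$.)

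Next I would apply the classical formula for the conditional distribution of a jointly Gaussian pair: conditioned on $w^\top X = \alpha$, the vector $Z$ is Gaussian with mean
\[
\tilde\mu = \Cov(Z, w^\top X)\,\big(\Var(w^\top X)\big)^{-1}\,\alpha = \frac{\rho (w^\top v)\,\alpha}{1 + \rho (w^\top v)^2}\,\bar v
\]
(using that $\E[Z] = 0$ and $\E[w^\top X] = 0$), and covariance
\[
\tilde{\vec\Sigma} = \Cov(Z) - \Cov(Z, w^\top X)\,\big(\Var(w^\top X)\big)^{-1}\,\Cov(w^\top X, Z) = \vec I_{w^\perp} + \rho\,\bar v\bar v^\top - \frac{\rho^2 (w^\top v)^2}{1 + \rho (w^\top v)^2}\,\bar v\bar v^\top.
\]
Simplifying the scalar coefficient of $\bar v\bar v^\top$ gives $\rho - \frac{\rho^2 (w^\top v)^2}{1+\rho(w^\top v)^2} = \frac{\rho}{1 + \rho (w^\top v)^2}$, which matches the claimed $\tilde{\vec\Sigma} = \vec I + \frac{\rho}{1+\rho(w^\top v)^2}\bar v\bar v^\top$ (interpreting $\vec I$ as the identity on the ambient hyperplane, since $Z$ is supported on $w^\perp$).

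The only genuinely delicate point — and the place I'd be most careful — is the degenerate direction: $X$ (hence $Z$) is supported on the hyperplane $w^\perp$, so strictly speaking one should work within $w^\perp \cong \R^{d-1}$ (or pick an orthonormal basis $u_1, \dots, u_{d-1}$ of $w^\perp$ and phrase everything in coordinates) rather than treating $\tilde{\vec\Sigma}$ as a full-rank $d \times d$ matrix; the statement's "$\vec I$" must be read as the identity restricted to that subspace. Modulo that bookkeeping, the computation is a routine application of Gaussian conditioning, and I would just need to double-check that $\Var(w^\top X) = 1 + \rho(w^\top v)^2 > 0$ so the conditioning is well-defined (true since $\rho > 0$), and that all the projection identities $\mathrm{Proj}_{w^\perp}\vec\Sigma\,\mathrm{Proj}_{w^\perp}$ etc.\ expand correctly using $vv^\top = (\bar v + (w^\top v)w)(\bar v + (w^\top v)w)^\top$.
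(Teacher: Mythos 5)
Your proposal is correct and takes essentially the same approach as the paper: both reduce the claim to the standard conditional-Gaussian formula (the paper's Fact~\ref{fact:conditional}), identifying $\Var(w^\top X)$, $\Cov(Z,w^\top X)$, and $\Cov(Z)$ and then simplifying. The only cosmetic difference is that the paper first rotates so $w = e_d$ and writes everything in block-matrix coordinates, whereas you stay coordinate-free with projection operators — the bookkeeping point you flag about interpreting $\vec I$ as the identity on $w^\perp$ is exactly what the rotation makes explicit.
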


\looseness=-1 We use the result above to estimate $\mathrm{Proj}_{w^\perp}(v)$, and our final estimate, $\widehat{v}$, shall be $\widehat{v}_1 + \widehat{v}_2$, where $\widehat{v}_1$ estimates   $\mathrm{Proj}_{w}(v) = (w^\top v)w$ and $\widehat{v}_2$ estimates $\mathrm{Proj}_{w^\perp}(v)$.
Importantly, the mean of $G_\alpha$ is a scaled version of $\mathrm{Proj}_{w^\perp}(v)$, and thus if $z \approx \mu_{G_\alpha}$, we could use $\widehat{v}_2 = z( \rho \alpha(w^\top v))/(1 + \rho (w^\top v)^2)$.
However, since $w^\top v$ is unknown, we need to estimate it from data; we also need it to estimate $\widehat{v}_1$.
Note that $1+\rho (w^\top v)^2$ is the variance of $X^\top w$. Thus we can use one-dimensional (robust) variance estimation algorithm to find $y$ such that $|y - (w^\top v)^2| = O(\eps/\rho)$. This leads to \Cref{alg:pca}: 

\begin{algorithm}[H]
	\caption{Reduction from PCA to Mean Estimation}
	\label{alg:pca}
	\begin{algorithmic}[1]
  
        \State Find unit vector  $w$ such that $\|ww^\top -   vv^\top\|_\fr = O(\eps\sqrt{\log(1/\eps)}/\rho)$. \label{line:warmstart}
        \Comment{e.g. \cite{BDLS17}}
        \State Find $y$:  $|y -  (w^\top v)^2 | = O(\eps/\rho)$. \label{line:var_est} \hfill\Comment{by robustly estimating the variance of $(w^\top x)$  (see \Cref{cl:applicability})}
        \State Fix an $\alpha=\Omega(1)$ and find $z$ with $\|z - \mu_{G_\alpha}\|_2=O(\eps)$, where $G_\alpha$ is the conditional distribution from \Cref{claim:conditional}.\label{line:cond_est} \hfill\Comment{e.g., using \Cref{alg:main} (see \Cref{cl:applicability} for details)}

        \State Return $\widehat{v} = z\frac{1+ \rho y}{\rho \sqrt{y} \alpha} + w \sqrt{y}$.
	\end{algorithmic}

\end{algorithm}

We show that the final error, i.e., $\|\widehat{v} - v \|_2 \leq \|z\frac{1+ \rho y}{\rho \sqrt{y} \alpha}- \mathrm{Proj}_{w^\perp}(v)\|_2 + |\sqrt{y} - w^\top v |$, is $O(\eps/\rho)$ using the guarantees of the two aforementioned estimators (i.e., that $|y - (w^\top v)^2|=O(\eps)$ and $\|z - \mu_{G_\alpha}\|_2 = O(\eps) $); 
see \Cref{cl:closeness} at the end of this section for the detailed calculations. 
\looseness=-1

    To complete an overview of the proof of \Cref{thm:sparse-pca}, we need to explicitly show how to obtain $z$ in Line \ref{line:cond_est} of \Cref{alg:pca} using our sparse mean estimator, \Cref{thm:main}. An obvious issue is that we cannot simulate samples from $G_\alpha$ using $X\sim \cN(0,\vec I + \rho vv^\top)$ by rejection sampling---let alone that is at most $O(\eps)$ corrupted given $\eps$-corrupted $X$--- because the probability that a sample has $w^\top x = \alpha$ is zero. 
    To overcome this, we use insights from \cite{diakonikolas2023near} and relax this procedure by instead conditioning on samples to be in a thin interval $I$ around $\alpha$. 
    The resulting pseudocode is as follows:
\begin{enumerate}
        \item Draw $\alpha$ uniformly from $[-(1+\rho),1+\rho]$ and define the interval $I := [\alpha-\ell,\alpha+\ell]$ for $\ell = 1/\log(1/\eps)$.
        \item $T' = \{\mathrm{Proj}_{w^\perp}(x)  : \text{$x \in T$ and $w^\top x \in I$} \}$.
        \item Let $z$ be the output of \Cref{alg:main} run on $T'$.
    \end{enumerate}
    Although conditioning on an interval increases the probability and thus permits efficient rejection sampling,
the downside is that the conditional distribution on $w^\top x \in I$ is no longer a Gaussian distribution, but instead a continuous mixture of Gaussians:
\begin{align*}
    G_I(z) = \frac{\int_{\alpha' \in I}G_{\alpha'}(z) \Pr_{X \sim \cN(0, \vec I+\rho v v^\top)}[w^\top X = \alpha'] \d \alpha'}{\Pr_{X \sim \cN(0,\vec I+\rho v v^\top)}[ w^\top X \in I]}   \;.
\end{align*}
The mean of the mixture, $\mu_{G_I}$, may shift away from $\mu_{G_\alpha}$, 
and thus we need the length $\ell$ of $I$ to be small enough so that the shift is $O(\eps)$. 
Moreover, $G_I$ is not Gaussian, and thus \Cref{thm:main} is not applicable in a black-box manner. 
However, for small $\ell$, it is close enough to a Gaussian so that the deterministic conditions of \Cref{DefModGoodnessCond} hold with respect to $\mu_{G_\alpha}$ (the details are deferred to \Cref{sec:appendix_pca}).

\looseness=-1
To ensure applicability of our mean estimator, it remains to show that 
the fraction of outliers in the conditional dataset $T'$ is $O(\eps)$. 
This is why the center $\alpha$ of the interval needs to be chosen randomly (as in \cite{diakonikolas2023near}); otherwise, the outlier distribution could happen to have all outliers $x$ satisfy $w^\top x = \alpha$. 
To show that $T'$ is $O(\eps)$-corrupted, it suffices to check that the probability of an outlier $x$ 
satisfying $w^\top x \in I$ divided by the probability that an inlier $x'$ having $w^\top x' \in I$ is at most $O(1)$.  
Since $I$ is chosen independently of everything, 
we can imagine that the outlier $x$ is fixed and only $I$ is drawn randomly. 
Let us  examine only the case where $w^\top x \in [-2(1+\rho),2(1+\rho)]$ (because otherwise  $w^\top x \not\in I$). 
The probability that $w^\top x \in I$ is then the ratio of the length of $I$ to the length of the interval $[-2(1+\rho),2(1+\rho)]$, i.e.,  $O(\ell/(1+\rho))$. 
Regarding the inliers $x'$, we can use the same trick to imagine that $I$ is fixed and 
the inlier $x'$ is drawn from $\cN(0, I + \rho vv^\top)$. 
Note that $w^\top x' \sim \cN(0,\tilde{\sigma}^2)$ with $\tilde{\sigma}^2 := 1+\rho (w^\top v)^2$. 
Since $I \subseteq [-3 \tilde{\sigma}^2, 3\tilde{\sigma}^2]$, 
the Gaussian distribution behaves approximately uniformly there and thus the probability that $w^\top x\in I$ is $\Omega(\ell/\tilde{\sigma})$, which is also $\Omega(\ell/(1+\rho))$ by using $|w^\top v|^2 = \Omega(1)$. 
The details of this are again deferred to \Cref{sec:appendix_pca}.

We conclude this section with the formal proof of the error guarantee of \Cref{alg:pca} given the error guarantee of the mean estimator. 

\begin{claim}\label{cl:closeness}
    Under the assumptions of \Cref{thm:sparse-pca} and assuming that the estimators in Lines \ref{line:warmstart},\ref{line:var_est} and  \ref{line:cond_est} in \Cref{alg:pca} exist, we have that $\| \hat{v} - v \|_2 = O(\eps/\rho)$.
\end{claim}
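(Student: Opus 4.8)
The plan is to decompose the error of $\widehat v$ into its component along $w$ and its component along $w^\perp$, and to control each separately; the $w^\perp$ component turns out to be especially small since the warm start $w$ is already close to $v$. Write $v = (w^\top v)\,w + \bar v$ with $\bar v := \mathrm{Proj}_{w^\perp}(v)$, and recall $\widehat v = z\,\frac{1+\rho y}{\rho\sqrt y\,\alpha} + w\sqrt y$. Then $\widehat v - v = \big( z\,\frac{1+\rho y}{\rho\sqrt y\,\alpha} - \bar v\big) + (\sqrt y - w^\top v)\,w$, so by the triangle inequality it suffices to show $\|z\,\frac{1+\rho y}{\rho\sqrt y\,\alpha} - \bar v\|_2 = O(\eps/\rho)$ and $|\sqrt y - w^\top v| = O(\eps/\rho)$. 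Throughout we may assume $w^\top v \geq 0$ (otherwise replace $w$ by $-w$, which preserves the Line~\ref{line:warmstart} guarantee).

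First I would extract two consequences of the warm-start bound $\|ww^\top - vv^\top\|_\fr = O(\eps\sqrt{\log(1/\eps)}/\rho)$. Using the identity $\|ww^\top - vv^\top\|_\fr^2 = 2(1-(w^\top v)^2)$ for unit vectors, this gives $1-(w^\top v)^2 = O(\eps^2\log(1/\eps)/\rho^2)$, which is $o(1)$ in the regime $\rho = \Omega(\eps\log(1/\eps))$; hence $(w^\top v)^2 = \Theta(1)$ (say $\geq 1/2$) and $\|\bar v\|_2^2 = 1-(w^\top v)^2 = O(\eps^2\log(1/\eps)/\rho^2) = O(1)$. Combined with $|y-(w^\top v)^2| = O(\eps/\rho) = o(1)$ from Line~\ref{line:var_est}, this yields $\sqrt y = \Theta(1)$, and therefore $|\sqrt y - w^\top v| = \frac{|y - (w^\top v)^2|}{\sqrt y + w^\top v} = O(\eps/\rho)$, disposing of the second term.

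For the first term, write $z = \mu_{G_\alpha} + e$ with $\|e\|_2 = O(\eps)$ from Line~\ref{line:cond_est} and substitute $\mu_{G_\alpha} = \frac{\rho(w^\top v)\alpha}{1+\rho(w^\top v)^2}\,\bar v$ from \Cref{claim:conditional}. This gives $z\,\frac{1+\rho y}{\rho\sqrt y\,\alpha} - \bar v = \big( \frac{(w^\top v)(1+\rho y)}{\sqrt y\,(1+\rho(w^\top v)^2)} - 1 \big)\bar v + e\,\frac{1+\rho y}{\rho\sqrt y\,\alpha}$. The noise piece is bounded via the crude estimate $\frac{1+\rho y}{\rho\sqrt y\,\alpha} = O(1/\rho)$ (using $\rho \le 1$, $\sqrt y = \Theta(1)$, $\alpha = \Omega(1)$), so it has norm $O(\eps/\rho)$. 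For the scalar coefficient, a one-line manipulation gives $\frac{(w^\top v)(1+\rho y)}{\sqrt y(1+\rho(w^\top v)^2)} - 1 = \frac{(w^\top v - \sqrt y)(1 - \rho(w^\top v)\sqrt y)}{\sqrt y(1+\rho(w^\top v)^2)}$, and since $|w^\top v - \sqrt y| = O(\eps/\rho)$ while every other factor is $\Theta(1)$, this coefficient is $O(\eps/\rho)$; multiplying by $\|\bar v\|_2 = O(1)$ keeps it $O(\eps/\rho)$ (one in fact gets $O(\eps^2\sqrt{\log(1/\eps)}/\rho^2)$, though the improvement is not needed). Summing the three $O(\eps/\rho)$ contributions yields $\|\widehat v - v\|_2 = O(\eps/\rho)$.

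The only real subtlety is the bookkeeping of denominators: the bound should be driven solely by the unavoidable factor $1/\rho$, so one must verify that $w^\top v$, $\sqrt y$, and $\alpha$ are all bounded away from $0$ — the first two via the warm start and Line~\ref{line:var_est}, the last by the choice $\alpha = \Omega(1)$ — and that the hypothesis $\Omega(\eps\log(1/\eps)) < \rho < 1$ forces $\|\bar v\|_2$ and all cross-terms to be $O(1)$, so that no extra powers of $1/\rho$ creep in. Beyond this accounting there is no genuine obstacle.
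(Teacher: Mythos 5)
Your proof is correct and follows the same high-level strategy as the paper — decompose $\widehat v - v$ into pieces along $w$ and $w^\perp$, bound $|\sqrt y - w^\top v|$ directly, and apply a triangle inequality to the $w^\perp$ piece — but with a somewhat different and arguably cleaner decomposition of that second piece. The paper inserts the intermediate term $z\cdot\frac{1+\rho v_d^2}{\rho v_d\alpha}$ (keeping $z$ and switching to the exact scalar), and then needs the auxiliary bound $\|z\|_2\lesssim\rho$ before either summand can be controlled. You instead substitute $z = \mu_{G_\alpha}+e$ and factor, so the two resulting terms are bounded directly from $\|e\|_2 = O(\eps)$ and from your algebraic identity for the scalar coefficient; this bypasses the indirect $\|z\|_2$ bound entirely. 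Your identity
\[
\frac{(w^\top v)(1+\rho y)}{\sqrt y\,(1+\rho(w^\top v)^2)} - 1 = \frac{(w^\top v - \sqrt y)\,(1-\rho(w^\top v)\sqrt y)}{\sqrt y\,(1+\rho(w^\top v)^2)}
\]
checks out, and your observation that the sharper $\|\bar v\|_2 = O(\eps\sqrt{\log(1/\eps)}/\rho)$ would give the coefficient term an $O(\eps^2\sqrt{\log(1/\eps)}/\rho^2)$ bound is also correct, even though the crude $\|\bar v\|_2 = O(1)$ already suffices. Your bookkeeping that $\rho = O(1)$, $\alpha = \Theta(1)$, $\sqrt y = \Theta(1)$, and $w^\top v = \Theta(1)$ matches the paper's; the only cosmetic difference is that you explicitly fix the sign convention $w^\top v \ge 0$, which the paper leaves implicit. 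In short: same approach, marginally more direct decomposition, and you gain a slightly sharper estimate on the deterministic piece as a byproduct.
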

\begin{proof}
    Let $e_1,\ldots,e_d$ denote the standard orthonormal basis of $\R^d$.
    By a rotation of the space, we can assume without loss of generality that $w$ is aligned with $e_d$, i.e., $w= \|w\|_2 e_d$.  We denote by $\overline{v} = (v_1,\ldots,v_{d-1})$ the projection of $v$ to the subspace orthogonal to $w$ and by $v_d = w^\top v$ the projection of $v$ in the direction of $w$.

    Some useful observations for later on are the following: Note that, by \Cref{fact:fr-to-eucl}, the fact that we start with $\|ww^\top - vv^\top\|_\fr = O(\eps\sqrt{\log(1/\eps)}/\rho)$ in the algorithm implies that $\|w-v\|_2 = O(\eps\sqrt{\log(1/\eps)}/\rho)$, which also means that $1 \geq w^\top v \geq 1- O(\eps^2\log(1/\eps)/\rho^2)$. Since $w^\top v=v_d$ (by our rotation assumption) and $\|v\|_2=1$, the previous discussion means that: 
    \begin{align}
       &\|\bar{v}\|_2 = \sqrt{1 - (w^\top v)^2} =  O(\eps\sqrt{\log(1/\eps)}/\rho)  
         \, \text{ and } 1- O(\eps^2\log(1/\eps)/\rho^2) \leq v_d \leq 1 \,. \label{eq:v_d_bound}
    \end{align}

    Now, let $x \sim \cN(0,\vec I+ \rho v v^\top)$ be a random vector coming from our spiked covariance model. We want to consider the distribution of $(x_1,\ldots,x_{d-1})$ conditioned on $ w^\top x = \alpha$ (note that by our rotation assumption this is equivalent to conditioning on $x_d= \alpha$). By \Cref{claim:conditional}, this conditional distribution is   
        $\cN\left(\frac{\rho v_d \alpha }{1+\rho v_d^2}\overline{v}, \vec I + \frac{\rho }{1+ \rho v_d^2} \overline{v}\overline{v}^\top   \right)$.

    The error of our final estimator is 
    \begin{align*}
        \|\hat{v} - v \|_2 \leq \left\|z\frac{1+\rho y}{\rho \sqrt{y} \alpha} - \bar{v}  \right\|_2 + |\sqrt{y}-v_d| \;.
    \end{align*}
    The second term is $O(\eps/\rho)$ by line \ref{line:var_est} of the pseudocode. To see this, note that $|y-v_d^2|=O(\eps/\rho)$ implies $|\sqrt{y}-v_d|= \frac{|y-v_d^2|}{\sqrt{y} + v_d} = O(\eps/\rho)$ by using $v_d \geq 1/2$.
    For the first term, we have the following:
    \begin{align}
        \left\|z\frac{1+\rho y}{\rho \sqrt{y} \alpha} - \bar{v}  \right\|_2
        &\leq  \left\|z\frac{1+\rho y}{\rho \sqrt{y} \alpha} - z\frac{1+\rho v_d^2}{\rho v_d \alpha}  \right\|_2  + \left\| z\frac{1+\rho v_d^2}{\rho v_d \alpha} - \bar{v}  \right\|_2 \notag\\
        &\leq \|z\|_2 \left| \frac{1+\rho y}{\rho \sqrt{y} \alpha} -    \frac{1+\rho v_d^2}{\rho v_d \alpha} \right|
        + \frac{1+\rho v_d^2}{\rho v_d \alpha} \left\|z - \bar{v}\frac{\rho \alpha v_d}{1+\rho v_d^2}    \right\|_2 \label{eq:twoterms1} \\
        &\lesssim \rho \left| \frac{1+\rho y}{\rho \sqrt{y} \alpha} -  \frac{1+\rho v_d^2}{\rho v_d \alpha} \right| + O(\eps/\rho) \;, \label{eq:twoterms}
    \end{align}
    where the first line is a triangle inequality, and the last line used the following: 
    First, the factor $({1+\rho v_d^2})/({ \rho v_d \alpha})$ is $O(1/\rho)$ because of $\alpha = \Theta(1)$, $\rho=O(1)$, and   $1\geq v_d \geq 1/3$ (by \eqref{eq:v_d_bound}). Also, by the mean estimation guarantee (line \ref{line:cond_est} of the pseudocode), we have that $\|z - \bar{v}\frac{\rho \alpha v_d}{1+\rho v_d^2} \|_2 = O(\eps)$, which bounds the last term in \eqref{eq:twoterms1}. Lastly, the previous two imply that $\|z\|_2 \leq O(\rho)$: 
\begin{align*}
    \left\| z \right\|_2 &\leq \left\| z - \bar{v}\frac{\rho \alpha v_d}{1+\rho v_d^2}\right\|_2 + \left\|\bar{v}\frac{\rho \alpha v_d}{1+\rho v_d^2} \right\|_2 \\
    &\lesssim \eps + \rho  \| \bar{v} \|_2 \tag{mean estimation guarantee, $\alpha=\Theta(1),v_d\leq 1$}\\
    &\lesssim \eps + \rho  O(\eps \sqrt{\log(1/\eps)}/\rho) \tag{by \eqref{eq:v_d_bound}}\\
    &\lesssim \eps \sqrt{\log(1/\eps)} \lesssim \rho \;. \tag{by assumption}
\end{align*}
    This is because $z$ is $O(\eps)$-close to $\bar{v}\frac{\rho \alpha v_d}{1+\rho v_d^2}$, whose norm can be checked to be $O(\eps \sqrt{\log(1/\eps)})$ using that $\alpha=\Theta(1), v_d = \Theta(1)$ and $\|\bar{v}\|_2 = O(\eps\sqrt{\log(1/\eps)}/\rho)$ (by \eqref{eq:v_d_bound}) and finally $\eps \sqrt{\log(1/\eps)} \lesssim \rho$ by assumption.

    We now bound the remaining term in \eqref{eq:twoterms}. We know that $|y-v_d^2|=O(\eps/\rho)$, thus we also have $|\sqrt{y}-v_d|=O(\eps/\rho)$. Let us write $\sqrt{y} = v_d + \eta$, $y = v_d^2 + \eta'$ for some $|\eta| = O(\eps/\rho)$, $|\eta'| = O(\eps/\rho)$. Using this and doing some algebra, the term in \eqref{eq:twoterms} is
    \begin{align*}
        \rho\left| \frac{1+\rho y}{\rho \sqrt{y} \alpha} -  \frac{1+\rho v_d^2}{\rho v_d \alpha} \right|
        &= \frac{\rho}{\rho \alpha} \left| \frac{1+\rho v_d^2 + \rho \eta'}{v_d+\eta} - \frac{1+\rho v_d^2}{v_d} \right| \\
        &\leq   \left| \frac{\rho \eta' v_d - \eta - \rho v_d^2 \eta}{v_d(v_d+\eta)} \right|  \tag{$\alpha = \Theta(1)$}\\
        & \lesssim \eps/\rho \;,
    \end{align*}
    where in the last step we used $\rho=O(1)$, $ |v_d| = \Theta(1)$ (from \eqref{eq:v_d_bound}) and $|\eta| = O(\eps/\rho)$, $|\eta'| = O(\eps/\rho)$ to show that every term is $O(\eps/\rho)$.
    
\end{proof}

\subsection{Robust Sparse Linear Regression}

We conclude with \Cref{thm:lin-regression}, which follows a reduction to mean estimation from \cite{diakonikolas2023near}. 
Their reduction seamlessly extends to the sparse setting considered in this paper, 
thus we describe it only briefly.

\looseness=-1As a first step, using \cite{LiuSLC20} as preprocessing, we may assume that $\|\beta\|_2 \lesssim \sigma \eps \log(1/\eps)$ while using at most $O(k^2\log(d) \polylog(1/\eps)/\eps^2)$ many samples.
Analogously to \Cref{claim:conditional},  \cite[Claim 4.1]{diakonikolas2023near} shows the following:
let $Q_a$ denote the conditional distribution of $X$, 
conditioned on $y = \alpha$ for $(X,y) \sim P_{\beta,\sigma}$ in \Cref{def:robust-sparse-linear-regression}, 
then $Q_a \sim \cN( (\alpha/\sigma_y^2) \beta, \vec I - \beta \beta^\top/ \sigma_y^2 )$ for $\sigma_y^2 := \sigma^2 + \|\beta\|_2^2$.
Since $Q_a$ is an approximately isotropic Gaussian with a \emph{sparse} mean, 
we can hope to estimate it with $O(\eps)$ error in a sample-efficient way by \Cref{thm:main}.
Finally, to obtain \Cref{thm:lin-regression},  one needs to use similar tricks as in the last section 
(such as conditioning on a random interval of an appropriate length instead of a fixed point).
Fortunately, all of these approximations suffice to get $O(\eps)$ error as in \cite{diakonikolas2023near}.
The final algorithm is as follows:
\begin{algorithm}[h]
	\caption{Robust Linear Regression}
	\label{alg:regression}
	\begin{algorithmic}[1]
        \State \textbf{Input}: Set of points  $T=\{ (x_i,y_i) \}_{i\in[n]}$ and $\eps>0$. 
        \State \textbf{Output}: A vector $\widehat v \in \R^d$.
        \State Find $\widehat{\sigma}_y$ such that $|\widehat{\sigma}_y - \sigma_y^2| = O(\sigma_y^2 \eps \log(1/\eps) )$.
        \State Draw $\alpha \in \R$ uniformly at random from $[-\widehat{\sigma}_y,\widehat{\sigma}_y]$.
        \State Define $I = [\alpha - \ell, \alpha +\ell]$ for $\ell:= \widehat{\sigma}_y/\log(1/\eps)$.
        \State $T' \gets \{x : (x,y) \in T, y \in I \}$.
        \State Let $\widehat{\beta}_I$ be the output of \Cref{alg:main} on $T'$.
        \State Return $\widehat{\beta} := (\widehat{\sigma}_y^2/\alpha)\widehat{\beta}_I$.
	\end{algorithmic}
\end{algorithm}

\begin{remark}
\label{rem:norm}
We further expand on the norm constraint of $\|\beta\|_2 = O(\sigma)$ in \Cref{thm:lin-regression}. The algorithm in \cite{LiuSLC20} obtains an error of $O(\sigma \eps \log(1/\eps))$ but their sample complexity scales multiplicatively with $\log(\|\beta\|_2/\eps\sigma)$.
If we do not assume a norm constraint on $\beta$, the sample complexity in \Cref{thm:lin-regression} would also have an extra multiplicative term of $\log(\|\beta\|_2/\eps\sigma)$ if we use \cite{LiuSLC20} as a warm-start. 
However, this factor of $\log(\|\beta\|_2/\eps\sigma)$ does not appear in the information-theoretical rate and can potentially be removed using either a tighter analysis of \cite{LiuSLC20} or a different (computationally-efficient) algorithm.
    
\end{remark}

\section{Discussion}

In this paper, we presented the first computationally-efficient algorithms that achieve the information-theoretic optimal error under Huber contamination for various sparse estimation tasks.
We now discuss some immediate open problems. 
Starting with mean estimation, our algorithm (\Cref{thm:main}) needs to know the covariance matrix of the inlier distribution; developing a sample and computationally-efficient algorithm for an unknown covariance matrix remains an important open problem.
More broadly, one could consider robust covariance estimation in the sparse operator norm or the $(\fr,k,k)$ norm.
For robust PCA (\Cref{thm:sparse-pca}), our algorithm works for a somewhat restricted range of the spike parameter $\rho$.
Removing this spike assumption and, more broadly, developing a sparse PCA algorithm for the gap-free setting (similar to \cite{JamLT20,KonSKO20,DiaKPP23-pca} in the dense setting)  remains open.

\bibliography{allrefs}
\bibliographystyle{alpha}

\newpage

\appendix

\section*{Appendix} 

The Appendix is structured as follows: \Cref{app:prelims} includes omitted preliminaries and \Cref{sec:appendix_pca} provides the proof of \Cref{thm:sparse-pca} for PCA.

\section{Preliminaries}\label{app:prelims}

This section contains additional preliminaries and omitted facts and proofs.

\paragraph{Additional Notation}
If $U \subseteq [d] \times [d]$ is a set of pairs of indices such that for every $(i,j) \in U$, $(j,i)$ is also in $U$, then for any matrix $\vec A \in \R^{d \times d}$ we denote by $(\vec A)_U$ the matrix restricted to the entries from $U$.
We use $x \lesssim y$ to denote that $x \leq C y$ for some absolute constant $C$. 
We use the notation $a \gg b$ to mean that $a > C b$ where $C$ is some sufficiently large constant.

In the next few subsections, we state some well-known facts  without proof, and some useful lemmata.

\subsection{Miscellaneous Facts}
\label{subsec:misc_facts}

\begin{fact}[Cover of the Sphere]\label{fact:cover}
  Let $r>0$. Let $B_R = \{ x \in \R^d : \|x \|_2 \leq  R \}$. There exists a set $\cC \subseteq B_R$
  such that $|\cC| \leq (1+2R/\eta)^d$ and for every $v \in B_R$
  we have that $\min_{y \in \cC} \|y-v\|_2 \leq \eta$.
\end{fact}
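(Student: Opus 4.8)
The final statement to prove is \Cref{fact:cover}, the standard $\eta$-net bound for a Euclidean ball of radius $R$.

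\medskip

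\noindent\textbf{Proof proposal.} The plan is to use the classical volumetric (greedy packing) argument. First I would build a maximal set $\cC \subseteq B_R$ whose points are pairwise at distance greater than $\eta$ from each other; such a maximal $\eta$-separated set exists by Zorn's lemma (or, since we only need it to be finite, by iteratively adding points and observing the process must terminate, as shown below). Maximality immediately gives the covering property: if some $v \in B_R$ had $\min_{y \in \cC}\|y - v\|_2 > \eta$, we could add $v$ to $\cC$ and remain $\eta$-separated, contradicting maximality. Hence every $v \in B_R$ satisfies $\min_{y \in \cC}\|y-v\|_2 \leq \eta$.

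\medskip

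\noindent To bound $|\cC|$, I would use a packing argument. Since the points of $\cC$ are pairwise more than $\eta$ apart, the open balls of radius $\eta/2$ centered at points of $\cC$ are pairwise disjoint. Moreover, each such ball is contained in $B_{R+\eta/2}$ (a ball of radius $\eta/2$ around a point of norm at most $R$ lies within norm $R+\eta/2$). Comparing volumes in $\R^d$, where $\mathrm{vol}(B_\rho) = \rho^d \mathrm{vol}(B_1)$, gives
\begin{align*}
  |\cC| \cdot (\eta/2)^d \,\mathrm{vol}(B_1) \;\leq\; (R+\eta/2)^d \,\mathrm{vol}(B_1),
\end{align*}
so that $|\cC| \leq \left(\frac{R+\eta/2}{\eta/2}\right)^d = \left(1 + \frac{2R}{\eta}\right)^d$, which is exactly the claimed bound. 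This finite bound also justifies that the greedy construction terminates, so no appeal to Zorn's lemma is strictly needed.

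\medskip

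\noindent The argument is entirely routine; there is no real obstacle. The only point requiring a modicum of care is the containment step $B(y,\eta/2) \subseteq B_{R+\eta/2}$ for $y \in B_R$, which follows from the triangle inequality, and the bookkeeping that the disjoint half-radius balls all sit inside a common ball whose volume we can compare. One could alternatively phrase the whole thing without measures by noting that a greedy maximal packing must be finite since each new point excludes a ball of fixed volume from a bounded region, but the volume comparison is the cleanest way to get the explicit constant $(1+2R/\eta)^d$.
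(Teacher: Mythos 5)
Your proof is correct and is the standard volumetric (packing) argument for covering numbers of a Euclidean ball; the paper states \Cref{fact:cover} as a well-known fact without proof, so there is nothing to compare against, but your argument establishes exactly the claimed bound $(1+2R/\eta)^d$.
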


\begin{fact}\label{eq:ineq_matrices}
    For any square matrix $\vec A$ and positive-semidefinite matrix $\vec B$, 
    $\tr(\vec A \vec B) \leq \|\vec A\|_\op \tr(\vec B)$.
\end{fact}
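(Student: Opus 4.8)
The plan is to diagonalize the PSD matrix $\vec B$ and reduce the claim to the elementary bound $u^\top \vec A u \le \|\vec A\|_\op$ for unit vectors $u$. Concretely, I would first write the spectral decomposition $\vec B = \sum_{i=1}^d \lambda_i u_i u_i^\top$, where $\lambda_i \ge 0$ (using that $\vec B$ is positive semidefinite) and $\{u_i\}_{i \in [d]}$ is an orthonormal basis. By linearity of the trace together with its cyclic property, $\tr(\vec A \vec B) = \sum_i \lambda_i \tr(\vec A u_i u_i^\top) = \sum_i \lambda_i\, u_i^\top \vec A u_i$.

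Next I would bound each scalar $u_i^\top \vec A u_i$. Since $\vec A$ is not assumed symmetric (let alone PSD), I would avoid any eigenvalue bound and instead apply Cauchy--Schwarz: $u_i^\top \vec A u_i \le \|u_i\|_2 \,\|\vec A u_i\|_2 \le \|\vec A\|_\op \|u_i\|_2^2 = \|\vec A\|_\op$. Substituting this into the previous display gives $\tr(\vec A \vec B) \le \|\vec A\|_\op \sum_i \lambda_i = \|\vec A\|_\op \tr(\vec B)$, where the last equality is $\sum_i \lambda_i = \tr(\vec B)$. An essentially equivalent route, which I would use instead if it reads more cleanly in context, is to factor $\vec B = \vec B^{1/2}\vec B^{1/2}$ (symmetric PSD square root), use $\tr(\vec A\vec B) = \tr(\vec B^{1/2}\vec A\vec B^{1/2}) = \sum_i c_i^\top \vec A c_i$ for the columns $c_i$ of $\vec B^{1/2}$, and bound each term by $\|\vec A\|_\op\|c_i\|_2^2$ exactly as above, summing to $\|\vec A\|_\op \tr(\vec B^{1/2}\vec B^{1/2}) = \|\vec A\|_\op\tr(\vec B)$.

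There is no real obstacle here; the only point that warrants a moment of care — and the one step I would state explicitly — is the non-symmetry of $\vec A$, which forces the quadratic-form estimate $u_i^\top \vec A u_i \le \|\vec A\|_\op$ to go through Cauchy--Schwarz and the operator norm rather than through a Rayleigh-quotient/eigenvalue argument. Everything else is a one-line computation.
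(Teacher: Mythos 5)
Your proof is correct. The paper states this fact in the appendix without proof (it is listed among "some well-known facts ... without proof"), so there is nothing to compare against; your spectral-decomposition argument with Cauchy–Schwarz handling the non-symmetric $\vec A$ is a clean and standard way to establish it.
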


\begin{fact}[see, for example, \cite{diakonikolas2023algorithmic}]
\label{fact:prunning}
    For $x,y \in \R^d$ with $y$ being a $k$-sparse vector, we have that $\|t_k(x)-y\|_2 \leq \sqrt{6} \|x-y\|_{2,k}$, where $\|\cdot\|_{2,k}$ denotes the sparse Euclidean norm (\Cref{def:sparsenorm}) and $t_k(x)$ the operator that sets all but the $k$ coordinates with the largest absolute values to zero.
\end{fact}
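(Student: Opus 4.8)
I would prove this by a direct combinatorial comparison of the supports of $t_k(x)$ and $y$. Set $z := x - y$; let $S_x$ be the set of $k$ coordinates of largest absolute value of $x$ (padding arbitrarily to size exactly $k$ if $x$ has fewer than $k$ nonzero entries, which is harmless since the padded coordinates of $x$ vanish), so that $\supp(t_k(x)) \subseteq S_x$; and let $S_y := \supp(y)$, with $|S_y| \le k$. Since $t_k(x)-y$ is supported on $S_x \cup S_y$ and equals $z_i$ on $S_x$ and $-y_i$ on $S_y \setminus S_x$, we get
\[
\|t_k(x)-y\|_2^2 \;=\; \sum_{i \in S_x} z_i^2 \;+\; \sum_{i \in S_y \setminus S_x} y_i^2 .
\]
The first sum ranges over a set of size $\le k$, hence is $\le \|z\|_{2,k}^2$ by the variational definition in \Cref{def:sparsenorm}. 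For the second sum, writing $y_i = x_i - z_i$ and using $(a-b)^2 \le 2a^2+2b^2$ bounds it by $2\sum_{i \in S_y \setminus S_x} x_i^2 + 2\|z\|_{2,k}^2$.

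The heart of the argument is controlling $\sum_{i \in S_y\setminus S_x} x_i^2$. Because $|S_x| = k \ge |S_y|$, we have $|S_x\setminus S_y| \ge |S_y\setminus S_x|$, so I would fix an injection $\phi\colon S_y\setminus S_x \hookrightarrow S_x\setminus S_y$. For $i \in S_y\setminus S_x$ we have $i \notin S_x$ while $\phi(i) \in S_x$, so by the defining property of $t_k$ (it keeps the largest-magnitude coordinates) $|x_i| \le |x_{\phi(i)}|$; summing gives $\sum_{i\in S_y\setminus S_x} x_i^2 \le \sum_{j \in S_x\setminus S_y} x_j^2$. Finally $y_j = 0$ for $j \in S_x\setminus S_y$, so $x_j = z_j$ there, and this last sum equals $\sum_{j\in S_x\setminus S_y} z_j^2 \le \|z\|_{2,k}^2$ (again a set of size $\le k$). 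Putting the three bounds together,
\[
\|t_k(x)-y\|_2^2 \;\le\; \|z\|_{2,k}^2 + 2\|z\|_{2,k}^2 + 2\|z\|_{2,k}^2 \;=\; 5\,\|x-y\|_{2,k}^2 \;\le\; 6\,\|x-y\|_{2,k}^2 ,
\]
which proves the statement with room to spare (the constant $\sqrt{6}$ leaves slack over the $\sqrt 5$ one actually obtains). The only obstacle is essentially bookkeeping: setting up the injection $\phi$ (which needs only the cardinality inequality above), handling the degenerate case where $x$ has fewer than $k$ nonzero coordinates, and invoking the sorting property of $t_k$ correctly; everything else is Cauchy--Schwarz-level manipulation.
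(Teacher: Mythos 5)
Your proof is correct, and it even improves the constant to $\sqrt{5}$. Each step checks out: the decomposition $\|t_k(x)-y\|_2^2 = \sum_{i\in S_x} z_i^2 + \sum_{i\in S_y\setminus S_x}y_i^2$ is exact (outside $S_x\cup S_y$ both vectors vanish, on $S_x$ the difference is $z$, on $S_y\setminus S_x$ it is $-y$); the variational definition of $\|\cdot\|_{2,k}$ gives $\sum_{i\in S}z_i^2\le\|z\|_{2,k}^2$ for any $|S|\le k$; the cardinality bound $|S_x\setminus S_y|\ge|S_y\setminus S_x|$ follows from $|S_x|=k\ge|S_y|$, so the injection $\phi$ exists; $|x_i|\le|x_{\phi(i)}|$ is exactly the defining property of $t_k$ (and the degenerate padded case is handled since padded indices carry $x_i=0$); and $x_j=z_j$ on $S_x\setminus S_y$ since $y$ vanishes there.

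The paper itself states this fact by citation to \cite{diakonikolas2023algorithmic} without reproducing a proof, so there is no in-paper argument to compare against. That said, your argument is the standard one for bounds of this type (it is essentially the compressed-sensing truncation lemma): decompose the error over the supports, swap the coordinates of $S_y\setminus S_x$ against the dominating coordinates of $S_x\setminus S_y$ via an injection, and conclude with the sparse-norm bound. Nothing is missing and nothing would fail; the only minor stylistic point is that you could phrase the injection step slightly more cleanly by noting directly that $\sum_{i\in S_y\setminus S_x}x_i^2 \le \sum_{j\in S_x\setminus S_y}x_j^2$ because the right side sums the largest $|S_x\setminus S_y|$-many of the magnitudes of $x$ outside $S_x\cap S_y$ (no explicit $\phi$ needed), but that is cosmetic.
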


\begin{fact}\label{fact:fr-to-eucl}
    For unit vectors $w,v \in \R^d$, we have that $\|ww^\top - v v^\top \|_\fr = \Theta(\|w-v\|_2)$.
\end{fact}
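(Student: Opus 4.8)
The plan is to reduce both quantities to the single scalar $t := w^\top v \in [-1,1]$. First I would expand the Frobenius norm using $\langle \vec A,\vec B\rangle = \tr(\vec A^\top \vec B)$ together with $\|w\|_2 = \|v\|_2 = 1$:
\[
\|ww^\top - vv^\top\|_\fr^2 = \|ww^\top\|_\fr^2 - 2\langle ww^\top, vv^\top\rangle + \|vv^\top\|_\fr^2 = 1 - 2(w^\top v)^2 + 1 = 2\bigl(1 - t^2\bigr),
\]
where I used $\langle ww^\top, vv^\top\rangle = \tr(ww^\top vv^\top) = (w^\top v)^2$ and $\|ww^\top\|_\fr^2 = \tr(ww^\top ww^\top) = (w^\top w)^2 = 1$. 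On the other side, $\|w - v\|_2^2 = 2 - 2t$ and $\|w + v\|_2^2 = 2 + 2t$, so the identity above can be rewritten as $\|ww^\top - vv^\top\|_\fr^2 = \tfrac12\,\|w-v\|_2^2\,\|w+v\|_2^2$.

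Next I would address the sign ambiguity, which is the only subtle point. Since $ww^\top$ is invariant under $w \mapsto -w$, the left-hand side alone cannot control $\|w-v\|_2$ (consider $w \approx -v$, where $\|ww^\top - vv^\top\|_\fr \to 0$ but $\|w-v\|_2 \to 2$); the statement is intended with the normalization $t = w^\top v \ge 0$, which we may assume after possibly replacing $w$ by $-w$. Under this convention $\|w+v\|_2^2 = 2 + 2t \in [2,4]$, i.e.\ $\|w+v\|_2 \in [\sqrt2, 2]$, and substituting into the identity gives
\[
\|w - v\|_2^2 \;\le\; \|ww^\top - vv^\top\|_\fr^2 \;\le\; 2\,\|w-v\|_2^2,
\]
which is exactly the claimed $\Theta(\cdot)$ equivalence, with explicit constants $1$ and $\sqrt2$.

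I do not expect any real obstacle; the computation is a few lines and the only thing worth flagging is the sign convention above, without which the lower bound $\|ww^\top - vv^\top\|_\fr \gtrsim \|w-v\|_2$ would be vacuous. In every invocation of this fact (e.g.\ in the proof of \Cref{cl:closeness}) the warm-start vector $w$ can be taken with $w^\top v \ge 0$, since the algorithm only ever controls $ww^\top$, so the convention is harmless.
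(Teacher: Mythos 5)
Your computation is correct, and in fact the paper states this fact without a proof (it appears under ``Miscellaneous Facts,'' which are explicitly declared to be stated ``without proof''), so there is no in-paper argument to compare against. The identity $\|ww^\top - vv^\top\|_\fr^2 = 2(1 - (w^\top v)^2) = \tfrac12\|w-v\|_2^2\,\|w+v\|_2^2$ is exactly the standard derivation one would expect here, and the resulting two-sided bound with constants $1$ and $\sqrt{2}$ is tight. More importantly, the point you flag about the sign convention is a genuine (if minor) gap in the paper's statement: as written, the claim is false when $w$ is close to $-v$, and the fact is only an honest $\Theta$ if one normalizes $w^\top v \ge 0$ (or replaces $\|w - v\|_2$ with $\min(\|w-v\|_2, \|w+v\|_2)$). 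You also correctly observe that this is harmless where the paper uses the fact, namely in the proof of \Cref{cl:closeness}, since the warm-start estimate $w$ only controls $ww^\top$ and so can be chosen to have nonnegative inner product with $v$; indeed the subsequent bound \eqref{eq:v_d_bound} in that proof explicitly relies on $w^\top v$ being close to $+1$, consistent with your reading. No gaps in the proposed argument.
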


\subsection{Probability Facts}

\begin{fact}[Gaussian Norm Concentration] \label{lem:norm-conc} 
    For every $0\leq \beta \leq \sigma \sqrt{d}$ we have that
    \begin{align*}
        \Pr_{X \sim \cN(\vec 0,\sigma^2 \vec I)}[|\|X\|_2- \sigma \sqrt{d}| > \beta] \leq
        2\exp\left(-\frac{\beta^2}{16\sigma^2}   \right) \;.
    \end{align*}
\end{fact}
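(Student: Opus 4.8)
The plan is to reduce to the standard Gaussian and then invoke Gaussian concentration for Lipschitz functions. First I would write $X = \sigma Y$ with $Y \sim \cN(\vec 0,\vec I)$, so that the statement is equivalent to
$\Pr[\,|\,\|Y\|_2 - \sqrt d\,| > \beta/\sigma\,] \le 2\exp(-(\beta/\sigma)^2/16)$.
Setting $t := \beta/\sigma \in [0,\sqrt d]$, it suffices to prove $\Pr[\,|\,\|Y\|_2 - \sqrt d\,| > t\,] \le 2e^{-t^2/16}$ for every $t \in [0,\sqrt d]$. When $t \le 2$ the right-hand side is at least $2e^{-1/4} > 1$, so the bound is vacuously true; hence I may assume $t > 2$ from now on.

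For the range $t > 2$, the two ingredients are: (i) the map $y \mapsto \|y\|_2$ is $1$-Lipschitz on $\R^d$, so the Gaussian concentration inequality for Lipschitz functions (Borell--TIS, equivalently via log-Sobolev) gives $\Pr[\,|\,\|Y\|_2 - \E\|Y\|_2\,| > s\,] \le 2e^{-s^2/2}$ for all $s > 0$; and (ii) $\E\|Y\|_2$ lies within $1$ of $\sqrt d$. For (ii), Jensen's inequality gives $\E\|Y\|_2 \le \sqrt{\E\|Y\|_2^2} = \sqrt d$; and since the gradient of $y \mapsto \|y\|_2$ has unit Euclidean norm almost everywhere, the Gaussian Poincar\'e inequality yields $\Var(\|Y\|_2) \le 1$, whence $(\E\|Y\|_2)^2 = d - \Var(\|Y\|_2) \ge d-1$ and so $\E\|Y\|_2 \ge \sqrt{d-1} \ge \sqrt d - 1$ (the last step because $d \ge 1$).

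Combining the two, the triangle inequality shows that the event $\{\,|\,\|Y\|_2 - \sqrt d\,| > t\,\}$ is contained in $\{\,|\,\|Y\|_2 - \E\|Y\|_2\,| > t-1\,\}$, so $\Pr[\,|\,\|Y\|_2 - \sqrt d\,| > t\,] \le 2e^{-(t-1)^2/2}$. It then remains to verify the elementary inequality $(t-1)^2/2 \ge t^2/16$ for $t \ge 2$, i.e.\ $7t^2 - 16t + 8 \ge 0$; this holds because the larger root of that quadratic is $(8 + 2\sqrt 2)/7 < 2$. This gives $\Pr[\,|\,\|Y\|_2 - \sqrt d\,| > t\,] \le 2e^{-t^2/16}$, completing the proof after undoing the scaling.

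There is no real obstacle here; the only point needing care is the $O(1)$ offset between $\E\|Y\|_2$ and $\sqrt d$, which is absorbed uniformly in $d$ via the Poincar\'e bound on the variance, together with the observation that the claimed inequality is trivial precisely in the small-$t$ regime where that offset would otherwise be relevant. An alternative, somewhat more computational route would be to apply the Laurent--Massart $\chi^2_d$ tail bounds to $\|Y\|_2^2$ and convert them using $|\,\|Y\|_2 - \sqrt d\,| \le |\,\|Y\|_2^2 - d\,|/\sqrt d$, but I would prefer the Lipschitz argument above since it avoids a separate case analysis for the two sides.
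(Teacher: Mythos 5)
Your proof is correct. The paper does not actually prove this fact: \Cref{lem:norm-conc} appears in the appendix under a heading that explicitly declares the subsequent results to be ``well-known facts'' stated ``without proof.'' So there is no paper argument to compare against, and your self-contained derivation via Borell--TIS concentration plus the Gaussian Poincar\'e inequality is a reasonable way to establish it. The two small points one might worry about --- that the Lipschitz concentration controls deviations from $\E\|Y\|_2$ rather than from $\sqrt d$, and that the resulting loss of $1$ in the deviation parameter could hurt --- are both handled cleanly: Jensen plus Poincar\'e give $|\E\|Y\|_2-\sqrt d|\le 1$ uniformly in $d$, the offset is absorbed into the constant $16$ via the quadratic inequality $7t^2-16t+8\ge 0$ for $t\ge 2$, and the regime $t\le 2$ where the offset would matter is exactly where the claimed bound exceeds $1$ and is vacuous. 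All three sub-steps check out numerically.
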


\begin{fact}\label{fact:polynomial_var}
    For any $d \times d$ matrix $\vec A$, 
    it holds $\Var_{X \sim \cN(0,\vec I)}[X^\top \vec A X] = \|\vec A\|_\fr^2 + \tr(\vec A^2)$.
\end{fact}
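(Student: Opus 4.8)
The plan is to reduce the identity to a routine fourth-moment computation. Write $X = (X_1,\dots,X_d) \sim \cN(0,\vec I)$ and expand the quadratic form in coordinates as $X^\top \vec A X = \sum_{i,j} A_{ij} X_i X_j$. Since $\E[X_i X_j] = \delta_{ij}$ (the Kronecker delta), this already gives $\E[X^\top \vec A X] = \sum_i A_{ii} = \tr(\vec A)$, so the only substantive step is the second moment.

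For the second moment I would write
\begin{align*}
  \E\big[(X^\top \vec A X)^2\big]
  = \sum_{i,j,k,\ell} A_{ij} A_{k\ell}\, \E[X_i X_j X_k X_\ell]
\end{align*}
and invoke Isserlis' (Wick's) formula for centered Gaussians, $\E[X_i X_j X_k X_\ell] = \delta_{ij}\delta_{k\ell} + \delta_{ik}\delta_{j\ell} + \delta_{i\ell}\delta_{jk}$. The three terms contribute, respectively, $\sum_{i,k} A_{ii} A_{kk} = \tr(\vec A)^2$, $\sum_{i,j} A_{ij}^2 = \|\vec A\|_\fr^2$, and $\sum_{i,j} A_{ij} A_{ji} = \tr(\vec A^2)$. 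Subtracting $(\E[X^\top \vec A X])^2 = \tr(\vec A)^2$ yields $\Var_{X \sim \cN(0,\vec I)}[X^\top \vec A X] = \|\vec A\|_\fr^2 + \tr(\vec A^2)$, as claimed.

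If one prefers to avoid the explicit fourth-moment identity, an alternative is to symmetrize: since $X^\top \vec A X = X^\top \vec A_{\mathrm{sym}} X$ for $\vec A_{\mathrm{sym}} := (\vec A + \vec A^\top)/2$, one can diagonalize $\vec A_{\mathrm{sym}}$ by an orthogonal change of basis (using rotational invariance of $\cN(0,\vec I)$) and reduce to $\Var\big[\sum_i \lambda_i X_i^2\big] = \sum_i \lambda_i^2 \Var[X_i^2] = 2\sum_i \lambda_i^2 = 2\|\vec A_{\mathrm{sym}}\|_\fr^2$, where $\lambda_i$ are the eigenvalues of $\vec A_{\mathrm{sym}}$; expanding $\|\vec A + \vec A^\top\|_\fr^2 = 2\|\vec A\|_\fr^2 + 2\tr(\vec A^2)$ then gives $2\|\vec A_{\mathrm{sym}}\|_\fr^2 = \|\vec A\|_\fr^2 + \tr(\vec A^2)$. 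There is no genuine obstacle in either approach; the only point worth care is that $\vec A$ need not be symmetric, which is precisely why $\|\vec A\|_\fr^2$ (from the $\delta_{ik}\delta_{j\ell}$ pairing) and $\tr(\vec A^2)$ (from the $\delta_{i\ell}\delta_{jk}$ pairing) appear as two separate terms rather than collapsing into one.
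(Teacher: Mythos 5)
Your proof is correct. The paper records this as a stated-without-proof fact in its ``Probability Facts'' preliminaries (the paper explicitly says those subsections list well-known facts without proof), so there is no paper proof to compare against; both of your routes — the Isserlis/Wick fourth-moment expansion and the symmetrize-then-diagonalize reduction to $\Var[\sum_i \lambda_i X_i^2]=2\sum_i\lambda_i^2$ — are the standard arguments, carried out correctly, and you correctly identify that the only subtlety (the appearance of both $\|\vec A\|_\fr^2$ and $\tr(\vec A^2)$ as distinct terms) comes from $\vec A$ not being assumed symmetric.
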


\begin{definition}[Sub-Gaussian and Sub-gamma Random Variables]\label{def:orlich}
  A one-dimensional random variable $Y$ is sub-Gaussian if 
  $\| Y \|_{\psi_2} := \sup_{p\geq 1} p^{-1/2}\E\left[|Y|^p\right]$ is finite. 
  We say that $\| Y \|_{\psi_2}$ is the sub-Gaussian norm of $Y$. 
  A random vector $X$ in $\R^d$ is sub-Gaussian if 
  for every $v \in \cS^{d-1}$, $\| v^\top X \|_{\psi_2}$ is finite. 
  The sub-Gaussian norm of the vector is defined to be 
  \begin{align*}
    \| X \|_{\psi_2} := \sup_{v \in \cS^{d-1}} \| v^\top X \|_{\psi_2} \;.
  \end{align*}
  We call a centered one-dimensional random variable $Y$ a $(\nu,\alpha)_+$ sub-gamma if
  $\E[\exp(\lambda Y)] \leq \nu^2\lambda^2/2$ for all $0 \leq \lambda\leq 1/\alpha $.
  We call $\| Y \|_{\psi_1} := \sup_{p\geq 1}p^{-1} \E[|Y|^p]$ the sub-gamma norm of $Y$. 
\end{definition}

\begin{lemma}[Properties of Sub-gamma Random Variables~\cite{Wainwright19,BouLM13}]
\label{lem:sub-exp}
The class of sub-gamma random variables satisfy the following:
\begin{enumerate}
    \item \cite[Proposition 2.9]{Wainwright19} 
    If $Y$ is a centered $(\nu,\alpha)_+$ sub-gamma random variable, 
    then with probability $1-\delta$,
    $Y \lesssim \nu \sqrt{\log(1/\delta)} + \alpha \log(1/\delta)$.
    
    \item  \cite[Theorem 2.3]{BouLM13}   
    If $Y$ is a centered random variable satisfying that for all $\delta\in(0,1)$, 
    $Y \leq \nu \sqrt{\log(1/\delta)} + \alpha \log(1/\delta)$, 
    then $Y$ is $(\nu',\alpha')_+$ sub-gamma with 
    $\nu' \lesssim \nu + \alpha$ and $\alpha' \lesssim \alpha$.
    
    \item  \cite[Section 2.1.3]{Wainwright19}    
    Let $Y_1,\dots,Y_k$ be $k$ centered independent $(\nu,\alpha)_+$ sub-gamma random variables.
    Then $\sum_{i=1}^k Y_i $ is a $(\nu \sqrt{k},\alpha)_+$ sub-gamma random variable.
\end{enumerate}
\end{lemma}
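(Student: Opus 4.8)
The plan is to treat the three parts separately, since each is a short and standard computation in the theory of sub-gamma random variables; throughout I would read the definition of a centered $(\nu,\alpha)_+$ sub-gamma variable $Y$ in its intended form $\E[\exp(\lambda Y)] \le \exp(\nu^2\lambda^2/2)$ for all $0\le\lambda\le 1/\alpha$.

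For Part 3 (sum of independent sub-gammas) I would start, since it is immediate: by independence the moment generating function factorizes, so for $0\le\lambda\le1/\alpha$ one has $\E[\exp(\lambda\sum_{i=1}^k Y_i)] = \prod_{i=1}^k \E[\exp(\lambda Y_i)] \le \prod_{i=1}^k \exp(\nu^2\lambda^2/2) = \exp(k\nu^2\lambda^2/2)$, which is exactly the assertion that $\sum_{i=1}^k Y_i$ is $(\nu\sqrt k,\alpha)_+$ sub-gamma. For Part 1 (tail bound from MGF control) I would apply the Chernoff--Markov inequality to $\exp(\lambda Y)$ to get $\Pr[Y\ge t]\le\exp(-\lambda t+\nu^2\lambda^2/2)$ for every $\lambda\in[0,1/\alpha]$, and then choose $\lambda=\min\{t/\nu^2,\,1/\alpha\}$. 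A two-case analysis — the ``sub-Gaussian regime'' $t\le\nu^2/\alpha$, where $\lambda=t/\nu^2$ gives $\exp(-t^2/(2\nu^2))$, and the ``sub-exponential regime'' $t>\nu^2/\alpha$, where $\lambda=1/\alpha$ gives $\exp(-t/\alpha+\nu^2/(2\alpha^2))\le\exp(-t/(2\alpha))$ — yields $\Pr[Y\ge t]\le\exp(-\tfrac12\min\{t^2/\nu^2,\,t/\alpha\})$. Setting the right-hand side equal to $\delta$ and inverting the relation between $t$ and $\delta$ gives $Y\lesssim\nu\sqrt{\log(1/\delta)}+\alpha\log(1/\delta)$ with probability $1-\delta$, as claimed.

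For Part 2 (the converse: high-probability upper bound implies MGF control) the plan is to first rewrite the hypothesis as a tail bound $\Pr[Y>u]\le\exp(-\psi(u))$, where $\psi$ is obtained by inverting $u=\nu\sqrt{\phi}+\alpha\phi$ and satisfies $\psi(u)\gtrsim\min\{u^2/\nu^2,\,u/\alpha\}$. Then I would reconstruct a bound on $\E[\exp(\lambda Y)]$ by integrating this tail: writing $\E[\exp(\lambda Y)] = 1+\lambda\int_0^\infty e^{\lambda u}\Pr[Y>u]\,\d u-\lambda\int_{-\infty}^0 e^{\lambda u}\Pr[Y\le u]\,\d u$, using that $Y$ is centered, and splitting the positive-part integral at $u=\nu^2/\alpha$ so that its two pieces contribute $O(\nu^2\lambda^2)$ and $O(\alpha^2\lambda^2)$ respectively to the exponent whenever $\lambda\lesssim1/\alpha$; this produces $\E[\exp(\lambda Y)]\le\exp((\nu')^2\lambda^2/2)$ for $0\le\lambda\le1/\alpha'$ with $\nu'\lesssim\nu+\alpha$ and $\alpha'\lesssim\alpha$.

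I expect this last integration in Part 2 to be the only genuine (though still modest) obstacle: one must track the constants arising from the Gaussian-type and exponential-type regimes of the integral, as well as the contribution of the lower tail, in order to land on the stated $\nu'\lesssim\nu+\alpha$, $\alpha'\lesssim\alpha$. This is precisely the content of \cite[Theorem 2.3]{BouLM13}, which I would invoke for the exact constants; likewise Parts 1 and 3 are \cite[Proposition 2.9]{Wainwright19} and \cite[Section 2.1.3]{Wainwright19}, so a fully acceptable alternative is to cite all three verbatim rather than reproduce the arguments.
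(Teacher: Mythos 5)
The paper does not prove this lemma at all---it is imported verbatim via the citations to \cite{Wainwright19} and \cite{BouLM13}, exactly as you note at the end is an acceptable route. Your standalone sketches of all three parts are nonetheless correct (including reading \Cref{def:orlich} in its intended form, with $\exp(\nu^2\lambda^2/2)$ on the right-hand side rather than the paper's evident typo $\nu^2\lambda^2/2$), and you rightly single out Part~2 as the only direction requiring real care with the tail-to-MGF integration and the use of centering, which is precisely what \cite[Theorem~2.3]{BouLM13} supplies.
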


\begin{fact}[Hanson-Wright Inequality]\label{fact:hw}
   For $X \sim \cN(0,  \vec I)$ in $\R^d$ and for every square $d \times d$ matrix $\vec A$ 
   and scalar $t\geq 0$, the following holds:
    \begin{align*}
        \Pr[|X^\top  \vec A X - \E[X^\top  \vec A X]| > t] \leq 
        2 \exp \left( -0.1  \min\left( \frac{t^2}{  \| \vec A \|_\fr^2},  \frac{t}{  \| \vec A \|_\op} \right)\right) \;.
    \end{align*}
\end{fact}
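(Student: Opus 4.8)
The plan is to use the rotational invariance of the standard Gaussian to turn the quadratic form into a linear combination of independent $\chi^2$ random variables, and then apply a Chernoff bound via the sub-gamma machinery already recorded in \Cref{lem:sub-exp}. First, since $X^\top \vec A X = X^\top \bar{\vec A} X$ for the symmetrization $\bar{\vec A} := (\vec A + \vec A^\top)/2$, and since $\|\bar{\vec A}\|_\fr \le \|\vec A\|_\fr$ and $\|\bar{\vec A}\|_\op \le \|\vec A\|_\op$ (triangle inequality, using that transposition preserves both norms), it suffices to prove the bound with $\bar{\vec A}$ in place of $\vec A$; so we may assume $\vec A$ symmetric. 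Diagonalizing $\vec A = \vec U\,\mathrm{diag}(\lambda_1,\dots,\lambda_d)\,\vec U^\top$ with $\vec U$ orthogonal, and using that $\vec U^\top X$ has the same law as $X$, we get $X^\top \vec A X \stackrel{d}{=} \sum_{i=1}^d \lambda_i Z_i^2$ with $Z_1,\dots,Z_d \sim \cN(0,1)$ i.i.d., where $\E[X^\top \vec A X] = \sum_i \lambda_i$, $\sum_i \lambda_i^2 = \|\vec A\|_\fr^2$, and $\max_i |\lambda_i| = \|\vec A\|_\op$. Thus it remains to control the tails of $W := \sum_i \lambda_i(Z_i^2 - 1)$.

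\textbf{MGF estimate.}
A direct computation gives $\E[e^{s(Z^2-1)}] = e^{-s}(1-2s)^{-1/2}$ for $s < 1/2$, and expanding $-s - \tfrac12\log(1-2s) = \sum_{k\ge 2}(2s)^k/(2k)$ and bounding the geometric tail shows $\E[e^{s(Z^2-1)}] \le e^{2s^2}$ whenever $|s| \le 1/4$. Hence, for $|s| \le 1/(4\|\vec A\|_\op)$, independence gives $\E[e^{sW}] = \prod_i \E[e^{s\lambda_i(Z_i^2-1)}] \le \prod_i e^{2s^2\lambda_i^2} = e^{2s^2\|\vec A\|_\fr^2}$. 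In the language of \Cref{def:orlich}, both $W$ and $-W$ are centered $(2\|\vec A\|_\fr,\,4\|\vec A\|_\op)_+$ sub-gamma.

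\textbf{Chernoff bound and constants.}
For $t \ge 0$, plugging $s = \min\{\,t/(4\|\vec A\|_\fr^2),\ 1/(4\|\vec A\|_\op)\,\}$ into $\Pr[W > t] \le e^{-st + 2s^2\|\vec A\|_\fr^2}$ and splitting cases on whether $t \le \|\vec A\|_\fr^2/\|\vec A\|_\op$ yields $\Pr[W > t] \le \exp(-\tfrac18\min(t^2/\|\vec A\|_\fr^2,\ t/\|\vec A\|_\op))$ (alternatively, this is immediate from item (1) of \Cref{lem:sub-exp}). Applying the same to $-W$ and a union bound gives the two-sided inequality with leading factor $2$, and since $1/8 > 0.1$ there is room to replace the clean constant by the stated $0.1$. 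There is no genuine obstacle here: the only nonroutine ingredient is the one-line $\chi^2$ MGF bound $e^{-s}(1-2s)^{-1/2} \le e^{2s^2}$ on $|s| \le 1/4$, everything else being the rotational-invariance reduction and constant bookkeeping in the Chernoff step.
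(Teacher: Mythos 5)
The paper does not prove this statement: \Cref{fact:hw} sits in the ``Probability Facts'' subsection, which the paper explicitly introduces as a list of well-known facts stated without proof, and the citation is implicitly to the standard literature (e.g., Boucheron--Lugosi--Massart or Vershynin). Your argument is the canonical proof of Hanson--Wright specialized to the Gaussian case, and it is correct in all its steps: the symmetrization and rotational-invariance reduction to $\sum_i \lambda_i(Z_i^2-1)$; the $\chi^2$ MGF bound $\E[e^{s(Z^2-1)}]=e^{-s}(1-2s)^{-1/2}\le e^{2s^2}$ for $|s|\le 1/4$ (the series $\sum_{k\ge 2}(2s)^k/(2k)\le \tfrac14\cdot\tfrac{(2|s|)^2}{1-2|s|}\le 2s^2$ checks out); and the Chernoff optimization giving constant $1/8$, which dominates the stated $0.1$. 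One small observation: the symmetrization step is needed precisely because the paper states the bound for an arbitrary (not necessarily symmetric) $\vec A$, and since $\|\bar{\vec A}\|_\fr\le\|\vec A\|_\fr$ and $\|\bar{\vec A}\|_\op\le\|\vec A\|_\op$ the bound you prove for $\bar{\vec A}$ is only tighter, so the reduction is sound. (Incidentally, \Cref{def:orlich} in the paper has a typo --- it is missing the $\exp(\cdot)$ around $\nu^2\lambda^2/2$ --- but you wrote out the MGF bound explicitly, so your derivation is unaffected.)
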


\subsection{Deterministic Conditions}
\label{subsec:deterministic}

The correctness of our algorithm will require the inliers to satisfy generic structural properties defined in \Cref{DefModGoodnessCond}. 

Recall that for a distribution $D$ and a weight function $w$, 
we denote the weighted (and appropriately normalized) version of $D$ using $w$ by $D_w$. 
We further use $\mu_{D_w}$ to denote the mean of $D_w$. 
We restate the conditions and then state formally the lemma showing that Gaussian samples satisfy them with high probability. 
\GOODNESS*

The following lemma demonstrates that 
the uniform distribution over a sufficiently large set of samples drawn from $\cN(\mu, \vec I_d)$
satisfies the deterministic conditions with respect to $\mu$. 

\begin{restatable}[Sample Complexity of Goodness Conditions]{lemma}{SAMPLES}\label{lem:samples}
    Let $\eps_0>0$ be a sufficiently small absolute constant. 
    Let $S$ be a set of $n$ samples drawn i.i.d. from $\cN(\mu,\vec I_d)$. 
    Let $G$ denote the uniform distribution on the points from $S$. 
    If $\eps<\eps_0$,  $k^2 \leq d$ and $n \gg  \frac{1}{\min\{\eps^2,\alpha^2\}} (k^2\log(d) + \log(1/\delta))\polylog(1/\eps)$, 
    then with probability at least $1-\delta$, $G$ is ($\eps,\alpha,k$)-good with respect to $\mu$.
\end{restatable}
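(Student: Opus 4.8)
The plan is to prove each of the five deterministic conditions in \Cref{DefModGoodnessCond} separately, using a combination of standard concentration for Gaussians and uniform (union-bound-over-a-net) arguments adapted to the sparse regime. Throughout, I would think of the weight functions $w$ with $\E_{X\sim G}[w(X)]\ge 1-\alpha$ as selecting a $(1-\alpha)$-fraction of the sample; thus bounding $\mu_{G_w}-\mu$ or $\overline{\vec\Sigma}_{G_w}-\vec I$ over all such $w$ is equivalent to controlling the worst-case deviation when an $\alpha$-fraction of points is deleted. The key reduction is that all sparse-norm quantities ($\|\cdot\|_{2,k}$, $\|\cdot\|_{\op,k}$, $\|\cdot\|_{\fr,k,k}$, and the $k^2$-sparse degree-2 polynomial families) can be controlled by taking a union bound over the $\binom{d}{k}\le d^k$ (or $\binom{d}{k}^2\le d^{2k}$ for matrices) choices of support, and then over a fixed-dimensional net within each support. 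This is exactly where the $k^2\log d$ sample complexity arises: a matrix $\vec A$ with $k$ nonzero rows each with $k$ nonzero entries lives in a $k^2$-dimensional space, and there are at most $d^{2k}\le e^{2k\log d}$ such supports.

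For \Cref{cond:mean} and \Cref{cond:covariance}: first establish the full-distribution statement (no deletions) by net arguments — for each $k$-sparse support $S$, $\mu_S$ concentrates around $\mu_S$ at rate $\sqrt{k/n}$ by standard sub-Gaussian concentration, and $(\overline{\vec\Sigma})_S-\vec I$ concentrates at rate $\sqrt{k/n}$ in operator norm by the Gaussian covariance concentration bound (matrix Bernstein or an $\epsilon$-net over the $k$-sphere), and a union bound over $d^k$ supports costs an extra $\sqrt{k\log d/n}$ factor. Then handle deletions: the effect of removing an $\alpha$-fraction is bounded by $\alpha\sqrt{\log(1/\alpha)}$ in the relevant norm because the Gaussian is sub-Gaussian — formally, $\sup_{w:\E[w]\ge1-\alpha}\|\mu_{G_w}-\mu\|_{2,k}\lesssim \E_{X\sim G}[(1-w(X))\|X-\mu\|_{2,k}]/(1-\alpha)$, and the worst case is to delete the $\alpha$-fraction with largest projections, which by the Gaussian tail contributes $\alpha\sqrt{\log(1/\alpha)}$ (in each coordinate direction, then via the net over $k$-sparse directions). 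This $\alpha\sqrt{\log(1/\alpha)}$ term is the dominant one in the goodness regime $\alpha=\Theta(\epsilon/\log(1/\epsilon))$, and $n\gtrsim (k^2\log d)/\alpha^2$ suffices to make the pure-concentration term lower order.

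For \Cref{cond:polynomials} (parts \ref{cond:poly-conc}, \ref{cond:hw}, \ref{cond:extra}) and \Cref{cond:linear_tails}: these are tail statements, so the plan is to first prove them in expectation/probability over $\cN(\mu,\vec I)$ using the Hanson–Wright inequality (\Cref{fact:hw}) — a matrix $\vec A$ with $\|\vec A\|_\fr\le\sqrt{\log(1/\epsilon)}$ and $\|\vec A\|_\op\le 1$ gives $\Pr[p(X)>10\log(1/\epsilon)]\le 2\exp(-0.1\min(100\log^2(1/\epsilon)/\log(1/\epsilon),\,10\log(1/\epsilon)))\le\epsilon$ for \ref{cond:hw}, and an integration-by-parts / layer-cake argument upgrades this to the truncated-expectation bound \ref{cond:poly-conc}. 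Part \ref{cond:extra} additionally involves a linear term $h(x)$; one conditions on $h(x)$, uses that it is Gaussian to get a tail, and combines with Cauchy–Schwarz and the $p(X)$ moment bounds. The main obstacle — and where most of the work lies — is making these hold \emph{uniformly} over the infinite family of admissible $\vec A$'s (and $v$'s, $\beta$'s). I would handle this with a chaining/net argument: discretize the set of $k^2$-sparse matrices with $\|\vec A\|_\fr\le\sqrt{\log(1/\epsilon)}$ via a net of size $\exp(O(k^2\log(d/\epsilon)))$ (choosing supports, $d^{2k}$ of them, then an $\eta$-net in the $k^2$-dimensional Frobenius ball, $(\tfrac{3\sqrt{\log(1/\epsilon)}}{\eta})^{k^2}$ of them), show the bound at each net point with failure probability $\exp(-\Omega(k^2\log(d/\epsilon)))$ by a Bernstein-type bound on the empirical average (using that $p(X)$ is sub-exponential with controlled Orlicz norm, so its empirical mean over $n$ samples concentrates, and the truncation makes all relevant random variables bounded by $O(d\log^2(d/\epsilon))$ after the naive pruning — but here pruning is external, so instead use sub-gamma concentration from \Cref{lem:sub-exp}), and then bound the discretization error by Lipschitz-continuity of $p$ in $\vec A$ together with the crude bound $\|X-\mu\|_\infty\lesssim\sqrt{\log(nd)}$ that holds w.h.p. over the sample. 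I would carry out the steps in this order: (1) record the net cardinalities; (2) prove \Cref{cond:mean,cond:covariance} via the deletion-plus-concentration argument; (3) prove \Cref{cond:linear_tails} (simplest, just Gaussian linear tails plus a net over $k$-sparse directions); (4) prove \Cref{cond:polynomials} via Hanson–Wright plus the net/chaining argument; (5) union-bound all the failure events. The hard part will be step (4): controlling the uniform deviation of truncated sparse degree-2 polynomials, because both the truncation threshold and the matrix vary, so one cannot directly invoke a black-box empirical-process bound and must carefully track the sub-gamma parameters and the net resolution needed to absorb the discretization error into the $\epsilon$ slack.
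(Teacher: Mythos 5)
Your plan tracks the paper's proof closely. For the hardest part, Condition~(2) on thresholded sparse degree-two polynomials, the paper uses exactly the skeleton you describe: a pointwise Hanson--Wright tail bound giving $\E_{X\sim\cN(\mu,\vec I)}[\tau_{\vec A}(X)]\lesssim\eps^4$ and the $\eps$-tail bound, the observation that $\tau_{\vec A}(X)-\E[\tau_{\vec A}(X)]$ is $(\|\vec A\|_\fr,\|\vec A\|_\op)$-sub-gamma so the empirical average concentrates at rate $\|\vec A\|_\fr\sqrt{\log(1/\delta)/n}+\|\vec A\|_\op\log(1/\delta)/n$, a union bound over an $\eta$-net of $k^2$-sparse matrices, and a discretization-error argument using a crude deterministic bound on $\|X_i-\mu\|_2$ (of order $\sqrt d$) holding simultaneously for all sample points w.h.p.\ to absorb the Lipschitz error in $\vec A$ into the $\eps$ slack. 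The only structural difference is that the paper cites prior work for Conditions~(1.a), (1.b), and~(3), whereas you outline a from-scratch deletion-plus-net argument; that argument is the standard one and would go through. One small slip worth flagging: the family in Condition~(2) consists of matrices with at most $k^2$ non-zero entries anywhere in the $d\times d$ grid, so the number of support patterns is $\binom{d^2}{k^2}=\exp(\Theta(k^2\log d))$, not $d^{2k}$ as you write (you appear to be counting only $k\times k$ block supports). The slip is harmless because the within-support $\eta$-net already contributes $\exp(\Theta(k^2\log(1/\eta)))$ terms with $\eta=\poly(\eps/d)$, which dominates, so your final net cardinality of $\exp(O(k^2\log(d/\eps)))$ and hence your sample complexity are correct.
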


\begin{proof}

    The proof of \Cref{cond:mean,cond:covariance} can be found in prior work (see, e.g., \cite{li2018principled}).
    \Cref{cond:linear_tails}
    uses that for all $k$-sparse unit vectors $v$, and $T \geq 6$, 
$\Pr_{X \sim G}[ |v^\top (X -\mu)| \geq T] \leq 3 \mathrm{erfc}\left( \frac{T}{\sqrt{2}} \right)  + \frac{\eps^2}{T^2} $, which can be found in Lemma 4.3 in \cite{DKKPS19-sparse}.

    We prove the remaining conditions below.
    
    \paragraph{Proof of \Cref{cond:polynomials}:} 
    We will use the notation 
    $g_{\vec A}(x) = (x-\mu)^\top\vec  A (x-\mu)$ (i.e., we do not include the centering $\tr(\vec A)$ in the polynomial and also we write the matrix $A$ in the subscript for extra clarity) and 
    $\tau_{\vec A}(x) = (g_{\vec A}(x) -\tr(\vec A)) \1(g_{\vec A}(x) -\tr(\vec A) > 100 \log(1/\eps))$.
    The proof will consist of the following steps (the last two steps involve a cover argument):
    
    \begin{enumerate}[label=(\arabic*)]
        \item First, we show that $\E_{X \sim \cN(\mu,I)}[\tau_{\vec A}(X)] \lesssim \eps^4$ 
        for every $\vec A$ of the form mentioned in \Cref{cond:polynomials}.\label{item:E[tau]_is_bounded}
    
        \item We then show that $\tau_{\vec A}(X) - \E_{X \sim \cN(\mu,\vec I)}[\tau_{\vec A}(X)]$ 
        for $X \sim \cN(\mu,\vec I)$ is a sub-gamma random variable for every $\vec A$ of the form
        mentioned in \Cref{cond:polynomials}.\label{item:tau_is_subexponential}
        
        \item Then, we show that for any fixed $\vec A$ of that form, 
        if $X_1,\ldots, X_n$ are i.i.d. samples from $\cN(\mu,\vec I)$, with probability $1-\delta$,
        we have 
        $   \frac{1}{n}\sum_{i=1}^n\tau(X_i) - \E_{X \sim \cN(\mu,\vec I)}[\tau(X)] 
    \lesssim \frac{1}{\sqrt{n}}\|\vec A\|_\fr \sqrt{\log(1/\delta)} + \frac{1}{n}\|\vec A\|_\op \log(1/\delta)$.\label{item:empirical_mean_bounded_for_single_A}
        
        \item Finally, with probability $1-\delta'$, $
    \E_{X \sim S}[\tau_{\vec A}(X)] 
    \lesssim \eps^4 + \sqrt{\frac{\log(1/\eps)(k^2\log(d)+\log(1/\delta'))}{n}} 
    +\frac{ k^2\log(d/\eps)+\log(1/\delta')}{n}$ 
        holds \emph{simultaneously} for all matrices $\vec A$ of the form mentioned in \Cref{cond:polynomials}.\label{item:empirical_mean_bounded_for_ALL_A}
    \end{enumerate}
    
    \vspace{11pt}
    
    We now prove the claims above.\\\\
    \emph{Proof of
    \Cref{item:E[tau]_is_bounded}:} 
    Using the Hanson-Wright inequality (\Cref{fact:hw}), we have that
    \begin{align}
        \Pr_{X \sim \cN(\mu,\vec I)}[|g_{\vec A}(X) - \tr(\vec A)  |> t] \leq 2 \exp\left( - 0.1 \min\left(\frac{t^2}{\|\vec A\|_\fr^2 }, \frac{t}{ \| \vec A \|_\op }  \right)  \right) \;. \label{eq:hw}
    \end{align}
    Setting $t=100\log(1/\eps)$, $\|\vec A\|_\fr \leq \sqrt{\log(1/\eps)}$, and $\|\vec A\|_\op \leq 1$, the above becomes $\Pr_{X \sim \cN(\mu,\vec I)}[ g_{\vec A}(X) - \tr(\vec A) > 100 \log(1/\eps)] \lesssim \eps^{10}$. This allows us to upper bound $\E_{X \sim \cN(\mu,\vec I) }[\tau_{\vec A}(X)]$ by $O(\eps^4)$ as follows: 
    \begin{align}
        \E_{X \sim \cN(\mu,\vec I) }[\tau_{\vec A}(X)] &= \E_{X \sim \cN(\mu,\vec I) }[(g_{\vec A}(X) - \tr(\vec A))\1(g_{\vec A}(X)-\tr(\vec A)>t)] \notag \\
        &\leq \sqrt{ \Pr_{X \sim \cN(\mu,\vec I) }[g_{\vec A}(X) -\tr(\vec A)>t] \E_{X \sim \cN(\mu,\vec I) }[(g_{\vec A}(X) - \tr(\vec A))^2]} \tag{by Cauchy-Schwarz} \\
        &= \sqrt{\Pr_{X \sim \cN(\mu,\vec I) }[g_{\vec A}(X) -\tr(\vec A)>t]} \sqrt{\Var_{X \sim \cN(\mu,\vec I)}[g_{\vec A}(X)]}   \notag \\
        &\lesssim \eps^5  \sqrt{\|\vec A\|_\fr+\tr(A^2)} \leq \eps^5 \sqrt{\|\vec A\|_\fr+\|\vec A\|_\fr^2} \lesssim \eps^5 \log(1/\eps) \lesssim \eps^4  \;, \label{eq:exp}
    \end{align}
    where the final inequality uses \Cref{fact:polynomial_var} and then it uses the fact that $\tr(\vec A^2) \leq \|\vec A \|_\fr^2$, which can be seen as follows: Let $A_i$ denote the rows of $\vec A$ and $\tilde{A}_i$ the rows, then $\tr(\vec A^2) = \sum_{i=1}^d A_i^\top \tilde{A}_i \leq  \sum_{i=1}^d \|A_i\|_2 \|\tilde{A}_i\|_2 \leq  \sum_{i=1}^d (\|A_i\|_2^2+ \|\tilde{A}_i\|_2^2)/2 \leq \|\vec A\|_\fr^2$.\\\\
\emph{Proof of \Cref{item:tau_is_subexponential}:}
Re-writing \Cref{eq:hw}, we see that
\begin{align} \label{eq:tmp1}
    \left| g_{\vec A}(X) - \tr(\vec A) \right| \lesssim \|\vec A\|_\fr \sqrt{\log(1/\delta)} + \|\vec A\|_\op \log(1/\delta)\;.
\end{align}
Moreover, we can write that 
\begin{align}
    \tau_{\vec A}(X) - \E_{X \sim \cN(\mu,\vec I)}[\tau_{\vec A}(X)]
    &\leq \tau_{\vec A}(X) \tag{$\tau_{\vec A}(X)\geq 0$ by definition} \\
    &= (g_{\vec A}(X) - \tr(\vec A))\1(g_{\vec A}(X)-\tr(\vec A)>100\log(1/\eps)) \notag \\
    &\leq | g_{\vec A}(X) - \tr(\vec A) | \notag \\
    &\lesssim \|\vec A\|_\fr \sqrt{\log(1/\delta)} + \|\vec A\|_\op \log(1/\delta) \tag{by \eqref{eq:tmp1}}
\end{align}
which means that the random variable $\tau_{\vec A}(x) - \E_{X \sim \cN(\mu,\vec I)}[\tau_{\vec A}(X)]$ is also $(\nu,\beta)_ +  $-sub-gamma with $\nu=\|\vec A\|_\fr$ and $\beta=\|\vec A\|_\op$.\\\\
\emph{Proof of \Cref{item:empirical_mean_bounded_for_single_A}} 
By \Cref{lem:sub-exp}, \Cref{item:tau_is_subexponential} implies that for i.i.d. samples 
$X_1,\ldots,X_n \sim \cN(\mu,\vec I)$, the average $\frac{1}{n}\sum_{i=1}^n\tau_{\vec A}(X_i) - \E_{X \sim \cN(\mu,\vec I)}[\tau_{\vec A}(X)]$ 
is ($\|\vec A\|_\fr/\sqrt{n},\|\vec A\|_\op/n$)-sub-gamma or, equivalently, with probability $1-\delta$ it holds:
\begin{align}
    \frac{1}{n}\sum_{i=1}^n\tau_{\vec A}(X_i) - \E_{X \sim \cN(\mu,\vec I)}[\tau_{\vec A}(X)] 
    \lesssim \frac{1}{\sqrt{n}}\|\vec A\|_\fr \sqrt{\log(1/\delta)} + \frac{1}{n}\|\vec A\|_\op \log(1/\delta) \;. \label{eq:fixed}
\end{align}
\emph{Proof of \Cref{item:empirical_mean_bounded_for_ALL_A}:}
The above is about a fixed choice of the matrix $\vec A$ from the set $\cV:=\{\vec A \in \R^{d \times d} : \|\vec A\|_\fr\leq \sqrt{\log(1/\eps)}, \|\vec A\|_\op=1, \text{$\vec A$ has at most $k^2$ non-zero elements} \}$. 
To show that concentration holds for all $\vec A$ in $\cV$, we take a cover set $\cV_\eta \subset \cV$. For every $\eta>0$ let $\cV_\eta$ be an $\eta$-cover of $\cV$, i.e., a set $\cV_\eta$ such that for every $\vec A \in \cV$ there exists an $\vec A' \in \cV_\eta$ with $\|\vec A-\vec A'\|_\fr \leq \eta$.
The next claim shows that the size of $V_\eta$ is not too large. 

\begin{claim}\label{cl:cover}
    The size of $\cV_\eta$ is upper bounded by $(6\sqrt{\log(1/\eps)}/\eta)^{k^2}\binom{d}{k^2}$. 
\end{claim}
\begin{proof}
    If we look at the flattened version of the matrix as a vector in $\R^{d \times d}$, 
    there are $\binom{d}{k^2}$-many ways to select which are the $k^2$ non-zero elements, 
    and for each such choice of the non-zero elements there exists a cover of size $(3\sqrt{\log(1/\eps)}/\eta)^{k^2}$ by \Cref{fact:cover}. 
    The union of all of these sets covers  has size at most $(3\sqrt{\log(1/\eps)}/\eta)^{k^2}\binom{d}{k^2}$ but 
    may not necessarily be a subset of $\cV$. 
    However, by Exercise 4.2.9 in \cite{Ver18} there exists a $\cV_\eta \subseteq \cV$ with size same as
    before but by replacing $\eta$ by $\eta/2$. 
\end{proof}

We will choose $\eta = 0.0001\eps^4/d^{4}$ (the reason for this choice will be clear later on). 
By setting $\delta = \delta'/(6\sqrt{\log(1/\eps)}/\eta)^{k^2}\binom{d}{k^2}$ and by a union bound, 
we can ensure that \eqref{eq:fixed} %
holds for all $A \in \cV_\eta$ \emph{simultaneously}. 
We will now show that, by continuity, 
the upper bound of \eqref{eq:fixed} with some additional error terms holds for all $A \in \cV$ simultaneously.

We will need the following notation: 
let $\vec A \in \cV$ be an arbitrary element of $\cV$ and 
$\vec A' \in \cV_\eta$ satisfy $\|\vec  A - \vec A' \|_\fr \leq \eta$. 
Denote by $S = \{X_1,\ldots, X_n \}$ the set of i.i.d. samples from $\cN(\mu,\vec I)$. 
Also denote by $\Delta g(X) := (g_{\vec A}(x)-\tr(\vec A)) - (g_{\vec A'}(x)-\tr(\vec A'))$ the difference between the two polynomials.
The goal is to upper bound $\E_{X \sim S}[\tau_{\vec A}(X)]$ which we do in steps as follows: 
First, we rewrite $(g_{\vec A}(X)-\tr(\vec A)) = (g_{\vec A'}(X)-\tr(\vec A')) + \Delta g(X)$, to get
\begin{align}
    \E_{X \sim S}[\tau_{\vec A}(X)]
    &= \E_{X \sim S}[(g_{\vec A}(X)-\tr(\vec A))\1(g_{\vec A}(X)-\tr(\vec A)> 100\log(1/\eps))] \notag\\
    &\leq \E_{X \sim S}[|\Delta g(X)|] + \E_{X \sim S}[(g_{\vec A'}(X)-\tr(\vec A'))\1(g_{\vec A'}(x)-\tr(\vec A')> 100\log(1/\eps) - \Delta g(X))] \;.  \label{eq:tmp2}
\end{align}
We will now bound each of the terms above. 
For the first one, we show that the following bound holds with probability $1-\delta$ over the random selection of $S$:
\begin{align}
    \E_{X \sim S}[|\Delta g(X)|] 
    &\leq \E_{X \sim S}[|g_{\vec A}(X) - g_{\vec A'}(X)| ] + |\tr(\vec A - \vec A')| \notag \\
    &=  \left|\left\langle  \vec A - \vec A', \frac{1}{n} \sum_{i=1}^n (X_i-\mu) (X_i-\mu)^\top \right\rangle \right|  + |\tr(\vec A - \vec A')| \notag \\
    &\leq \| \vec A - \vec A' \|_\fr   \frac{1}{n} \sum_{i=1}^n \|X_i-\mu\|_2^2  + d \|\vec A - \vec A'\|_\fr \label{eq:temp5234}\\
    &\lesssim \eta d \;, \label{eq:tmp3}
\end{align}
where in \eqref{eq:temp5234} we used that $\langle \vec B,\vec C \rangle \leq \|\vec B\|_\fr \|\vec C\|_\fr$ 
and $\|xx^\top\|_\fr = \|x\|_2^2$, 
and in \eqref{eq:tmp3} we used that  $\| \vec A - \vec A' \|_\fr \leq \eta$ 
and that 
$ \sum_{i=1}^n \|X_i-\mu\|_2^2 \leq d + O(\log(1/\delta)/n) = O(d)$ by
Gaussian norm concentration \Cref{lem:norm-conc} combined with the last part of \Cref{lem:sub-exp}.

We now move to the second term in the RHS of \eqref{eq:tmp2}.
\begin{align}
    \E_{X \sim S}&[(g_{\vec A'}(X)-\tr(\vec A'))\1(g_{\vec A'}(x)-\tr(\vec A')> 100\log(1/\eps) - \Delta g(X))] \notag\\
    &= \E_{X \sim S}[(g_{\vec A'}(X)-\tr(\vec A'))\1(g_{\vec A'}(x)-\tr(\vec A')> 100\log(1/\eps) - \Delta g(X)) \1(\Delta g(X) < \log(1/\eps))] \notag \\
    &\quad+  \E_{X \sim S}[(g_{\vec A'}(X)-\tr(\vec A')) \1(g_{\vec A'}(x)-\tr(\vec A')> 100\log(1/\eps) - \Delta g(X)) \1(\Delta g(X) >  \log(1/\eps))] \notag \\
    &\leq \E_{X \sim S}[(g_{\vec A'}(X)-\tr(\vec A'))\1(g_{\vec A'}(X)-\tr(\vec A')> 99\log(1/\eps)) ] \notag\\
    &\quad+ \E_{X \sim S}[|g_{\vec A'}(X)-\tr(\vec A')|  \1(\Delta g(X) >  \log(1/\eps))] \;. \label{eq:tmp4}
\end{align}
The first term is almost the same as $\E_{X \sim S}[\tau_A(X)]$,\footnote{The only difference is that the constant is 99 instead of 100 but this does not affect the conclusion.} which by \Cref{eq:exp} and \eqref{eq:fixed} we know that is upper bounded as follows: 
\begin{align}
    \E_{X \sim S}[(g_{\vec A'}(X)-\tr(\vec A'))\1(g_{\vec A'}(X)-\tr(\vec A')> 99\log(1/\eps)) ]
    &\lesssim \eps^4 + \frac{\|\vec A\|_\fr\sqrt{\log(1/\delta)}}{\sqrt{n}}  + \frac{\|\vec A\|_\op \log(1/\delta)}{n} \\
    &\lesssim \eps^4 + \frac{\sqrt{\log(1/\eps)\log(1/\delta)}}{\sqrt{n}}  + \frac{\log(1/\delta)}{n}, \label{eq:fixed_empirical}
\end{align}
where the last line used the assumptions $\|\vec A\|_\fr \leq \sqrt{\log(1/\eps)}$ and $\|\vec A\|_\op\leq 1$.

For the second term in \eqref{eq:tmp4} we claim that 
\begin{align}
    \E_{X \sim S}[|g_{\vec A'}(X)-\tr(A')|  \1(\Delta g_{\vec A}(X) >  \log(1/\eps))]=0 \label{eq:temp50}
\end{align}
whenever $\|x-\mu\|_2 \leq 10 \sqrt{d}$ for all $x \in S$  because of the indicator: This is because $\Delta g(x) = \langle \vec A-\vec A',(x - \mu)(x - \mu)^\top \rangle + \tr(\vec A-\vec A') \leq \| \vec A - \vec A'\|_\fr \|x - \mu\|_2^2 + \tr(\vec A-\vec A') \leq \eta 100 d + d \|\vec A-\vec A'\|_\fr \leq  101 \eta d$ and if $\eta < 0.0001\log(1/\eps)/d$, then the previous quantity is less than $\log(1/\eps)$. The event that for all $x \in S$, $\|x-\mu\|_2 \leq 10 \sqrt{d}$ happens with overwhelming probability: $\Pr_{X_1,\ldots,X_i \sim \cN(\mu,I)}[\exists X_i \in S: \|X_i - \mu\|_2 > 10\sqrt{d}] \leq ne^{-d/100}$ by Gaussian norm concentration (\Cref{lem:norm-conc}) and a union bound.

Combining \eqref{eq:tmp2},\eqref{eq:tmp3},\eqref{eq:tmp4},\eqref{eq:fixed},\eqref{eq:temp50}, and choosing $\delta = \delta'/(6\sqrt{\log(1/\eps)}/\eta)^{k^2}\binom{d}{k^2}$, and $\eta = 0.0001\eps^4/d^{4}$,  we conclude that with probability $1-\delta'-ne^{-d/100}$,  the following holds simultaneously for all matrices $A$ in the cover:
\begin{align}
    \E_{X \sim S}[\tau_A(X)] &\lesssim \eps^4 + \sqrt{\frac{\log(1/\eps)\log(1/\delta)}{n}} + \frac{\log(1/\delta)}{n}   + d \eta  \notag \\
    &\lesssim \eps^4 + \sqrt{\frac{\log(1/\eps)\log(1/\delta)}{n}} + \frac{\log(1/\delta)}{n}  \;. \label{eq:finalbound}
\end{align}
Using $n \gg \eps^{-2} \polylog(1/\eps)(k^2\log(d/\eps) + \log(1/\delta'))$ 
we can make the RHS less than $\eps$. Finally, in order to have $ne^{-d/100} \leq \delta'$ we need $d \gg \polylog(1/(\eps \delta'))$ but we can assume that this is true without loss of generality by padding the samples with additional Gaussian coordinates (if the goodness conditions hold for the padded data, they continue to hold for the original data).

\item
\paragraph{Proof of \Cref{cond:extra} in \Cref{cond:polynomials}:} 
Qualitatively, the proof goes through the same arguments as the one for \Cref{cond:poly-conc}, thus we will not be that detailed and instead we will focus mostly on the few differences.
Denote $g_{\vec A}(x) = (x-\mu)^\top \vec A (x-\mu)$, $h_{\beta,v} = \beta + v^\top (x-\mu)$, and $\tau_{\vec A,\beta,v}(x) = (g_{\vec A}(x)-\tr(\vec A))\1(h(x) > 100 \log(1/\eps))$. For $t>1$, by Gaussian concentration $\Pr_{X \sim \cN(\mu,\vec I)}[h(x) > t] \leq e^{-\Omega(t^2)}$. Thus, for $t = 100 \log(1/\eps)$
\begin{align*}
    \E_{X \sim \cN(\mu,I)}[\tau_{\vec A}(X)] \leq \sqrt{\Pr_{X \sim \cN(\mu,\vec I)}[h(x) > t]}\sqrt{\Var_{X \sim \cN(\mu,\vec I)}[g_{\vec A}(X)]}
    \leq \eps^5 \|\vec A\|_\fr \lesssim \eps^4 \;.
\end{align*}
Similarly to \Cref{eq:tmp1}, $\tau_{\vec A,\beta,v}(X) - \E_{X \sim \cN(\mu,\vec I)}[\tau_{\vec A,\beta,v}(X)]$ is $(\|\vec A\|_\fr,\|\vec A\|_\op)$-sub-gamma. Therefore, the average of $n$ i.i.d. samples is $(\|\vec A\|_\fr/\sqrt{n},\|\vec A\|_\op/n)$-sub-gamma. We now let $\cV_\eta$ be the cover set of \Cref{cl:cover}, $\cC_\eta$ be the cover set of the $k$-sparse unit ball, which has size at most $(3/\eta)^k\binom{d}{k}$, and finally let $\cV'_\eta = \cV_\eta \times \cC_\eta$ be the product of the two. We choose $\eta = 0.0001 (\eps/d)^{10}$ and probability of failure $\delta = 1/|\cV'_\eta|$. By a union bound, we have that 
\begin{align}
    \frac{1}{n}\sum_{i=1}^n\tau_{\vec A',\beta',v'}(X_i) - \E_{X \sim \cN(\mu,\vec I)}[\tau_{\vec A',\beta',v'}(X)] 
    \lesssim \frac{1}{\sqrt{n}}\|\vec A'\|_\fr \sqrt{\log(1/\delta)} + \frac{1}{n}\|\vec A'\|_\op \log(1/\delta)  \label{eq:fixed_2}
\end{align}
holds simultaneously for all $\vec A',\beta',v'$ inside the cover set. Now let arbitrary $\vec A,\beta,v$. We can show that \eqref{eq:fixed_2} will still hold, with some additional error terms. First, let $\Delta g(x) = (g_{\vec A}(x)-\tr(\vec A))-(g_{\vec A'}(x)-\tr(\vec A')$ and $\Delta h(x) = \beta-\beta' + (v-v')^\top(x-\mu)$. Also, denote by $S=\{X_1,\ldots,X_n\}$ the set of $n$ samples. We can write 
\begin{align*}
    \E_{X \sim S}[\tau_{\vec A,\beta,v}(X)]
    \leq \E_{X \sim S}[|\Delta g(X)|] + \E_{X \sim S}[(g_{\vec A'}(X)-\tr(\vec A'))\1(h'(x) > 100 \log(1/\eps) - \Delta h(X))] \;.
\end{align*}
The first term is at most $O(\eta d)$ as in \Cref{eq:tmp3}. We bound the second term as follows:
\begin{align}
    \E_{X \sim S}&[(g_{\vec A'}(X)-\tr(\vec A'))\1(h'(x) > 100 \log(1/\eps) - \Delta h(X))]\\
    &\leq \E_{X \sim S}[(g_{\vec A'}(X)-\tr(\vec A'))\1(h'(x) > 99 \log(1/\eps))]
    + \E_{X \sim S}[|g_{\vec A'}(X)-\tr(\vec A')|\1( \Delta h(X) >  \log(1/\eps) )] \label{eq:secondterm}
\end{align}
The first term above is bounded as in \Cref{eq:fixed_2} (the only change is that the constant 100 is now 99 but that should only affect the constant in the RHS of \Cref{eq:fixed_2}). For the second term, we note that with probability $1-ne^{-d/100}$ we have $\|x - \mu\|_2 \leq 10 \sqrt{d}$ for all $x \in S$. Since $\Delta h(x) = \beta-\beta' + (v-v')^\top(x-\mu)$ and we have designed the cover such that $ |\beta-\beta'|$ and $\|v-v'\|_2$ are at most $\eta \leq 0.0001(\eps/d)^{10}$, we have that $\1( \Delta h(X) >  \log(1/\eps) )=0$ for all $X \in S$ under that event. Thus, with probability $1-ne^{-d/100}$ over the dataset $S$, the second term in \Cref{eq:secondterm} is zero. Putting everything together, we have the same bound as in \Cref{eq:finalbound}.

\item
\paragraph{Proof of \Cref{cond:hw} in \Cref{cond:polynomials}:} 
Using \eqref{eq:hw} with $t=10\log(1/\eps)$, $\|\vec A\|_\fr \leq \sqrt{\log(1/\eps)}$ and $\|\vec A\|_\op=1$, 
we obtain that $\Pr_{X \sim \cN(\mu,\vec I)}[p(X) > 20 \log(1/\eps)] \lesssim \eps^2$. 
Now by a basic application of Chernoff bounds 
we have that $|\Pr_{X \sim S}[p(X) > 20 \log(1/\eps)]-\Pr_{X \sim \cN(\mu,\vec I)}[p(X) > 20 \log(1/\eps)]| \leq \eta$ with probability $1-e^{-\Omega(\eta^2 n)}$, 
where we will use $\eta = \eps$. 
Thus, if $n\gg \eps^{-2} \log(1/\delta)$, 
we have that $\Pr_{X \sim S}[p(X) > 20 \log(1/\eps)] \lesssim \eps$ 
with probability at least $1-\delta$.

\end{proof}

\subsection{Certificate Lemma}
We restate and prove the following lemma.
\CERTIFICATE*

\begin{proof}
Let $\rho = \eps \E_{X \sim B}[w(X)]/ \E_{X \sim P}[w(X)]$. 
Recall our notation
$P_w(x) = w(x)P(x)/\E_{X \sim P}[w(X)]$, 
$B_w(x) = w(x)B(x)/\E_{X \sim B}[w(X)], G_w(x) = w(x)G(x)/\E_{X \sim G}[w(X)]$ 
for the weighted versions of the distributions $P,B,G$ 
and denote the corresponding covariance matrices by $\vec \Sigma_{P_w}, \vec \Sigma_{B_w},  \vec \Sigma_{G_w}$. 
We can write:
\begin{align}
  \vec \Sigma_{P_w} &= \rho \vec  \Sigma_{B_w} + (1 - \rho) \vec  \Sigma_{G_w} + \rho(1 - \rho)(\mu_{G_w} - \mu_{B_w})(\mu_{G_w} - \mu_{B_w})^\top \;. \label{eq:decomposision_of_cov}
\end{align}
Let $v$ be a $k$-sparse unit norm vector. 
Since $v^\top \vec \Sigma_{P_w} v \leq 1 + \lambda$, we obtain the following:
\begin{align*}
1+\lambda &\geq v^\top \vec \Sigma_{P_w} v 
\geq (1 - \rho)   v^\top \vec \Sigma_{G_w} v + \rho(1 - \rho)   (v^\top(\mu_{B_w} - \mu_{G_w}))^2 \\
&\geq (1 - \rho)\left( 1 - \alpha \log(1/\alpha) \right) + \rho(1 - \rho) (v^\top(\mu_{B_w} - \mu_{G_w}))^2_2 \;,
\end{align*}
where the second step uses \Cref{eq:decomposision_of_cov} and the last step uses the fact that $G$ satisfies  \Cref{cond:covariance} of \Cref{DefModGoodnessCond}.
The expression above implies the following:
\begin{align*}
(v^\top(\mu_{B_w} - \mu_{G_w}))^2 \leq   \frac{\lambda + \rho +  \alpha \log(1/\alpha)}{\rho(1 - \rho)} \;.
\end{align*}
We can now bound the error $| v^\top( \mu_{P_w} - \mu)|$ as follows:
\begin{align*}
| v^\top( \mu_{P_w} - \mu)| &= | v^\top(\mu_{G_w} - \mu) + \rho  v^\top (\mu_{B_w} - \mu_{G_w}) | \\
&\leq |v^\top( \mu_{G_w} - \mu)| + \rho | v^\top(\mu_{B_w} - \mu_{G_w})| \\
&\leq \|\mu_{G_w} - \mu\|_{2,k} + \rho |v^\top (\mu_{B_w} - \mu_{G_w} )|\\
&\leq \alpha \sqrt{\log(1/\alpha)} + \sqrt{  \rho}\sqrt{\frac{\lambda + \rho + \alpha \log(1/\alpha)}{1 - \rho}},
\end{align*}
where the last inequality uses that $G$ satisfies \Cref{cond:mean}.
We now use bounds on $\rho$ to simplify the terms.
Recall that $\rho= \eps \E_{X \sim B}[w(X)]/ \E_{X \sim P}[w(X)] \leq  \epsilon/(1 - \alpha)$. As $\alpha< 1/2$, we get that $\rho < 2 \epsilon $. In addition, note that $ \rho < 1/2$. Using these, we conclude that
\begin{align*}
\| \mu_{P_w} - \mu \|_{2,k} &\leq | v^\top( \mu_{P_w} - \mu)| \lesssim \alpha \sqrt{\log(1/\alpha)} + \sqrt{  \lambda \epsilon} +  \epsilon +   \sqrt{\alpha \eps \log(1/\alpha)}
\;.
\end{align*}
   
\end{proof}

\subsection{Useful Procedures from Robust Statistics}

The following result is implicit in \cite{DKKPS19-sparse}, which gives algorithm for $k$-sparse mean estimation with sub-optimal error of $O(\eps\sqrt{\log(1/\eps))}$. Since that algorithm relies on a certificate lemma similar to \Cref{LemCertiAlt_restated}, it filters out points until the $k$-sparse norm of the empirical covariance minus identity becomes $O(\eps \polylog(1/\eps))$. Even though it does not manage to make the variance as small as would be required for the Certificate Lemma to yield $O(\eps)$ error, this is a useful starting point, and we will use it as a preprocessing of the data in order to assume an $O(\eps \polylog(1/\eps))$ bound on the variance throughout our proofs.

\begin{fact}\label{fact:preprocess}
    Let $D \sim \cN(\mu,\vec I)$ be a Gaussian distribution on $\R^d$ with unknown mean $\mu$ 
    and $\eps \leq \eps_0$ for some sufficiently small absolute constant $\eps_0>0$. 
    Let $T$ be a set of $n$ samples from an $\eps$-corrupted version of $D$, according to the Huber contamination model.
    If $n \gg (k^2\log(d) +\log(1/\delta))\polylog(1/\eps)/\eps^2$, 
    the algorithm from \cite{DKKPS19-sparse} run on input $T,k,\eps$ 
    outputs a subset $T' \subseteq T$ of the original dataset 
    such that the following hold with probability at least $1-\delta$:
    \begin{enumerate}
        \item (Algorithm deletes $\log(1/\eps)$ more outliers than inliers) 
        If $S$ denotes the set of inliers in $T$, 
        we have that $|(T \setminus T') \cap S| \leq \frac{1}{\log(1/\eps)}|(T \setminus T') \cap (T \setminus S)|$.
        \item (Small ($F,k,k$)-norm) 
        Denoting by $\vec \Sigma_{T'}$ the covariance matrix of the output set $T'$, 
        we have that $\|\vec \Sigma_{T'} - \vec I\|_{F,k,k} = O(\eps \log^2(1/\eps))$.
        \item (Estimate of true mean) The empirical mean $\mu_{T'}$ of the output dataset satisfies $\| \mu_{T'} - \mu \|_{2,k} \lesssim \eps \log(1/\eps)$.
    \end{enumerate}
\end{fact}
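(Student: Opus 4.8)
The plan is to unpack the analysis of the iterative filter of \cite{DKKPS19-sparse} and simply read off the three conclusions: none of them is genuinely new, but they are not stated there in the form we need, so the work is bookkeeping. Recall the structure of that algorithm. It maintains a subset of the data; it repeatedly computes the empirical mean $\mu'$ and covariance $\vec\Sigma'$ of the current subset; and as long as $\|\vec\Sigma'-\vec I\|_{\fr,k,k}$ exceeds a threshold of the form $C\eps\log^2(1/\eps)$ it uses \Cref{fact:variational} to produce a witness matrix $\vec B$ with $\|\vec B\|_\fr=1$, at most $k$ nonzero rows and at most $k$ nonzero entries per row (so $\|\vec B\|_\op\le\|\vec B\|_\fr=1$ and $\vec B$ has at most $k^2$ nonzero entries); it then runs a hard down-weighting filter on the thresholded scores built from $p(x):=(x-\mu')^\top\vec B(x-\mu')-\tr(\vec B)$, removing the offending points, and halts and returns the surviving set once the threshold is met. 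First I would invoke \Cref{lem:samples} to fix the high-probability event, over the draw of $T$, that the set $S$ of inliers is $(\eps,\alpha,k)$-good with respect to $\mu$ for $\alpha=\Theta(\eps/\log(1/\eps))$; this is precisely where the stated sample size is used, and it is a single event that suffices for the entire run.

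Property~1 is the per-iteration guarantee of the filter. In any round where the filter fires, the witness $\vec B$ satisfies $\|\vec B\|_\fr=1\le\sqrt{\log(1/\eps)}$, $\|\vec B\|_\op\le1$, and has at most $k^2$ nonzero entries, so Conditions~\ref{cond:poly-conc} and~\ref{cond:hw} of \Cref{DefModGoodnessCond} apply to the polynomial centered at the true $\mu$. The only discrepancy is that the algorithm's scores are centered at the running estimate $\mu'$ rather than $\mu$; since the current subset still contains all but an $\alpha$-fraction of $S$, the certificate \Cref{LemCertiAlt_restated} (applied with $\lambda$ equal to the current $\|\vec\Sigma'-\vec I\|_{\op,k}$, which is $\polylog(1/\eps)$ before termination) bounds $\|\mu'-\mu\|_{2,k}$, and then the same triangle-inequality and sparse Cauchy--Schwarz manipulations used in \Cref{cl:inequality_sparse_norms} and in the proof of \Cref{it:filtering} show that the weighted thresholded score over the surviving inliers stays $O(\eps)$. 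Consequently, whenever the filter's triggering condition holds (total surviving score $\gtrsim\eps\log(1/\eps)$), \Cref{lem:filtering} applied with $s=\Theta(\eps)$ and $\beta=\log(1/\eps)$ shows that this round removes at least $\log(1/\eps)$ times more outlier mass than inlier mass. Summing the resulting inequality over all rounds telescopes to $|(T\setminus T')\cap S|\le\frac{1}{\log(1/\eps)}|(T\setminus T')\cap(T\setminus S)|$, which is Property~1; in particular $|(T\setminus T')\cap S|\le\frac{\eps n}{\log(1/\eps)}$, so at termination the surviving inliers carry $G$-mass at least $1-\alpha$ with $\alpha=\Theta(\eps/\log(1/\eps))$.

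Property~2 is the halting condition verbatim: the loop exits only when $\|\vec\Sigma_{T'}-\vec I\|_{\fr,k,k}\le C\eps\log^2(1/\eps)$. That the loop does halt, in $\poly(n)$ rounds and hence in polynomial time, follows from a mass-decrease argument identical in spirit to \Cref{cl:termination}: each firing of the filter removes a non-negligible amount of mass, and once all $\eps n$ outliers are gone the goodness of $S$ (Conditions~\ref{cond:mean} and~\ref{cond:covariance}) forces $\|\vec\Sigma_{T'}-\vec I\|_{\fr,k,k}$ below the threshold, so the loop cannot continue. Property~3 then drops out of the certificate lemma: by \Cref{fact:h_1_lower_bound}, Property~2 gives $\|\vec\Sigma_{T'}-\vec I\|_{\op,k}\le O(\eps\log^2(1/\eps))=:\lambda$, and with $\alpha=\Theta(\eps/\log(1/\eps))$ as above, \Cref{LemCertiAlt_restated} yields $\|\mu_{T'}-\mu\|_{2,k}\lesssim\alpha\sqrt{\log(1/\alpha)}+\sqrt{\lambda\eps}+\eps+\sqrt{\alpha\eps\log(1/\alpha)}\lesssim\eps\log(1/\eps)$, where the $\sqrt{\lambda\eps}=\eps\log(1/\eps)$ term dominates.

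The main obstacle is not conceptual but one of fidelity to \cite{DKKPS19-sparse}: their published statement only asserts the final $O(\eps\sqrt{\log(1/\eps)})$ error, so one must either verify that their covariance-based filter is analyzed at the per-round granularity that yields the inlier/outlier ratio in Property~1, or -- more robustly -- re-instantiate their witness-finding step inside the standard filtering template of \Cref{lem:filtering} with the parameters above, which is exactly the analysis carried out for \Cref{alg:main} but stripped of the multi-direction refinement. A secondary point to pin down is that the algorithm can be taken to output a genuine subset $T'\subseteq T$ (a $\{0,1\}$-valued filter) rather than fractional weights; the standard randomized-threshold rounding of the down-weighting filter preserves all three properties, so this costs nothing.
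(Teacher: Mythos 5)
Your overall plan matches the paper's in spirit, but it follows a genuinely different route for Property~1 -- and that route has a gap. The paper's proof of Property~1 does \emph{not} re-run the down-weighting filter \Cref{lem:filtering}: it appeals directly to the tail-probability filter of \cite{DKKPS19-sparse} (their lines 7 and 10) and simply observes that tightening the tail threshold from ``a constant factor more mass than expected'' to ``a $(1+\log(1/\eps))$ factor more mass than expected'' boosts the outlier-to-inlier removal ratio from $\Omega(1)$ to $\Omega(\log(1/\eps))$, at the cost of weakening the stopping condition from $O(\eps\log(1/\eps))$ to $O(\eps\log^2(1/\eps))$. You instead propose to re-instantiate the witness-finding step inside \Cref{lem:filtering} with $s=\Theta(\eps)$, $\beta=\log(1/\eps)$, claiming it is ``exactly the analysis carried out for \Cref{alg:main} but stripped of the multi-direction refinement.''

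The problem is the centering step, and specifically your parenthetical claim that the current $\|\vec\Sigma'-\vec I\|_{\op,k}$ ``is $\polylog(1/\eps)$ before termination.'' That is unjustified, and in fact is the one quantity that this preprocessing is supposed to \emph{produce}, not assume. Before the filter has done its work, the sparse excess variance can be polynomially large in $d$ (e.g.\ outliers placed at distance $\Theta(\sqrt{d})$ along a $k$-sparse direction contribute $\Theta(\eps d)$), and an initial naive pruning only caps it by $\poly(d,\log(1/\eps))$, not by $\polylog(1/\eps)$. With $\lambda = \|\vec\Sigma'-\vec I\|_{\op,k}$ unbounded, \Cref{LemCertiAlt_restated} only gives $\|\mu'-\mu\|_{2,k}\lesssim\sqrt{\lambda\eps}+\eps$, so the centering term $|(\mu'-\mu)^\top\vec B(\mu'-\mu)|\lesssim\|\mu'-\mu\|_{2,k}^2\lesssim\eps\lambda$ in the inlier score need not be $O(\eps)$; the application of \Cref{lem:filtering} with $s=\Theta(\eps)$ then fails. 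Note that this is precisely why the paper's own proof of \Cref{it:filtering} (the analogous claim for \Cref{alg:main}) can afford to plug in $\lambda'\lesssim\eps\log^2(1/\eps)$ at step \eqref{eq:tmp5345}: it invokes the preprocessing guarantee of \Cref{fact:preprocess}. Reusing that argument ``stripped down'' to prove \Cref{fact:preprocess} itself would be circular. To make your route work you would need either an adaptive threshold $s$ that tracks the current $\lambda$ (and then check that the termination condition still lands at $O(\eps\log^2(1/\eps))$), or a separate argument that the original algorithm's tail-probability filter handles centering via its own mechanism -- which is effectively what the paper's proof is doing by deferring to \cite{DKKPS19-sparse}'s published analysis. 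Properties~2 and~3 in your write-up (stopping condition plus \Cref{fact:h_1_lower_bound} plus \Cref{LemCertiAlt_restated}) do match the paper's argument once Property~1 is in hand.
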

\begin{proof}[Proof sketch]
    \emph{(Algorithm deletes $\log(1/\eps)$ more outliers than inliers):}
    Algorithm 1 from \cite{DKKPS19-sparse} filters points while ensuring 
    that for every inlier deleted, $\Omega(1)$ outliers are deleted. 
    This is done by eliminating points according to the tail probabilities
    of the data. If the tail beyond some point $T$ has a constant fraction more
    mass than it should, then a constant fraction of the points that are
    eliminated should be outliers. 
    To boost the ratio of inliers to outliers being deleted, 
    one just needs to adjust the threshold in lines 7 and 10 in Algorithm $1$ of \cite{DKKPS19-sparse} so that the mass beyond $T$
    has to be more than $1+\log(1/\eps)$ times the mass of the inliers. 
    This will imply that the ratio of inliers to outliers deleted can be is boosted to $\log(1/\eps)$. 
    The cost that we need to pay for this change is that 
    the stopping condition in Line 4 of Algorithm 1 in \cite{DKKPS19-sparse}
    changes from 
    $\|(\vec \Sigma_{T'} - \vec I)_{U}\|_\fr \leq O(\eps \log(1/\eps))$ 
    to  $\|(\vec \Sigma_{T'} - \vec I)_{U}\|_\fr \leq O(\eps \log^2(1/\eps))$.    

    \emph{(Small (F,$k,k$)-norm of output covariance matrix):} 
    Running the (modified version of) Algorithm 1 
    in \cite{DKKPS19-sparse} with sparsity set to $2k$ (instead of $k$) ensures that upon termination we have that 
    $\|(\vec \Sigma_{T'} - \vec I)_{U}\|_\fr = O(\eps \log^2(1/\eps))$ for $U \subset [d] \times [d]$ being the set of the $2k$ largest magnitude diagonal entries of $\vec \Sigma_{T'} - \vec I$ and the largest magnitude $4k^2 - 2k$ off diagonal entries, 
    with ties broken so that if $(i,j) \in U$ then $(j,i) \in U$. 
    By using the definition of $(F,k,k)$-norm, this implies that:
    \begin{align} \label{eq:fkkbound}
        \| \vec \Sigma_{T'} - \vec I \|_{F,k,k} \leq \|(\vec \Sigma_{T'} - \vec I)_{U}\|_\fr \leq O(\eps \log^2(1/\eps)) \;.
    \end{align}

    (Estimate of the mean): Since the algorithm deleted $\log(1/\eps)$ factor more outliers than inliers and initially we have $\eps n$ outliers, it must be the case  that the inliers after filtering are at least $ (1- \eps/\log(1/\eps))n$. Also note that $\sup_{v\in \R^d: \|v\|_2=1, \|v\|_0 = k}v^\top (\vec \Sigma_{T'} - \vec I) v = O(\eps \log^2(1/\eps))$, which comes from \eqref{eq:fkkbound} and \Cref{fact:h_1_lower_bound}. This allows us to use \Cref{LemCertiAlt_restated}, which implies that $\| \mu_{T'} - \mu \|_{2,k} \lesssim \eps \log(1/\eps)$ (for this application we also used that the inliers in the dataset satisfy the goodness conditions of \Cref{DefModGoodnessCond} with $\alpha = \eps/\log(1/\eps))$). 
    
\end{proof}

\section{Robust Principal Component Analysis}\label{sec:appendix_pca}

In this section we complete the proof of \Cref{thm:sparse-pca} by providing the missing proofs of the claims mentioned in \Cref{sec:pca_main_body}.

We state and prove a more detailed version of \Cref{claim:conditional} below:
\begin{restatable}{claim}{CONDITIONAL2}\label{claim:conditional2}
    Let $X \sim \cN(0,\vec I+\rho v v^\top)$ be a random variable from the spiked covariance model and $Z = \mathrm{Proj}_{w^\top}(X)$ the projection of $X$ onto the subspace perpendicular to $w$. For $\alpha \in \R$, let $G_\alpha$ denote the distribution of $Z$ conditioned on $w^\top X = \alpha$, and for an interval $I \subset \R$, let $G_{I}$ denote the distribution of $Z$ conditioned on $w^\top X \in I$. We also use the notation $\phi(z; \mu,\vec \Sigma)$ for the pdf of $\cN(\mu,\vec \Sigma)$. Then, the pdfs of the two aforementioned distributions are:
    \begin{enumerate}
        \item $G_\alpha(z) = \phi(z; \tilde{\mu}, \tilde{\vec \Sigma})$ with  $\tilde{\mu}=\frac{\rho (w^\top v) \alpha }{1+\rho (w^\top v)^2} \bar{v}$ and $ \tilde{\vec \Sigma} = \vec I + \frac{\rho }{1+ \rho (w^\top v)^2}\bar{v}\bar{v}^\top$, where $\bar{v}:= (v-(w^\top v)w)$.
        \item $G_I(z) = \frac{1}{\Pr_{X \sim \cN(0,\vec I+\rho v v^\top)}[ w^\top X \in I]} \int_{\alpha' \in I} G_{\alpha'}(z) \Pr_{X \sim \cN(0,I+\rho v v^\top)}[w^\top X = \alpha'] \d \alpha'$.
    \end{enumerate}
\end{restatable}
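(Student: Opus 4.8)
The plan is to deduce both parts from standard facts about jointly Gaussian vectors. First I would set $Y := w^\top X$ and $Z := \mathrm{Proj}_{w^\perp}(X)$, and note that $(Z,Y)$ is jointly Gaussian, being a linear image of $X \sim \cN(0,\vec I + \rho v v^\top)$; since $X = Z + Y w$, the conditional law of $Z$ given $Y = \alpha$ is exactly the distribution $G_\alpha$ we want to identify. Writing $v = \bar v + c\, w$ with $c := w^\top v$ and $\bar v := \mathrm{Proj}_{w^\perp}(v) \in w^\perp$, and abbreviating $\Sigma := \vec I + \rho v v^\top$, a short computation gives
\begin{align*}
  \Var(Y) &= w^\top \Sigma w = 1 + \rho c^2, \qquad \Cov(Z,Y) = \mathrm{Proj}_{w^\perp}(\Sigma w) = \rho c\, \bar v,\\
  \Cov(Z) &= \mathrm{Proj}_{w^\perp}\, \Sigma\, \mathrm{Proj}_{w^\perp} = \vec I_{w^\perp} + \rho\, \bar v \bar v^\top,
\end{align*}
where $\vec I_{w^\perp}$ is the identity on the $(d-1)$-dimensional subspace $w^\perp$ (equivalently, one may rotate coordinates so that $w = e_d$, reducing to the textbook block-Gaussian setup).

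Next I would invoke the standard Gaussian conditioning formula: $Z \mid (Y = \alpha)$ is Gaussian with mean $\Cov(Z,Y)\Var(Y)^{-1}\alpha$ and covariance equal to the Schur complement $\Cov(Z) - \Cov(Z,Y)\Var(Y)^{-1}\Cov(Y,Z)$. The mean is $\frac{\rho c \alpha}{1 + \rho c^2}\, \bar v = \tilde\mu$ directly, and the covariance simplifies via
\begin{align*}
  \vec I_{w^\perp} + \rho\, \bar v \bar v^\top - \frac{\rho^2 c^2}{1+\rho c^2}\, \bar v \bar v^\top = \vec I_{w^\perp} + \frac{\rho}{1+\rho c^2}\, \bar v \bar v^\top,
\end{align*}
using $\rho - \rho^2 c^2/(1+\rho c^2) = \rho/(1+\rho c^2)$; since $\bar v \perp w$, this agrees with the displayed $d\times d$ matrix $\vec I + \frac{\rho}{1+\rho c^2}\bar v \bar v^\top$ once both are viewed as operators supported on $w^\perp$. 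This establishes part (1).

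For part (2) I would disintegrate. Let $f$ denote the density of $Y$, a centered Gaussian of variance $1+\rho c^2$. By part (1) the joint density of $(Z,Y)$ factors as $G_{\alpha'}(z)\, f(\alpha')$, so the density of $Z$ conditioned on $\{Y \in I\}$ is obtained by restricting to $\alpha' \in I$ and renormalizing:
\begin{align*}
  G_I(z) = \frac{\int_{\alpha' \in I} G_{\alpha'}(z)\, f(\alpha')\, \d\alpha'}{\Pr_{X \sim \cN(0,\vec I + \rho v v^\top)}[\, w^\top X \in I\, ]},
\end{align*}
which is the claimed identity (the symbol $\Pr_{X}[w^\top X = \alpha']$ in the statement being shorthand for the density $f(\alpha')$). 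I do not expect a genuine obstacle here; the only points needing a little care are keeping the bookkeeping straight that $Z$ is supported on the $(d-1)$-dimensional subspace $w^\perp$ (so $\tilde\Sigma$ should be read as an operator on $w^\perp$, or as a degenerate $d$-dimensional Gaussian), and verifying the Schur-complement simplification — both routine.
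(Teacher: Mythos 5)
Your proposal is correct and follows essentially the same approach as the paper: both reduce to the standard conditional-Gaussian (block) formula after identifying $\Var(Y)$, $\Cov(Z,Y)$, $\Cov(Z)$, and then deduce part (2) by disintegration/law of total probability. The only cosmetic difference is that the paper first rotates so that $w = e_d$ and reads off the blocks, whereas you work coordinate-free and note the rotation as an optional reduction; you also spell out the Schur-complement algebra a bit more explicitly than the paper does.
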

\begin{proof}
    Let $e_1,\ldots,e_d$ denote the standard orthonormal basis of $\R^d$.
    By a rotation of the space, we can assume without loss of generality that $w$ is aligned with $e_d$, i.e., $w= \|w\|_2 e_d$.  We denote by $\overline{v} = (v_1,\ldots,v_{d-1})$ the projection of $v$ to the subspace orthogonal to $w$ and by $v_d = w^\top v$ the projection of $v$ in the direction of $w$.  This way the subspace perpendicular to $w$ is the one spanned by the first $d-1$ basis elements.
    The first part of the claim follows from \Cref{fact:conditional} below applied with $y_1 = (x_1,\ldots,x_{d-1}), y_2= \alpha, \mu_{1} = 0, \mu_2=0, \vec \Sigma_{11} = \vec I+ \rho \bar{v} \bar{v}^\top, \vec \Sigma_{12} = \rho v_d \bar{v},\vec \Sigma_{21} = \rho v_d\bar{v}^\top$ and $\vec \Sigma_{22} = 1+ \rho v_d^2$ (and the fact that $v_d = w^\top v$ and $\overline{v} = v-(w^\top v)w$). The second claim follows by the law of total probability.

    \begin{fact} \label{fact:conditional}
    If $\left[\begin{matrix} y_1 \\ y_2 \end{matrix} \right] \sim \cN\left(\left[\begin{matrix} \mu_1 \\ \mu_2 \end{matrix} \right] , \left[ \begin{matrix}   \vec \Sigma_{11} &  \vec \Sigma_{12} \\  \vec  \Sigma_{21} &  \vec \Sigma_{22} \end{matrix} \right] \right)$, then $y_1 | y_2 \sim \cN(\bar{\mu}, \bar{  \vec \Sigma})$, with $\bar{\mu} = \mu_1 +  \vec  \Sigma_{12}  \vec \Sigma_{22}^{-1}(y_2 - \mu_2)$ and $ \bar{\vec \Sigma} = \vec \Sigma_{11} -   \vec \Sigma_{12}   \vec \Sigma_{22}^{-1}   \vec \Sigma_{21}$.
  \end{fact}
\end{proof}

We now proceed to show how to achieve the estimation guarantees in lines \ref{line:var_est} and \ref{line:cond_est} of \Cref{alg:pca}.

\begin{claim}\label{cl:applicability}
    There exists a computationally efficient estimator that uses $O(1/\eps)$ $\eps$-corrupted samples and achieves the guarantee in line \ref{line:var_est} of \Cref{alg:pca}. 
\end{claim}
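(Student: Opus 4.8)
The plan is to reduce the task to one-dimensional robust scale estimation. For $X \sim \cN(0,\vec{I}+\rho vv^\top)$ the scalar projection $w^\top X$ is distributed as $\cN(0,\sigma^2)$ with $\sigma^2 := 1+\rho(w^\top v)^2$; since $0<\rho<1$ this gives $\sigma^2 \in [1,2]$, so $\sigma=\Theta(1)$. Given the $\eps$-corrupted sample $x_1,\dots,x_n$ of $X$ in Huber's model, the scalars $w^\top x_1,\dots,w^\top x_n$ form an $\eps$-corrupted sample of $\cN(0,\sigma^2)$. Hence it suffices to produce $\widehat\sigma$ with $|\widehat\sigma-\sigma|=O(\eps)$: then $|\widehat\sigma^2-\sigma^2| \le |\widehat\sigma-\sigma|\,(\widehat\sigma+\sigma)=O(\eps)$, and returning $y := (\widehat\sigma^2-1)/\rho$ gives $|y-(w^\top v)^2| = |\widehat\sigma^2-\sigma^2|/\rho = O(\eps/\rho)$, which is exactly the guarantee required in Line~\ref{line:var_est} of \Cref{alg:pca}.

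For the univariate step I would use the normalized interquartile range. Let $\widehat q_p$ denote an empirical $p$-quantile of $\{w^\top x_i\}_{i\in[n]}$ (computable by sorting), and set $\widehat\sigma := (\widehat q_{3/4}-\widehat q_{1/4})/(2\Phi^{-1}(3/4))$, where $\Phi$ is the standard normal CDF. The normalization is exact on the clean distribution, since the $p$-quantile of $\cN(0,\sigma^2)$ is $\sigma\Phi^{-1}(p)$. For the corrupted sample, write $P=(1-\eps)G+\eps B$ with $G=\cN(0,\sigma^2)$; the CDFs satisfy $|F_P(t)-F_G(t)|\le\eps$ for all $t$, and by the Dvoretzky--Kiefer--Wolfowitz inequality $n = O(1/\eps^2)$ samples suffice so that $\sup_t |F_n(t)-F_P(t)|\le\eps$ with probability $0.99$; hence $\sup_t|F_n(t)-F_G(t)|\le 2\eps$. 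Consequently $F_G(\widehat q_p)\in[p-3\eps,\,p+3\eps]$ for $p\in\{1/4,3/4\}$ (the extra slack absorbing the $1/n\le\eps$ discreteness of the empirical quantile). Since the Gaussian density $\tfrac1\sigma\phi(\cdot/\sigma)$ is bounded below by an absolute constant on the fixed compact set $\sigma\cdot\Phi^{-1}([1/4-3\eps_0,\,3/4+3\eps_0])$ (using $\eps<\eps_0$ and $\sigma\in[1,2]$), inverting $F_G$ yields $|\widehat q_p-\sigma\Phi^{-1}(p)|=O(\sigma\eps)=O(\eps)$ for $p\in\{1/4,3/4\}$. Subtracting, $|\widehat q_{3/4}-\widehat q_{1/4}-2\sigma\Phi^{-1}(3/4)|=O(\eps)$, i.e.\ $|\widehat\sigma-\sigma|=O(\eps)$.

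The estimator only sorts the $n$ projected scalars and reads off two order statistics, so it runs in $O(n\log n)$ time; the sample size $O(1/\eps^2)$ used is in particular dominated by the $\frac{k^2\log d}{\eps^2}\polylog(1/\eps)$ budget of \Cref{thm:sparse-pca}. I do not anticipate a genuine obstacle here; the only points requiring care are quantitative: (i) that an $O(\eps)$ shift of the quantile levels keeps the quartiles in the region where the Gaussian density is $\Omega(1)$ — immediate since $1/4$ and $3/4$ are bounded away from $0$ and $1$ and $\eps$ is a sufficiently small constant — and (ii) that propagating the $O(\eps)$ error from $\sigma$ to $\sigma^2$ and then dividing by $\rho$ is legitimate, which holds because $\rho\in(0,1)$ enters only in the final rescaling and the target accuracy is itself of order $\eps/\rho$.
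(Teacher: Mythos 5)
Your proposal is correct, but it takes a genuinely different route from the paper. The paper disposes of the one-dimensional robust variance estimation step by simply citing prior work (\cite{DKKLMS18-soda}), applied to the scalar projections $w^\top x$; your argument instead constructs the estimator explicitly via the normalized interquartile range, controlling the empirical CDF with DKW and then inverting the Gaussian CDF on a compact set where its density is bounded below. Both routes are valid and yield the same $O(\eps)$ error for $\widehat\sigma^2$, and both then divide by $\rho$ to get $|y-(w^\top v)^2|=O(\eps/\rho)$. What your self-contained construction buys is transparency (no black box) and a clean sample-complexity argument; what the paper's citation buys is brevity.

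Two small remarks. First, the claim's sample bound of ``$O(1/\eps)$'' appears to be a typo: even without contamination, learning the variance of a one-dimensional Gaussian to additive error $\eps$ requires $\Theta(1/\eps^2)$ samples, so your $O(1/\eps^2)$ is the right order, and as you note it is dominated by the $\Theta(k^2\log(d)/\eps^2)$ budget of \Cref{thm:sparse-pca}. Second, both your argument and the paper's implicitly rely on $y$ being bounded away from $0$ so that $\sqrt{y}$ makes sense downstream in \Cref{alg:pca}; this holds because the warm start forces $(w^\top v)^2 = 1 - o(1)$ and $\eps/\rho = O(1/\log(1/\eps)) = o(1)$ under the assumption $\rho = \Omega(\eps\log(1/\eps))$, but it is worth stating explicitly.
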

\begin{proof}
    The estimator exists since it is known that robustly estimating the variance of a Gaussian in one dimension can be done with $O(\eps)$ error (see, e.g., \cite{DKKLMS18-soda} which is for high-dimensions but here we only need it for one dimension). Concretely, consider $(w^\top x)$ for $x \sim \cN(0,\vec I + \rho v v^\top)$. Then $(w^\top x) \sim \cN(0,1+ \rho(w^\top v)^2)$. By robustly estimating its variance, we can obtain $y'$ such that $|y' - (1 + \rho(w^\top v)^2) | = O(\eps(1+\rho))=O(\eps)$; here we used that $\rho=O(1)$ by assumption. Then $y:= (y' - 1)/\rho$ satisfies $|y - (w^\top v)^2 | = O(\eps/\rho)$, which is the guarantee mentioned in  line \ref{line:var_est} of \Cref{alg:pca}. 
\end{proof}

\begin{claim}\label{cl:applicability}
    There exists a computationally efficient estimator that uses $O((k^2\log(d)+ \polylog(1/\eps))/\eps^2)$ $\eps$-corrupted samples from $\cN(0,\vec I+\rho v v^\top)$ and achieves the guarantee in line \ref{line:cond_est} of \Cref{alg:pca}.
\end{claim}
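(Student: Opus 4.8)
The plan is to realize the interval-conditioning reduction sketched after \Cref{alg:pca}. First I would post-process the warm start of Line~\ref{line:warmstart} so that $w$ is a $k$-sparse unit vector: by \Cref{fact:fr-to-eucl} it satisfies $\|w'-v\|_2=O(\eps\sqrt{\log(1/\eps)}/\rho)$, and since $v$ is $k$-sparse, \Cref{fact:prunning} shows that replacing $w'$ by the renormalization of $t_k(w')$ preserves this bound up to constants; hence $\bar v:=v-(w^\top v)w$ is $2k$-sparse, $\|\bar v\|_2=O(\eps\sqrt{\log(1/\eps)}/\rho)$, and $w^\top v=\Theta(1)$ (cf.\ \eqref{eq:v_d_bound}). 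The estimator then draws $\alpha$ uniformly from $[1/2,1+\rho]$ (so $\alpha=\Theta(1)$), sets $I:=[\alpha-\ell,\alpha+\ell]$ with $\ell:=1/\log(1/\eps)$, forms $T':=\{\mathrm{Proj}_{w^\perp}(x):x\in T,\ w^\top x\in I\}$, runs \Cref{alg:main} on $T'$ with sparsity parameter $2k$ and corruption parameter $C\eps$ (for a large enough constant $C$) to obtain $\hat\mu$, and returns $z:=t_{2k}(\hat\mu)$. Correctness reduces to: (i) $T'$ is an $O(\eps)$-corrupted i.i.d.\ sample of size exceeding the requirement of \Cref{alg:main}; (ii) the clean component $G_I$ of $T'$ (the conditional distribution of \Cref{claim:conditional2}) is good w.r.t.\ $\mu_{G_\alpha}$; (iii) \Cref{alg:main} followed by the $2k$-truncation outputs the desired $z$.

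For (i) I would argue as in \cite{diakonikolas2023near}: since $I$ is drawn after the sample, $T'$ is i.i.d.\ from $(1-\eps')G_I+\eps'B_I$ with $\eps'=\eps\,\Pr_{X\sim B}[w^\top X\in I]/\Pr_{X\sim P}[w^\top X\in I]$, where $G=\cN(0,\vec I+\rho vv^\top)$. For $X\sim G$ we have $w^\top X\sim\cN(0,\tilde\sigma^2)$ with $\tilde\sigma^2=1+\rho(w^\top v)^2=\Theta(1)$, and $I\subseteq[0,O(1)]=[0,O(\tilde\sigma)]$, so the Gaussian density is $\Theta(1)$ throughout $I$ and $\Pr_{X\sim G}[w^\top X\in I]=\Theta(\ell)$ deterministically; in particular $\Pr_{X\sim P}[w^\top X\in I]=\Omega(\ell)$ always. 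On the other hand, for every fixed $x$, $\Pr_\alpha[w^\top x\in I]=O(\ell)$ since $\alpha$ ranges over an interval of length $\Omega(1)\gg\ell$, so $\E_\alpha[\Pr_{X\sim B}[w^\top X\in I]]=O(\ell)$, and by Markov's inequality $\Pr_{X\sim B}[w^\top X\in I]=O(\ell)$ with probability at least $0.99$ over $\alpha$; on that event $\eps'=O(\eps)$. Finally $|T'|=\Omega(\ell|T|)=\tilde\Omega(|T|)$ with high probability by a Chernoff bound, so $n=O((k^2\log d+\polylog(1/\eps))/\eps^2)$ input samples suffice to make $|T'|$ exceed the sample requirement of \Cref{alg:main} at sparsity $2k$ and constant failure probability.

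For (ii), \Cref{claim:conditional2} expresses $G_I=\int_I\cN(c\alpha'\bar v,\tilde{\vec\Sigma})\,d\nu(\alpha')$, a continuous mixture of Gaussians with common covariance $\tilde{\vec\Sigma}=\vec I+\tfrac{\rho}{1+\rho(w^\top v)^2}\bar v\bar v^\top$ and means $c\alpha'\bar v$, $c=\tfrac{\rho(w^\top v)}{1+\rho(w^\top v)^2}=\Theta(\rho)$, $\alpha'\in I$. Using $\|\bar v\|_2=O(\eps\sqrt{\log(1/\eps)}/\rho)$ and, crucially, the hypothesis $\rho=\Omega(\eps\log(1/\eps))$, I would verify that $\|\tilde{\vec\Sigma}-\vec I\|_\op=\tfrac{\rho\|\bar v\|_2^2}{1+\rho(w^\top v)^2}=O(\eps^2\log(1/\eps)/\rho)=O(\eps)$ (and, the perturbation being rank one, also $\|\tilde{\vec\Sigma}-\vec I\|_\fr=O(\eps)$), and that every component mean lies within $|c|\,|\alpha'-\alpha|\,\|\bar v\|_2\le|c|\ell\|\bar v\|_2=O(\eps)$ of $\mu_{G_\alpha}$. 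Feeding these bounds into Hanson–Wright (\Cref{fact:hw}) componentwise and then integrating over $\alpha'\in I$, exactly as in the proof of \Cref{lem:samples}, shows that $G_I$ satisfies \Cref{cond:mean,cond:covariance,cond:polynomials,cond:linear_tails} of \Cref{DefModGoodnessCond} w.r.t.\ $\mu_{G_\alpha}$ with right-hand sides $O(\eps)$ (with room to spare in the $\Theta(\log(1/\eps))$ thresholds). The finite-sample statement then follows from the same $\eta$-net argument as the proof of \Cref{lem:samples} — which uses only that the clean distribution is $O(1)$-sub-Gaussian with this moment control — giving that, with $|T'|\gg((2k)^2\log d+\log(1/\delta))\polylog(1/\eps)/\eps^2$ samples, the uniform distribution over the inliers of $T'$ is $(C\eps,\Theta(\eps/\log(1/\eps)),2k)$-good w.r.t.\ $\mu_{G_\alpha}$ with probability $\ge1-\delta$. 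Granting (i)–(ii), the correctness proof of \Cref{thm:main} (which invokes only goodness of the inliers and the $O(\eps)$ corruption) gives $\|\hat\mu-\mu_{G_\alpha}\|_{2,2k}=O(\eps)$; since $\mu_{G_\alpha}\propto\bar v$ is $2k$-sparse, \Cref{fact:prunning} yields $\|z-\mu_{G_\alpha}\|_2=\|t_{2k}(\hat\mu)-\mu_{G_\alpha}\|_2\le\sqrt6\,\|\hat\mu-\mu_{G_\alpha}\|_{2,2k}=O(\eps)$, which is the guarantee of Line~\ref{line:cond_est}. A union bound over the failure events of (i) and (ii), together with a constant choice of $\delta$, gives overall success probability at least $0.9$.

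I expect step (ii) to be the main obstacle: one must verify the \emph{full} list of deterministic conditions of \Cref{DefModGoodnessCond} — not just proximity of the mean and covariance, but also the sparse degree-$2$ polynomial tail and Hanson–Wright-type conditions — for the genuinely non-Gaussian mixture $G_I$, and then carry this through the finite-sample net argument. This is precisely where $\rho=\Omega(\eps\log(1/\eps))$ enters: it is the threshold at which the rank-one perturbation $\tfrac{\rho}{1+\rho(w^\top v)^2}\bar v\bar v^\top$ has operator \emph{and} Frobenius norm $O(\eps)$, which is exactly what Gaussian-level (rather than merely sub-Gaussian-level) error control demands; below that threshold the conditional covariance is too far from the identity and only the weaker $\eps\sqrt{\log(1/\eps)}$ error would be attainable.
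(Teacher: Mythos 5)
Your proposal mirrors the paper's own proof: the same interval-conditioning reduction (random $\alpha = \Theta(1)$, interval $I$ of width $1/\log(1/\eps)$, projection onto $w^\perp$), the same bound on the corruption rate of $T'$ via the randomness of $I$, the same verification that $G_I$ satisfies \Cref{DefModGoodnessCond} with respect to $\mu_{G_\alpha}$ using the small operator-norm perturbation and mean shift of the component Gaussians (invoking $\rho = \Omega(\eps\log(1/\eps))$ exactly where the paper does), and the same $t_{2k}$-truncation plus \Cref{fact:prunning} to convert the $(2,2k)$-norm guarantee into an $\ell_2$ one. The only cosmetic differences are your one-sided choice of sampling interval $[1/2,1+\rho]$ versus the paper's $[-(1+\rho),1+\rho]\setminus[-0.1,0.1]$, and that you conflate $\tilde{\vec\Sigma}-\vec I$ with $\vec\Sigma_{G_I}-\vec I$ (the mixture covariance picks up an extra rank-one term from the spread of component means), which you in effect already control via the $|c|\ell\|\bar v\|_2 = O(\eps)$ mean-shift bound, matching the paper's explicit $\rho^2\max(\alpha_1^2,\alpha_2^2)\|\bar v\|_2^2$ term.
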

\begin{proof}[Proof sketch]

    The estimator for line \ref{line:cond_est} of \Cref{alg:pca} is the following: Let $T$ be a set of $\eps$-corrupted samples from $\cN(0,\vec I+ \rho v v^\top)$ in the Huber contamination model.
    \begin{enumerate}
        \item Draw $\alpha$ uniformly from $[-(1+\rho),1+\rho] \setminus [-0.1,0.1]$.\footnote{We draw $\alpha$ from $[-(1+\rho),1+\rho] \setminus [-0.1,0.1]$ because we need $|\alpha|=\Omega(1)$ in \Cref{cl:closeness}.}
        \item Define the interval $I = [\alpha-\ell,\alpha+\ell]$ for $\ell = 1/\log(1/\eps)$.
        \item $T' = \{x \in T : w^\top x \in I \}$.
        \item $T'' = \{\mathrm{Proj}_{w^\perp}(x)  : x \in T' \}$ (Project samples to subspace orthogonal to $w$).
        \item Let $z$ be the output of \Cref{alg:main} run on $T''$.
        \item Output the vector obtained from $z$ after zeroing out all coordinates except the $2k$ ones with the largest absolute value.
    \end{enumerate}

    The correctness of   \Cref{alg:main} relies on the following two:  (i) the fraction of outliers in $T'$ continues to be $O(\eps)$ and (i.e., we are still in the Huber contamination model with approximately the same corruption level) (ii) the inliers satisfy the deterministic conditions from \Cref{DefModGoodnessCond}.

    We start with (i), where we sketch the proof. 
    We argue that the probability of an outlier $x$ having $w^\top x \in I$ divided by the probability that an inlier having $w^\top x \in I$ is at most $O(1)$. This would ensure that the fraction of outliers does not blow up by more than a constant factor. We start with upper bounding the probability for outliers. Since $I$ is chosen randomly and independently of anything else we can imagine that the outlier $x$ is fixed and then $I$ is chosen randomly.  Let us only examine the case where $w^\top x \in [-2(1+\rho),2(1+\rho)]$ (because otherwise  $w^\top x \not\in I$). The probability that $w^\top x \in I$ is then the ratio of the length of $I$ to the length of the interval $[-2(1+\rho),2(1+\rho)]$, which is $O(\ell/(1+\rho))$. We now argue about the inliers. For this, we can imagine that $I$ is fixed and the inlier $x$ is drawn randomly from the inlier distribution. We note that $w^\top x \sim \cN(0,\tilde{\sigma}^2)$ with $\tilde{\sigma}^2 := 1+\rho (w^\top v)^2$. Since $I \subseteq [-3 \tilde{\sigma}^2, 3\tilde{\sigma}^2]$ the Gaussian distribution behaves approximately uniformly there and thus the probability that $w^\top X\in I$ is $\Omega(\ell/\tilde{\sigma}^2)$, which is also $\Omega(\ell/(1+\rho))$.

    We next show (ii). The inliers follow the distribution $G_I$ from \Cref{claim:conditional}.
    We want to show that $G_I$ satisfies the deterministic conditions of \Cref{DefModGoodnessCond} with respect to $\mu_{G_\alpha}$, the mean of the distribution $X \sim \cN(0,\vec I+\rho v v^\top)$ conditioned on $w^\top X=\alpha$, where $\alpha$ is the center of the interval $I$.
    We sketch the argument for why each of the conditions holds.
    \Cref{cond:mean} and \Cref{cond:covariance} rely only on three properties: (i) $\| \mu_{G_I} - \mu_{\alpha} \|_2 = O(\eps)$,  (ii) $\| \vec I - \vec \Sigma_{G_I} \|_\op = O(\eps)$ and (iii) $O(1)$-subgaussianity. We can check that these hold. 
    For the first one,  we have that
    \begin{align*}
        \| \mu_{G_{\alpha'}} - \mu_{G_\alpha} \|_2 \lesssim \ell \frac{\rho  (w^\top v)}{1+\rho (w^\top v)^2} \|v - w  (w^\top v) \|_2 
        \lesssim \ell \rho (\eps\sqrt{\log(1/\eps)}/\rho) \lesssim \eps\;, \tag{using $\ell = 1/\log(1/\eps)$}
    \end{align*}
    where the last line uses  that $\alpha=O(1)$, $(w^\top v)^2 \leq 1$ and  $\|v - w  (w^\top v) \|_2 = O(\eps\sqrt{\log(1/\eps)}/\rho)$ by the guarantee of the estimator in line \ref{line:warmstart}.
    We move to the covariance property. Let $\alpha_1$ and $\alpha_2$ denote the bounds of the interval $I$, i.e., $I=[\alpha_1,\alpha_2]$. We have that $\E_{X \sim G_{\alpha'}}[XX^\top] = \vec I + \frac{\rho   }{1 + \rho (w^\top v)^2 } \bar{v} \bar{v}^\top + \frac{\rho^2(\alpha')^2}{(1+ \rho (w^\top v)^2) }\bar{v} \bar{v}^\top$ where $\bar{v} := v - (w^\top v) w$. Thus, as a convex combination, 
    $\E_{X \sim G_{I}}[XX^\top] = \vec I + \frac{\rho  }{1 + \rho (w^\top v)^2 } \bar{v} \bar{v}^\top + \frac{\xi \rho^2(\alpha_1)^2 + (1-\xi) \rho^2(\alpha_2)^2}{(1+ \rho (w^\top v)^2) }\bar{v} \bar{v}^\top$ for some $\xi \in [0,1]$.
    \begin{align*}
        \left\| \vec \Sigma_{G_I} - \vec I \right\|_{\op} &\leq 
        \left\|  \frac{\rho  }{1 + \rho (w^\top v)^2 } \bar{v} \bar{v}^\top + \frac{\xi \rho^2(\alpha_1)^2 + (1-\xi) \rho^2(\alpha_2)^2}{(1+ \rho (w^\top v)^2) }\bar{v} \bar{v}^\top \right\|_{\op} \\
        &\lesssim \rho \| \bar{v} \|_2^2 + \rho^2 \max(\alpha_1^2, \alpha_2^2) \| \bar{v} \|_2^2 \\
        &\lesssim \rho(\eps\sqrt{\log(1/\eps)}/\rho)^2 + \rho^2(\eps\sqrt{\log(1/\eps)}/\rho)^2\\
        &\lesssim \eps^2 \log(1/\eps)/\rho + (\eps \sqrt{\log(1/\eps)})^2 \lesssim \eps \;.  \tag{using $\rho \geq \eps \log(1/\eps)$}
    \end{align*}
    For the subgaussianity, we have that
    \begin{align*}
        \E_{X \sim G_I}[|v^\top(X - \mu_{G_I})|^p]^{1/p} &\leq \max_{\alpha' \in I}\E_{X \sim G_{\alpha'}}[|v^\top(X - \mu_{G_I})|^p]^{1/p}\\
         &\leq \max_{\alpha' \in I}\E_{X \sim G_{\alpha'}}[|v^\top(X - \mu_{G_{\alpha'}})|^p]^{1/p} + \| \mu_{G_{\alpha'}} - G_I \|_2 \\
        &\lesssim \sqrt{p} + \ell \frac{\rho \alpha (w^\top v)}{1+\rho (w^\top v)^2} \|v - w  (w^\top v) \|_2\\ 
         &\lesssim \sqrt{p} + \ell O(\eps\sqrt{\log(1/\eps)}) = \sqrt{p} +  O(1)\;.
    \end{align*}

    We now move to the remaining deterministic conditions about thresholded polynomials. Let us use the notation $p_{a}(x) = (x-\mu_\alpha)^\top \vec A (x-\mu_\alpha) - \tr(\vec A)$. Our goal is to show that $\E_{X \sim S}[ p_{\alpha}(x)   \1(p_{\alpha}(x) > 100 \log(1/\eps) )] \leq \eps$, where $S$ is a set of $n$ i.i.d. samples from $G_I$. As it can be seen by the proof \Cref{lem:samples}, the key element for proving that condition is the concentration in \eqref{eq:hw}. Thus it suffices to ensure that it holds for samples from $G_I$. Since $G_I$ is a mixture of the $G_{\alpha'}$ distributions, it suffices to show that the concentration holds for $G_{\alpha'}$ for all $\alpha' \in I$. Let $\Delta p(x) = p_{\alpha}(x) - p_{\alpha'}(x) = (\mu_{\alpha'} - \mu_{\alpha})^\top \vec A(\mu_{\alpha'} - \mu_{\alpha}) + 2 (x-\mu_{\alpha'})^\top \vec A (\mu_{\alpha'}-\mu_{\alpha}) $ be the difference of the two polynomials. By considering the cases where $|\Delta p(x)| \leq 50 \log(1/\eps))$ and $|\Delta p(x)| > 50 \log(1/\eps))$, we have that
    \begin{align}
        \Pr_{X \sim G_{\alpha'}} [| p_{\alpha}(X)  | \geq 100 \log(1/\eps)] 
        \leq \Pr_{X \sim G_{\alpha'}} [| p_{\alpha'}(X) | \geq 50 \log(1/\eps)]  +  \Pr_{X \sim G_{\alpha'}}[|\Delta p(X) | > 50\log(1/\eps)] \;. \label{eq:meanshift}
    \end{align}
    The first term is bounded using \Cref{cond:poly-conc} of \Cref{lem:samples} (although the lemma has been proved for Gaussians with identity covariance, it can be checked that it goes through for $X \sim G_{\alpha'}$ which is Gaussian with covariance $I + O(\eps)$). The second is bounded using Gaussian concentration ($\Delta p(X)$ is a linear polynomial).

    The previous discussion means that our mean estimator (\Cref{alg:main}) is applicable and satisfies the same guarantee as in \Cref{thm:main}, i.e., yields $z$ with $\|z-\mu_{G_I}\|_{2,k} = O(\eps)$ (which since $\|\mu_{G_I}-\mu_{\alpha}\|_2=O(\eps)$ from earlier also means that $\|z-\mu_{\alpha}\|_{2,k} = O(\eps)$). However the guarantee in line \ref{line:cond_est} of \Cref{alg:pca} that we are trying to prove uses $\ell_2$-norm instead of the $(2,k)$-norm. We explain below how one can get from the one norm bound to the other:
    We start by focusing on the ``warm start'' estimate $w$ in line \ref{line:warmstart} of \Cref{alg:pca}. First, $\|w-v\|_{2,k} \leq \|w-v\|_2 = \Theta(\|ww^\top - vv^\top\|_\fr)=O(\eps\sqrt{\log(1/\eps)}/\rho)$ (where first step is \Cref{fact:fr-to-eucl}, and the second step is because $ww^\top - vv^\top$ is rank-2 matrix). Then, since $v$ is $k$-sparse, we can assume without loss of generality by \Cref{fact:prunning} that $w$ is also $k$-sparse. Using \Cref{claim:conditional2}, this means that $\mu_{G_I}$ (the mean that we are trying to robustly estimate) is $2k$-sparse, because every $\mu_{G_\alpha}$ is a scaled version of $v-(w^\top v)w$ and both $v,w$ are $k$-sparse. The mean estimator satisfies  $\|z-\mu_{G_I}\|_{2,k} = O(\eps)$, and by \Cref{claim:conditional2}, we can keep the largest $2k$-coordinates in $z$ to obtain a $z'$ with $\|z'-\mu_{G_I}\|_{2} = O(\eps)$.

\end{proof}

Finally, regarding the last part of \Cref{thm:sparse-pca}, we have that
    \begin{align*}
       \frac{  \hat{v} \vec \Sigma \hat{v} }{\|\vec \Sigma \|_\op} 
       = \frac{1+ \rho (v^\top \hat{v})^2}{1+\rho} 
       \geq \frac{1 + \rho(1-O(\eps^2/\rho^2))}{1+\rho}
       \geq 1 - O\left( \frac{\eps^2}{\rho(1+\rho))} \right) \;.
    \end{align*}

\end{document}